\definecolor{matrixA}{RGB}{144,238,144}  
\definecolor{matrixW}{RGB}{255,160,122}  
\definecolor{matrixB}{RGB}{173,216,230}  
\definecolor{matrixV}{RGB}{221,160,221}  
\newcommand{\norm}[1]{\left \|#1 \right\|_{\sobolev} }
\newcommand{\sx}{ \mathrm{x}  }
\newcommand{\vx}{ \boldsymbol{\mathrm{x}}  }
\newcommand{\mx}{ \boldsymbol{\mathrm{X}}  }
\newcommand{\weight}{ \boldsymbol{\mathrm{w}} }
\newcommand{\tx}{ \boldsymbol{\mathcal{X}} }
\newcommand{\tT}{ \boldsymbol{\mathcal{T}} }
\newcommand{\domain}{\mathrm{D}}
\newcommand{\sobolev}{W^{k,p}(\domain)}
\newcommand{\sing}{\rho^{(\mathrm{x})}_{\mathrm{MAX}}}
\newcommand{\singw}{\rho^{(\weight)}_{\mathrm{MAX}}}
\newcommand{\singmin}{\rho^{(\weight)}_{\mathrm{MIN}}}
\newcommand{\clcost}{J(\weight(t),\psi(t),\tx(t))}
\newcommand{\clcostt}{J(\weight(t+1),\psi(t+1),\tx(t+1))}
\newcommand{\ea}{\left( \int_{\vx \in \tx(\tau)} 
       \ell(\fhat(\weight(t),\psi(t))( \vx ) ) \right)}
\newcommand{\partials}{\partial^1}
\title{The Effect of Architecture During Continual Learning}
\author{\name Allyson Hahn \email ahahn2813@gmail.com\\
      \addr Mathematics and Computer Science \\
      Argonne National Laboratory
      \AND
      \name Krishnan Raghavan\footnote{Corresponding Author} \email kraghavan@anl.gov \\
      \addr Mathematics and Computer Science \\
      Argonne National Laboratory }
\newtheorem{theorem}{Theorem}
\newtheorem{lemma}[theorem]{Lemma}
\newtheorem{proposition}[theorem]{Proposition}
\newtheorem{definition}[theorem]{Definition}
\newtheorem{remark}{Remark}
\newtheorem{proof}{Proof}
\newcommand{\innerprod}[1]{\left\langle #1 \right\rangle}
\newcommand{\abs}[1]{\lvert #1 \rvert}
\newcommand{\fhat}{\hat{f}}
\begin{document}
\maketitle
\begin{abstract}
Continual learning is a challenge for models with static architecture, as they fail to adapt to when data distributions evolve across tasks. We introduce a mathematical framework that jointly models architecture and weights in a Sobolev space, enabling a rigorous investigation into the role of neural network architecture in continual learning and its effect on the forgetting loss. We derive the necessary conditions for the continual learning solution and prove that learning only the model weights is insufficient to mitigate catastrophic forgetting under distribution shifts. Consequently, we prove that by learning the architecture and the weights simultaneously at each task, rather than the weights alone, we can reduce catastrophic forgetting. 

To learn weights and architecture simultaneously, we formulate the continual learning as a bilevel optimization problem: the upper level selects an optimal architecture for a given task, while the lower level computes optimal weights through dynamic programming over all tasks. To solve the upper-level problem, we introduce a derivative-free direct search algorithm to determine the optimal architecture. Once found, we must transfer knowledge from the current architecture to the optimal architecture. However, the optimal architecture will result in a weights parameter space different from the current architecture (i.e., dimensions of weights matrices will not match). To bridge the dimensionality gap, we develop a low-rank transfer mechanism to map knowledge across architectures of mismatched dimensions. Empirical studies across regression and classification problems, including feedforward, convolutional, and graph neural networks, demonstrate that our approach of learning the optimal architecture and weights simultaneously yields substantially improved performance~(up to two orders of magnitude), reduced forgetting, and enhanced robustness in the presence of noise compared with static architecture approaches. 
\end{abstract}
\section{Introduction}
With the advent of large-scale AI models, the computational expense of training such models has also increased drastically; training these models efficiently often requires large-scale high-performance computing infrastructure. Despite such drastic expenditure, these models quickly become stale. The reason  is that the data distribution shifts or new data  gets generated, 
requiring realignment. This necessity presents a quandary. Naive realignment leads to a phenomenon called catastrophic forgetting, where the model overwrites prior information. On the other hand, a complete retraining of the model would require significant resources, a solution that is extremely unattractive. 

A more viable option is  continual learning, where approaches seek to constrain the parameters of the model to reduce the amount of catastrophic forgetting the model experiences. Several works in the literature have attempted this direction, starting from early work on small multilayer perceptrons~\cite{mccloskey1989catastrophic} to recent works on large language models such as \cite{luo2025empirical, lai2025reinforcement, biderman2024lora} and \cite{lin2025continual}. One key commonality across all  these approaches is that they look to constrain the weight/parameters of the AI model to induce minimal forgetting. On the other hand, and more appropriately, some approaches seek a balance between forgetting and the learning of new data~(generalization); see, for example, ~\cite{raghavan2021formalizing, lu2025rethinking,lin2023theory}. Despite demonstrated success, this is not the full story. In fact,  under mild assumptions, it has been proven that simply changing the weight of the AI model is not sufficient to capture the drift in the data distribution, and the capacity of the neural network (its ability to represent tasks) eventually diverges if the data distribution keeps changing ~\cite{chakraborty2025understanding}. These intractability results from \cite{chakraborty2025understanding} highlight a conundrum:  if the capacity of the network eventually diverges, then should we still aim to continually train AI models?
In this paper we demonstrate that the answer to this question is indeed ``yes"; in fact, \textit{the solution is to reliably change the architecture of the AI model on the fly according to the needs of the data distribution.} 

However,  three key bottlenecks remain before attaining this goal. (i) Reliably changing the architecture requires a methodical understanding of ``how the weights and the architecture of the model interact with each other," that is, the coupling between the weights and the architecture. (ii) Moreover, to decide when to change the architecture, one must understand how the forgetting-generalization trade-off is affected by the aforementioned coupling. Loosely speaking, the first two bottlenecks have been heuristically addressed in the literature. For example, some recent works such as  CLEAS (Continual Learning with Efficient Architecture Search)~\cite{CLEASgao2022CLEAS} and SEAL (Searching Expandable Architectures for Incremental Learning)~\cite{SEALgambella2025SEAL} involve dynamically expanding the capacity of the AI model by adding layers and defining metrics that govern the necessity for a change in capacity. However, none of these approaches involves or enables a generic ability to change different aspects of the architecture, number of parameters, activation functions, layers, and so forth. The key reason is that if the architecture is generically changed, one must initialize the new network architecture at random. (iii) Then, the transfer of information about the weights from the previous architecture to the new architecture across parameter spaces of two different shapes is required. Doing so, however,  is currently impossible. In the absence of such a transfer mechanism, current approaches such as CLEAS and SEAL can  modify only those components that would not warrant information transfer between parameter space of two different sizes. 

To address these bottlenecks, we  present a comprehensive approach that includes a new formulation to understand the coupling between the architecture and the weights of the model, a methodical way of searching for a generic new architecture, and a novel method for transferring information between the old  and the new architecture.

\subsection{Contribution}
The key reason the coupling between the architecture and the weights is difficult to model is that architecture dependencies are observed on a function space across tasks and the weight dependencies are visible across the Euclidean parameter space. Any modeling in the weight space such as presented in \cite{chakraborty2025understanding, raghavan2021formalizing} or \cite{lu2025rethinking} cannot capture function space dependencies. To obviate this situation, we model the problem of continually training the AI model over a sequence of tasks in a Sobolev space. Using this theoretical framework, we describe how, in the Sobolev space of AI models parameterized by architecture choices and  weights, weights alone cannot capture the change in the distribution:  the architecture must be changed. We then employ a derivative-free architecture search  to determine the new architecture by searching for an appropriate number of neurons.  Although we focus on changing the number of parameters in the AI model, our work is general enough to extend to other architecture choices as well. Once the new architecture is chosen, we develop a new algorithm that can efficiently transfer information from the previous architecture to a new architecture across parameter spaces of different sizes with minimal loss of performance on the previous tasks.

We  empirically demonstrate that changing the architecture indeed results in better training loss compared with when the architecture is not changed. We also show that our algorithm achieves substantial improvements in terms of training loss when training over large numbers of tasks; it is robust to noise and scales from feedforward neural networks to graph neural networks seamlessly across regression and classification problems. We envision that jointly training the architecture and the weights in the continual learning paradigm is the best path forward in the field of continual learning, and we develop the basic theoretical and empirical infrastructure to allow for such training.

\subsection{Organization}
The paper is organized as follows. We begin with a description of continual learning (CL) and motivate the necessity for function space modeling. Next, we describe a collection of neural networks as functions in a Sobolev space. We then derive the necessary condition for the existence of a CL solution and demonstrate that weights alone are not enough to train a network efficiently. Thus, we describe the need to change the architecture of the network along with the weights of the AI model. We  present an algorithm to change the architecture on the fly and transfer information between the two architectures. We describe empirically the advantages of our approach and conclude with a brief summary.

\subsection{Notation}
The notation is adapted from \cite{kolda2009tensor}; please refer to the original paper for additional details. We use $\mathbb{N}$ to denote the set of natural numbers and $\mathbb{R}$ to denote the set of real numbers.  An $m^{th}$-order tensor is viewed as a multidimensional array contained in $\mathbb{R}^{I_1 \times I_2 \times I_3 \times I_4 \times \cdots I_{m}}$,  where the order can be thought of as the number of dimensions in the tensor. In this paper we will  be concerned mostly with tensors of order $0, 1, 2$ and $3$, which correspond to scalars, vectors, matrices, and a list of matrices. Therefore, we will write a tensor of order $0$ as a scalar with lowercase letters,  $\sx;$  a tensor of order one is denoted by lowercase bold letters such as $\vx.$ A tensor of order $2$ is 
denoted by uppercase bold letters $\mx$, and any tensor of order greater than $2$ is denoted by bolder Euler scripts letters such as $\tx.$  We also use $\|.\|$ to denote the Euclidean norm for vectors, Frobenius norm for matrices, and an appropriate tensor norm for tensors. Further, we will use $\innerprod{\cdot,\cdot}$ to denote the dot product for vectors, matrices, or tensors. We will make one exception in our notation regarding the tensors that represent learnable/user-defined parameters~(architecture, step-size/learning rate, etc.): we will denote these with Greek letters. The $i^{th}$ element of a vector $\vx$ is denoted by $\sx_{[i]},$ while the $(i,j)^{th}$ element of a matrix $\mx$ is denoted by $\sx_{[ij]}.$ Moreover, we denote the $i^{th}$ matrix in a tensor of order $3$ by $\mx_i.$  We will make the indices run from 1 to their capital letters such that $i = 1, \ldots, I.$

\section{Continual Learning -- Motivation}~\label{sec:motivation}
The problem of continual learning is that of an AI model learning a sequence of tasks. Here, we define a task as a set of data available to learn from. Without going into the specifics of the underlying space of an AI model, we will denote the AI model as $\hat{f}(\weight, \psi)$, where  $f$ denotes the composition of the weights$-\weight$ and the architecture$-\psi.$ We will define the particulars of these quantities in a later section; however, the key objective of this model is to learn a series of tasks. In this context, each task is represented by a data set obtained at a task instance, that is, $t \in \tT, \tT = \{0,1,\ldots, T\}.$ We will assume that the dataset~(a list of matrices, vectors, graphs) represented by $\tx(t)$ is provided at each task $t\in \tT,$ where $\tx(t)$ is sampled according to the distribution $\mathbb{P}$ and $\tx(t) \subset \domain \subset \mathbb{R}^n$ such that $\domain$  is a measurable set with a non-empty interior. Moreover, $(\domain, \mathcal{B}(\domain), \mathbb{P})$ forms a probability triplet with $\mathcal{B}(\domain)$ being the Borel sigma algebra over the domain $\domain$. 
\begin{figure}
    \centering
    \includegraphics[width=\linewidth]{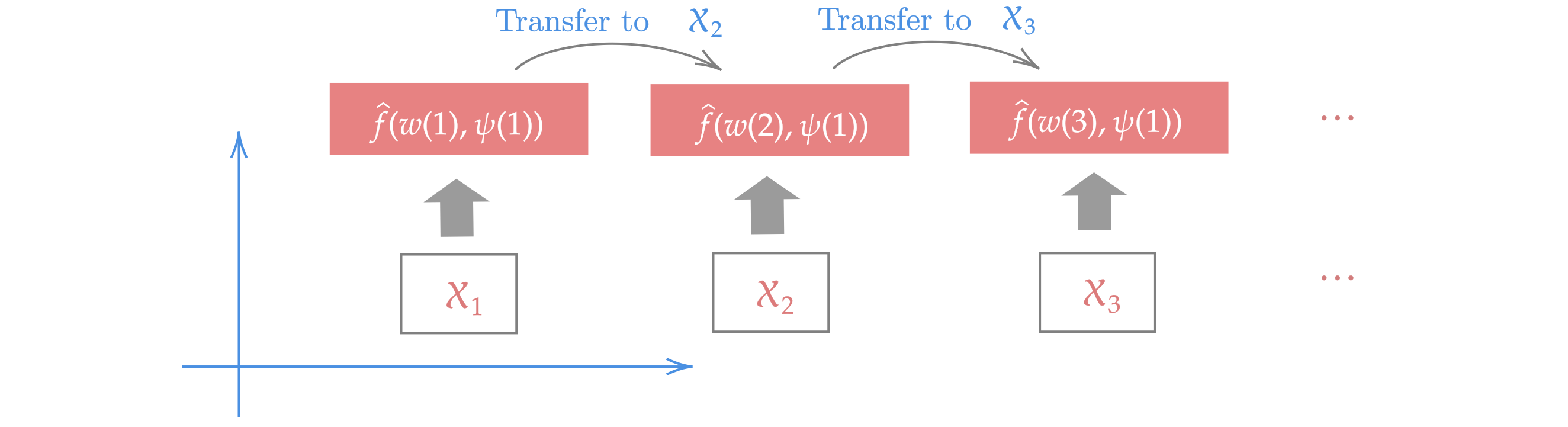}
    \caption{The basic problem of continual learning}
    \label{fig:CL1}
\end{figure}
The problem of interest to us is to understand the behavior of the AI model when it is learning tasks $\tx(t)$ for every task instance $t.$ Task instances can belong to $\mathbb{N}$ and $\mathbb{R}$ depending on the application at hand, but the key goal is to understand how the tasks are assimilated by the AI model. The basic notion of continual learning is described in Figure ~\ref{fig:CL1}, where there are three tasks at $t=1, 2$, and $3$. For each of these three tasks, an architecture $\psi(1)$ is chosen, and the three tasks are provided to the model in series. At $t=1$, the model $\hat{f}$ learns from the first task. In particular, we solve the optimization problem 
\begin{align}
min_{\weight(1)} \ell(\hat{f}(\weight(1), \psi(1)), \tx(1)),
\end{align}
where $\ell$ is some loss function that summarizes the model's performance on the task at $t=1.$ Then, at $t=2,$ the model transfers to learn information from the second task such that the first task is not forgotten. Implicitly, the optimization problem is rewritten as 
\begin{align}
    min_{\weight(2)} \left[ \ell(\hat{f}(\weight(2), \psi(1)), \tx(1)) + \ell(\hat{f}(\weight(2), \psi(1)), \tx(2)) \right].
\end{align}
Similarly, at $t=3,$ the model transfers to learn information from the third task, but both tasks 1 and 2 must not be forgotten. This implies that 
\begin{align}
    min_{\weight(3)} \left[ \ell(\hat{f}(\weight(3), \psi(1)), \tx(1)) + \ell(\hat{f}(\weight(3), \psi(1)), \tx(2)) + \ell(\hat{f}(\weight(3), \psi(1)), \tx(3))\right] .
\end{align}
As shown, the objective of the model is to remember all earlier tasks. This implies that the objective function of the continual learning problem is a sum that grows with every new task. Therefore, at any task $t,$ the objective of the CL problem is 
\begin{align}~\label{eq:CL_for}\tag{Forgetting Loss}
    min_{\weight(t)} \sum_{i=1}^t \left[\ell(\hat{f}(\weight(i), \psi), \tx(i)) \right], 
\end{align}
where we have removed the index from $\psi$ to indicate that the architecture is fixed. In the CL literature, \eqref{eq:CL_for} is typically optimized at the onset of every new task at $t.$ The most rudimentary approach for this optimization  involves an experience replay array that maintains a memory of all tasks in the past. To gain more insight into what happens at the onset of each new task, see Figure \ref{fig:param}.

\begin{figure}[!htb]
    \centering
    \includegraphics[width=0.6\linewidth]{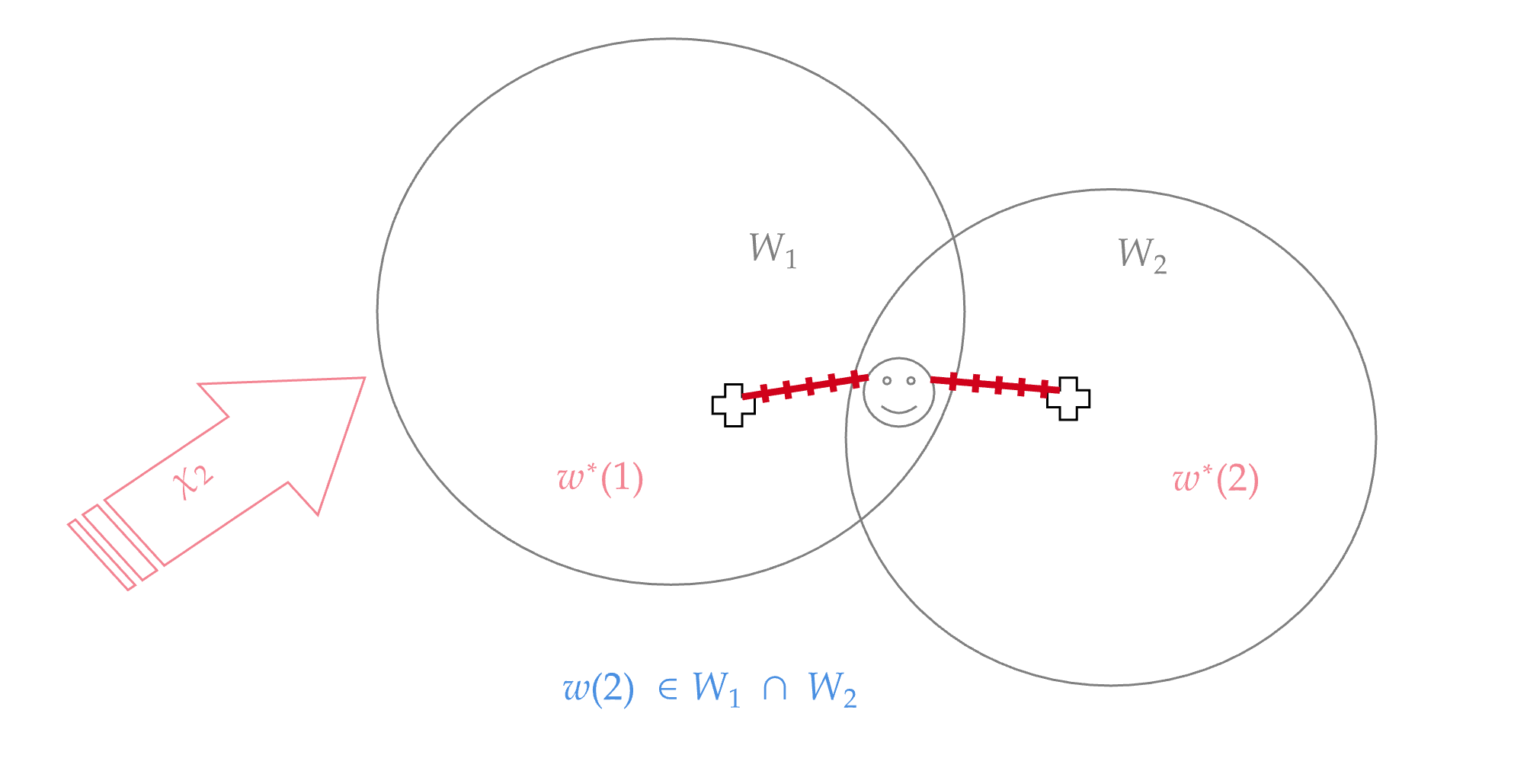}
    \caption{The basic problem of continual learning}
    \label{fig:param}
\end{figure}
For the sake of illustration, we assume that $\ell(\hat{f}(\weight(i), \psi, \tx(i))$ is a strongly convex function  described in Figure ~\ref{fig:param}, the level set~(the balls) in the parameter space corresponding to $\ell(\hat{f}(\weight(i), \psi, \tx(i)) \leq \eta$  and the plus sign corresponding to the minima for $i=1$ and $i=2.$   The threshold to identify acceptable loss values is $\eta$; it  defines the boundary of the level set and, by extension, the radius of the ball in Figure~\ref{fig:param}.  For instance, in the  MNIST dataset, the two balls may represent two tasks  representing digit 0 and digit 1. The boundary of the left ball represents all parameters that provide cross-entropy loss less than $\eta=1$ for digit 0, with the plus sign representing zero cross-entropy loss. Similarly, the right ball represents all parameters that provide cross-entropy loss less than $\eta=1$ for digit 1, and the plus sign corresponds to zero cross-entropy loss on digit 1. Within this example, consider an AI model~(maybe a convolutional neural network)  that trains on digit 0 and achieves zero loss on digit 0, that is, attains the plus sign in the parameter space. Then, the AI model starts training for the next task---digit 1. First, it starts from the plus sign achieved for digit 0; this implies that there is an inherent bias due to the local minima achieved for digit 0. Second, when the new task is observed, the model must converge to a solution that is common to both digit 0 and digit 1. This solution is trivial if  digits 0 and 1 are identical or very close to each other. Since this is not the case, however, the solution must lie in the intersection between the two balls, that is, the region of the smiley face. Since the size of the intersection space depends on how different tasks 0 and 1 are, the following informally stated theorem is observed.

\begin{theorem} 
\label{thm:task_nonstationary_weights}
 Fix the number of weight updates required to obtain the optimal value at each task. Assume that each subsequent task introduces a value of change into the model with the change described in some appropriate metric. Let $\ell$ be the Lipschitz continuous function, and choose a small learning rate. Then, capacity diverges as a function of the number of tasks.
 \textbf{Please see \cite{chakraborty2025understanding} for a full statement and its proof.}
\end{theorem}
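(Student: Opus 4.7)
The plan is to interpret ``capacity'' as a complexity measure on the weights needed to jointly explain all tasks seen so far, namely $C(t) = \inf\{\|\weight\| : \sum_{i=1}^t \ell(\hat{f}(\weight,\psi),\tx(i)) \le t\eta\}$, and to show $C(t) \to \infty$ as $t$ grows. The three hypotheses translate cleanly into geometric ingredients: Lipschitz continuity of $\ell$ gives uniform control over per-task level sets $B_i = \{\weight : \ell(\hat{f}(\weight,\psi),\tx(i))\le \eta\}$ (both an upper bound on their radii near each task's minimizer $\weight^\star(i)$ and a lower bound of order $\eta/L$ where $L$ is the Lipschitz constant); the fixed update budget together with a small learning rate caps the per-task weight displacement by $\Delta := K\alpha L$; and the non-stationarity assumption states that the minimizers $\weight^\star(i)$ drift by at least $\delta>0$ between consecutive tasks in the chosen metric. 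The picture in Figure~\ref{fig:param} is then the guiding intuition: the level-set balls drift by $\delta$ while the admissible search step per task is only $\Delta$.

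The body of the argument is an induction on $t$. First I would show that after $t$ tasks the minimizers $\weight^\star(1),\dots,\weight^\star(t)$ span a region in weight space of diameter at least $\Omega(\delta t)$, using the triangle inequality on the drift assumption and choosing coordinates in which the drift is centered. Second, I would argue that any $\weight$ with accumulated loss below $t\eta$ must, on average, be close to each $\weight^\star(i)$ in the sense that the Lipschitz constant bounds $\ell(\hat{f}(\weight,\psi),\tx(i))$ from below by the distance to $\weight^\star(i)$ modulo the curvature at the minimizer. Combining these two facts, any feasible $\weight$ must sit near the centroid of a $\Omega(\delta t)$-wide trajectory; by the pigeonhole-style argument that the centroid cannot be uniformly close to every point in the trajectory unless its norm grows, I would conclude $\|\weight\| = \Omega(t)$, giving $C(t) \to \infty$. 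The assumption of a small learning rate is used twice: once to guarantee that the bound $\Delta$ on per-task displacement is actually tight (so the model cannot chase the drift within its update budget), and once to rule out pathological gradient-step oscillation that would invalidate the centroid interpretation.

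The hard part will be going from ``level sets drift'' to ``the common $\weight$ has diverging norm,'' since in principle the minimizers $\weight^\star(i)$ could oscillate in a bounded region and admit a bounded common feasible point. This is where the ``appropriate metric'' in the hypothesis must do real work: it has to encode that the drift has a non-vanishing projection in some fixed direction, or equivalently that the time-averaged displacement is $\Omega(\delta)$ rather than cancelling. A secondary subtlety is that Lipschitz continuity alone does not force $\bigcap_{i\le t} B_i$ to be empty, so the conclusion must be phrased as divergence of a continuous capacity measure (the weight norm) rather than as infeasibility, which is precisely why the theorem says ``capacity diverges'' instead of ``no solution exists.'' I expect the argument in \cite{chakraborty2025understanding} to handle these points via a precise non-stationarity condition together with a uniform curvature bound on $\ell$ around each minimizer.
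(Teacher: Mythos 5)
This theorem is stated only informally in the paper, and the paper supplies no proof of it: the statement explicitly defers both the precise formulation and the proof to the cited reference \cite{chakraborty2025understanding}. There is therefore no in-paper argument to compare your proposal against; the theorem functions here as imported motivation for Figure~\ref{fig:param}, not as a result the authors establish.

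Judged on its own terms, your sketch has a genuine gap at its final step, and it is one you half-acknowledge but do not close. Your two main ingredients are (i) the minimizers $\weight^\star(1),\dots,\weight^\star(t)$ span a region of diameter $\Omega(\delta t)$, and (ii) any weight with small accumulated loss must be close to each $\weight^\star(i)$. But the hypothesis only controls the \emph{pairwise} change between consecutive tasks, so (i) does not follow: the minimizers can oscillate within a ball of radius $O(\delta)$ forever, and your proposed fix (``the drift has a non-vanishing projection in some fixed direction'') is an additional assumption you would have to import, not something the stated hypotheses give you. More seriously, even granting (i), the pigeonhole step proves the wrong thing: if the trajectory has diameter $\Omega(\delta t)$, then no single weight can be close to all the $\weight^\star(i)$, which shows that the \emph{accumulated loss} of any fixed weight diverges (equivalently, that the feasible set $\{\weight : \sum_i \ell \le t\eta\}$ eventually empties), not that the infimal \emph{norm} over that set grows. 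You explicitly say you want to avoid concluding infeasibility and instead conclude norm divergence, but your argument, if it works at all, delivers infeasibility. To get divergence of a capacity measure one needs the extra structural link the cited work presumably supplies --- e.g.\ that representing $t$ mutually $\delta$-separated tasks to accuracy $\eta$ forces a lower bound on parameter count or norm via an approximation-theoretic or covering-number argument --- and that link is absent from your sketch. The roles you assign to the fixed update budget and the small learning rate are also never actually used in the chain of implications; they appear only in the setup.
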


The theorem implies that the AI model's ability to successfully represent tasks will deteriorate as more tasks are introduced when the new tasks are different from the previous one. In such a case, continually training the AI model is bound to fail if we just update the weights of the network. In this paper we hypothesize that the size of the intersection space in Figure.~\ref{fig:param} can be increased by modifying the architecture of the AI model, that is, by increasing the number of parameters in the model. Doing so, however,  would require us to model the coupling between the weights, data, and the architecture, a task impossible to imagine with the intuition laid out until now since the notion of architecture cannot be modeled with this intuition. To this end, we will describe a neural network as a member of a class of functions in a function space that we choose to be a Sobolev space.

\section{Neural Networks --  Members of a Class of Sobolev Space Functions}
The goal of this section is to formalize the mathematical modeling required to identify the dependency between the intersection space from Figure~\ref{fig:param}, the architecture and the AI model. We first define the AI model $\hat{f}(\weight, \psi)$ to belong to a class of functions contained in a Sobolev space with $k$-bounded weak derivatives. The notion of $k$-bounded weak derivatives is the key reason that Sobolev space is an appropriate choice for this problem, which we will discuss once we outline the formal definitions of the Sobolev space aligning with definitions provided in \cite{mahanNonclosednessSetsNeural2021a}.

\begin{definition}\label{defn:sobo}
    Let $k \in \mathbb{N},$ the domain-$\domain \subset \mathbb{R}^n$ a measurable set with non-empty interior, and $1< p < \infty.$ Then the Sobolev space $\sobolev$ consists of all functions $f$ on $\domain$ such that for all multi-indices $\alpha$ with $|\alpha| \leq k,$ the mixed partial derivative $\partial^{(\alpha)} f$ exists in the weak sense and belongs to $L^p(\domain)$. That is, 
    \[ \sobolev  = \{ f \in L^p(\domain) : \partial^{|\alpha|} h \in L^{p}(\domain) \forall |\alpha| \leq k \}.\]
    The number $k$ is the order of the Sobolev space, and the Sobolev space norm is defined as 
    \[ \|f\|_{\sobolev} := \sum_{ |\alpha| \leq k } \| \partial^{|\alpha|} f \|_{L^{p}(\domain)}.\]
\end{definition}
In Definition \ref{defn:sobo}, $\domain$ is our data space. For any standard AI application, we can assume that $\domain$ is a finite subset of $\mathbb{R}^n.$ This is an important practical consideration as, for AI model training to be efficient, one needs the numerical values of the data to be bounded, typically achieved through normalization~\cite{huang2023normalization}. As $\domain$ is a finite subset of $\mathbb{R}^n,$ it is a closed subset. Thus, by Heine--Borel, $\domain$ is also compact, see ~\cite{rudin1976principles}.

The next assumption we note is the presence of weak derivatives. In a standard neural network, derivatives are assumed to exist since twice differentiability is necessary for training and typical training involves gradient-based methods~\cite{kingmaAdamMethodStochastic2017}. In fact, the existence of weak derivatives can be summarized for different activation functions as in Table~\ref{tab:Sobolev_acti}. As noted in the table, all standard activation functions can be recast in the purview of Definition~\ref{defn:sobo}, and a Sobolev space can be constructed that represents the class of functions that can be approximated by neural networks. The appropriateness of approximation and the conditions over which such approximations are possible are discussed in \cite{petersenTopologicalPropertiesSet2021a}  and \cite{adegokeHigherDerivativesInverse2016}.
\begin{table}[!tb]
\adjustbox{width=\linewidth}{
\begin{tabular}{cccc}
\toprule
       Name  &  $\rho(\vx)$ &  Smoothness & Corresponding $W^{k,p}$\\
        &   & /Boundedness & \\
       \hline  \hline
Rectified Linear  & $sup\{0,\vx\}$ & absolutely  & $W^{0,p}(\domain)$ \\
 Unit (ReLU) &  &  continuous, $\rho' \in L^{p}(\domain)$ & 
\\ & &  &\\ 
 Exponential Linear  & $ \vx. \chi_{\vx\geq 0}$ & $C^1(\mathbb{R})$, absolutely  & $W^{1,p}(\domain)$ \\
 Unit (eLU) & $+ (e^{\vx}-1) \chi_{\vx<0} $  & continuous $\rho'' \in L^{p}(\domain)$ & 
\\ & & &  \\ 
Softsign & $\frac{\vx}{1+ |\vx|}$ & $C^1(\mathbb{R})$, $\rho'$ absolutely  & $W^{2,p}$  \\
& &  continuous, $\rho'' \in L^{p}(\domain)$ &  
\\ & & &  \\ 
Inverse Square  & $ \vx \chi_{\vx \geq 0}  $ & $C^2(\mathbb{R})$, $\rho''$ absolutely  & $W^{3,p}$  \\ 
Root Linear Unit & $ + \frac{\vx}{\sqrt{1+a\vx^2}} \chi_{\vx<0} $ & continuous, $\rho''' \in L^{p}(\domain)$ & 
\\ & & &  \\ 
Inverse Square Root Unit & $ \frac{\vx}{\sqrt{1+a\vx^2}} $ & real analytic,  & $W^{k,p} \forall k $ \\
&  & real, all derivatives bounded &  
\\ & & &  \\ 
Sigmoid & $ \frac{1}{1+ e^{\vx} } $ & real analytic,  & $W^{k,p} \forall k $ \\
&  & real, all derivatives bounded &  
\\ & & &  \\ 
tanh & $ \frac{e^{\vx} - e^{-\vx}}{e^{\vx} + e^{-\vx}} $ & real, analytic,  & $W^{k,p} \forall k $ \\
&  & real, all derivatives bounded &  \\
\bottomrule
\end{tabular}
}
\caption{Different activation functions and the corresponding Sobolev spaces. Many of these results are described in 
\cite{petersenTopologicalPropertiesSet2021a}  and \cite{adegokeHigherDerivativesInverse2016} . $\chi$ is an indicator function.}
\label{tab:Sobolev_acti}
\end{table}

To represent the class of functions described in Definition \ref{defn:sobo}, we  define a neural network as a function  $\fhat(\weight(t),  \psi(t) )$ with $d \in \mathbb{N}$ layers. In this notation, $\weight(t)$ is a long $\mathbb{R}^{m}$ vector that concatenates the weight parameters from the $d$ layers, and $\psi(t)$  comprises  the architecture and other user-defined parameters in the neural network.
\begin{definition} \label{defn:NN}
A $d$-layered neural network is given by an operator~(essentially a function) 
$\fhat(\weight(t), \psi(t) )\in \sobolev$ with $\sobolev$ being a Sobolev space. Furthermore, 
\begin{align}
  \fhat(\weight(t), \psi(t)) \big( . \big) & = \fhat_{d}(\mathrm{w}_{d}(t), \psi_{d}(t))\circ \fhat_{d-1}(\mathrm{w}_{d-1}(t), \psi_{d-1}(t))\circ \cdots \circ \fhat_{2}(\mathrm{w}_{2}(t), \psi_{2}(t))\circ \fhat_{1}(\mathrm{w}_{1}(t), \psi_{1}(t)) \big( . \big)
\end{align}
describes the layer-wise compositions, and $\big( . \big)$ represents the input tensor to which the operator is applied.
\end{definition}
\begin{remark}
    We indicate approximate quantities with $\hat{.}$ and true~(target) quantities without the hat. For instance, the target function $f$ in Definition~\ref{defn:sobo} is indicated without the hat, and the approximate function $\fhat$ in definition~\ref{defn:NN} is indicated with a hat.
\end{remark}
\begin{remark}
    In Definition~\ref{defn:NN} we describe the operation across different layers using the composition; in terms of notation, we hide this complexity through the definition of the operator. Therefore, Definition~\ref{defn:NN} is generic and  can be used to define feedforward, recurrent, convolutional, and graph  neural networks and even a large language model. Therefore, any analysis from this point is applicable to all types of neural networks. 
\end{remark}
Typically, the user-defined parameters are a combination of integer, categorical, and floating-point values; and we assume that the architecture can be varied with task instances. Therefore, the parameters corresponding to the architecture $\psi$ are assumed to be a function of the task instance $t.$
Since we cannot determine derivatives with respect to discrete parameters in the classical sense (i.e., derivative of $\psi(t)$), we need to define weak derivative with respect to these quantities such that the derivatives~(with respect to or of $\psi$) can be approximated. To describe the learning problem, we define a notion of loss function on the space of functions defined in Definition~\ref{defn:NN} and \ref{defn:sobo}. 
 \begin{definition}\label{defn:NN_learning}
        Let $f(t)\in \sobolev$ for all $t\in \{0,\ldots, T\}$ denote the \textit{target function} that is to be approximated, and let $\hat{f}(\weight(t),\psi(t))$ denote a neural network determined to approximate $f(t)$ at task $t.$ Then the \textit{loss function} (or error of approximation) for $\vx \in \tx(t) \subseteq \domain$ is given by 
        \begin{align*}
            \ell(\fhat(\weight(t),\psi(t)), \vx) & = \| \fhat(\weight(t),\psi(t))(\vx)-f(t)(\vx)\|_{\sobolev}.
        \end{align*} 
    \end{definition}
The function $f$ in the context of machine learning is an ideal forecasting model that can forecast temperature perfectly or an ideal classification model that performs some classification problem.
\begin{remark}
    In applications, we  assume $\domain$ is compact to guarantee that the target function $f(t)$ exists in a Sobolev space $\sobolev,$ by Corollary 3.5 in \cite{mahanNonclosednessSetsNeural2021a}. If, however,  compactness cannot be assumed, we find the minimum norm solution to the problem in the Sobolev space.
\end{remark}
The loss function in Definition~\ref{defn:NN_learning} is defined over $\sobolev$ using the Sobolev space norm. In practice, however, we can directly utilize samples corresponding to $f$ or $\hat{f}$ to construct the loss function. For instance, this has been done in \cite{son2021sobolev} in the context of  physics-informed neural networks---a class of neural networks  modeled by Definition~\ref{defn:NN}. In particular, the Sobolev space norm is defined as in Definition~\ref{defn:sobo}, where the loss function is a sum over $L^p$ norm over all $k$ weak derivatives. As a trivial case, choosing $k=0$ and $p=2$ gives the standard mean squared error loss function that,  according to  Table~\ref{tab:Sobolev_acti} and practical consideration, is applicable to various activation functions used in practical AI.

\subsection*{Continual Learning in $\sobolev$}
We  extend Definition~\ref{defn:NN_learning} to the case of CL  with the goal of learning a series of tasks. In particular, for each task instance $t$ we must find a $\weight^*$  with respect to  $\ell$ integrated over the interval $[0,t].$   This objective function denoted $J$ is known as the forgetting loss and was informally defined as 
\begin{align*}
    \clcost & = \int_0^t \ea \;  d~\tau .
\end{align*}
A traditional learning structure as in \cite{CASpasunuru2019continual} seeks to minimize the cost $J(t)$ through an  optimizer~(one such as Adam~\cite{kingmaAdamMethodStochastic2017}) to drive $J(t) \rightarrow 0$  when $t \rightarrow \infty$,  a structure mathematically guaranteed to converge, as shown in \cite{kingmaAdamMethodStochastic2017}. In practice,  however, when consecutive tasks are very different from each other,  we encounter the issue described in Figure~\ref{fig:param}.  In order to resolve this quandary, for each new task, the underlying optimization problem must be repeatedly resolved, leading to a  series of optimization problems, altogether forming a dynamic program~\cite{bertsekas2012dynamic}. A standard practice in the optimal control literature is to solve such dynamic programs~\cite{bertsekas2012dynamic} through a cumulative optimization problem over the interval $[0,T]$ as in 
\begin{align*}
    V^*(t,\weight(t)) & = \min_{\weight \in \mathcal{W}(\psi^*(t))} V(t, \weight(t) ),
\end{align*}
where $\mathcal{W}(\psi^*(t))$ is the weights search space and $V(t,\weight(t))$ is 
\begin{align*}
    V(t,\weight(t)) & = \int_t^T J(\weight(\tau),\psi^*(t), \tx(\tau)).
\end{align*}
An alternative learning process that solves this dynamic program was introduced in   \cite{raghavan2021formalizing, chakraborty2025understanding} that uses a fixed architecture~$\psi^*(t)$.  However, as argued in Section~\ref{sec:motivation}, fixed architecture may not be enough in the CL paradigm. Therefore, we are interested in a scenario where the architecture and the model parameters are both learned. To achieve this, we must complete a bi-level optimization with the bottom level being a dynamic program. The upper problem is outlined as follows:
\begin{align}\label{eq:arch} \tag{Architecture search}
    \psi^*(t) & = \mathrm{arg }\min_{\psi\in \Psi} J(\weight(t),\psi,\tx(t)),
\end{align}
where $\Psi$ is our architecture search space. Once the search is completed, we move to the lower problem to find the weights as follows:
\begin{align}\label{eq:clcum} \tag{Cumulative CL}
     V^*(t,\weight(t)) & = \min_{\weight \in \mathcal{W}(\psi^*(t))} V(t,\weight(t)).
\end{align}
As discussed earlier, naively solving this problem will involve repeatedly retraining the model from scratch for each new task and each new architecture. This is both computationally and mathematically unattractive. 

\section{Understanding the Existence of the CL Solution}
In this paper we will devise an alternative approach to solving the bi-level problem.
First, however,  we must understand the dependence between the architecture, weights, and the \ref{eq:CL_for}. 
To understand this behavior, let us reconsider the example from 
Section~\ref{sec:motivation} and redraw Figure ~\ref{fig:challenge}. 

\begin{figure}[!thb]
    \centering
    \includegraphics[width=0.8\linewidth]{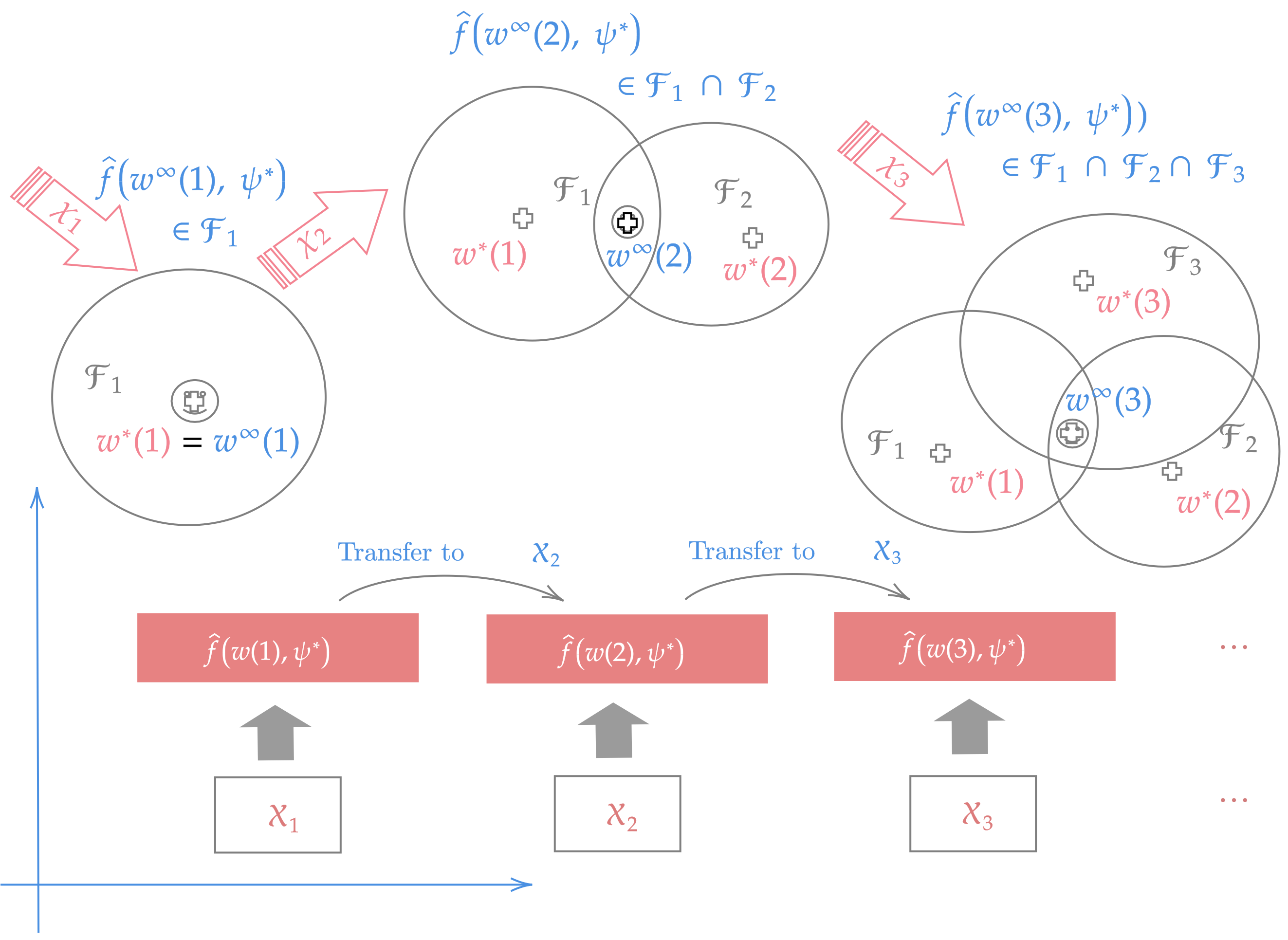}
    \caption{The actual problem}
    \label{fig:challenge}
\end{figure}

From Figure  \ref{fig:challenge}, let $\mathcal{F}_t = \{\fhat(\weight,\psi)\in \sobolev\}$ 
represent the search space for the set of  all possible neural network solutions for learning task 
$t$~(refer to  \ref{sec:motivation} for a similar construction). 
Since $\mathcal{F}_t$ is a subset of $\sobolev,$ the balls in Figure  \ref{fig:challenge} 
can be thought  of as a level set such that the boundary of the 
ball represents the set of all solutions that provide the 
loss value less than a threshold~(a threshold set by the user). 
At $t=1$ there is one ball, at task $t=2$ there are two balls, 
and at task $t=3$ there are three balls. 
The number of balls here corresponds to the number of tasks  
involved in CL. 
 
In each of the balls, \ding{58} represents the neural network solution for 
just task $t$, and the $\smiley{}$ face 
represents the solution that performs well among tasks 
between $0 \rightarrow t.$  Notably,  $\smiley{}$ must lie in 
the intersection of all the balls from $0$ to $t$. 
Note that unless the tasks are equivalent,  $\smiley{}$ 
and the \ding{58} will not coincide because no solution 
that is perfect for one task can be perfect for another. 
Furthermore, the more different the subsequent tasks are, the more likely
the intersection space is  to be empty, and the CL problem has no solution 
across all tasks from $0$ to $t$. From a different perspective, the non-emptiness 
of the intersection space implies 
that the union of all the balls is a connected set from a 
topological perspective. A continuous 
function such as $J$, defined on this 
connected set, will have a gradient that is 
defined completely over the union of these balls. Therefore, 
if the intersection space is empty, the connectivity of the union
of these balls is violated, and the gradient of $J$ is undefined, which will lead to 
a solution that cannot be found. The best one can do in this circumstance is a 
minimum norm solution as is done in the current literature. Mathematically, this intuition is described by a notion of continuity of $J$ on consecutive tasks, and a notion absolute continuity of $J$ over the interval  $[0,T]$~(the union of all balls in a complete interval).

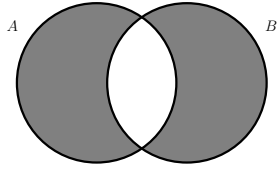
\begin{figure}
    \centering
    \scalebox{.4}{\begin{tikzpicture}
        \def\firstcircle{(-.5,0) circle (2.65cm)}
        \def\secondcircle{(2.5,0) circle (2.65cm)}
        
        \fill[even odd rule, gray]\firstcircle \secondcircle;
        
        \draw[draw=black,line width=.75mm] \firstcircle;
        \draw[draw = black, line width=.75mm] \secondcircle;
        \node at (-3.3,1.9) {\Large$A$};
        \node at (5.3,1.9) {\Large$B$};
    \end{tikzpicture}}
    \caption{Set symmetric difference $A\bigtriangleup B$ represented by the gray-shaded region }
    \label{fig:setsim}
\end{figure}

We begin with describing the notion of continuity we will use here. We define measure $\mu$ over the task probability space ~$\mathbb{P}$ and describe an $\varepsilon, \delta$ definition of 
continuity of a function w.r.t this measure defined over the set symmetric difference $\mu(A\bigtriangleup B)$.
\begin{definition} \label{defintersect}
     Let $\overline{\tx} = \bigcup_{t=0}^\infty \tx(t)$ with the power set $\mathcal{P(\overline{\tx})}$ as its topology. Set $\mathcal{B(\overline{\tx})}$ to be the Borel sigma algebra on $\overline{\tx}$ equipped with a probability measure denoted $\mu.$ Then $(\overline{\tx},\mathcal{B}(\overline{\tx}), \mu)$ forms a probability measure space. Let $g: \overline{\tx}\rightarrow \mathbb{R}$ be a measurable function, and set $F:\mathcal{B}(\overline{\tx})\rightarrow \mathbb{R}$ to be the function
     \begin{align*}
         F(A) & = \int_{\vx \in A} g(\vx) d\mu.
     \end{align*}
     Then we say that $F$ is \textit{continuous with respect to the measure} if for every $\varepsilon>0$ there exists $\delta >0$ such that $\abs{F(A)-F(B)} < \varepsilon$
     whenever $A,B\in \mathcal{B(\overline{\tx})}$ such that $\mu(A\bigtriangleup B)<\delta.$
 \end{definition}
 
Definition \ref{defintersect} can be deduced from basic measure 
theory results found in \cite{weaver2013measure} implying that 
sets that are similar~(with respect to the probability measure $\mu$) 
result in close values of $F(A)$, integrated over the respective sets. 
In particular, $\mu(A\bigtriangleup B)$ is the Lebesgue measure $\mu$
applied over $A\bigtriangleup B$ representing a set 
symmetric difference, which describes the elements 
two sets do not share~(see Figure  \ref{fig:setsim} 
for an illustration). Therefore, as the intersection space (i.e., $\mu(A\cap B)$) 
shrinks, $\mu(A\bigtriangleup B)$ gets larger, and $\abs{F(A)-F(B)}$ gets larger proportionally, thus providing a measure of continuity 
that is proportional to the number of common elements between the two sets. More common elements imply a smaller value of the measure, and less common elements 
imply a large value of the measure.

\subsection{Existence}
To begin, we prove that the expected value function, that is, $\int_{\vx\in A}
       \ell(\fhat(\weight,\psi
       )( \vx ) )d\mu$, is continuous with respect to measure $\mu$ when $\mu(A\triangle B)<\delta, \forall A, B \in \overline{\tx}.$ 
\begin{lemma}\label{lem:contMeasure}
    Let $(\overline{\tx},\mathcal{B}(\overline{\tx}), \mu)$ be the measure space as 
    defined in Definition~\ref{defintersect}. 
    Assume that the loss function $\ell$ is continuous 
    and bounded across all tasks, that is, $\forall t \in [0,T].$ 
    Then the expected value function 
    \begin{align}
        E(A) & = \int_{\vx\in A}
       \ell(\fhat(\weight,\psi
       )( \vx ) )d\mu.\\
    \end{align}
    is continuous with respect to measure $\mu$.
\end{lemma}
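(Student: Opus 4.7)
The plan is to exploit the fact that the loss $\ell$ is assumed continuous and bounded, so that the integral over a measurable set can only change by at most a constant multiple of the measure of the region of disagreement. This reduces the claim to a clean measure-theoretic estimate, where the key tool is the decomposition of the symmetric difference.

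First, I would note that since $\ell$ is bounded across all tasks, there is a constant $M > 0$ with $|\ell(\fhat(\weight,\psi)(\vx))| \le M$ for all $\vx \in \overline{\tx}$, and since $\ell$ is continuous (hence measurable) and $\fhat(\weight,\psi) \in \sobolev$, the integrand $\vx \mapsto \ell(\fhat(\weight,\psi)(\vx))$ is $\mu$-integrable over any $A \in \mathcal{B}(\overline{\tx})$. Thus $E(A)$ is well-defined.

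Next, for any $A, B \in \mathcal{B}(\overline{\tx})$, I would use the decomposition $A = (A \cap B) \sqcup (A \setminus B)$ and $B = (A \cap B) \sqcup (B \setminus A)$ to write
\begin{align*}
E(A) - E(B) &= \int_{A \setminus B} \ell(\fhat(\weight,\psi)(\vx)) \, d\mu - \int_{B \setminus A} \ell(\fhat(\weight,\psi)(\vx)) \, d\mu.
\end{align*}
Applying the triangle inequality together with the uniform bound $|\ell| \le M$ gives
\begin{align*}
|E(A) - E(B)| &\le M \mu(A \setminus B) + M \mu(B \setminus A) = M \mu(A \triangle B),
\end{align*}
using the definition $A \triangle B = (A \setminus B) \sqcup (B \setminus A)$.

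Finally, given $\varepsilon > 0$, I would choose $\delta = \varepsilon / M$ (or any strictly smaller positive number if $M = 0$, in which case the inequality is trivial). Then $\mu(A \triangle B) < \delta$ immediately yields $|E(A) - E(B)| < \varepsilon$, which is exactly the condition in Definition~\ref{defintersect}. There is no substantive obstacle here; the only subtlety to flag is that the hypothesis of boundedness of $\ell$ is doing all the work (continuity of $\ell$ is used only to ensure measurability of the integrand, which along with $\mu$ being a probability measure makes the integrals finite). If the boundedness assumption were weakened to mere integrability, one would instead need to invoke absolute continuity of the integral, but under the stated hypotheses the direct estimate above suffices.
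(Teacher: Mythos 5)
Your argument is correct and follows essentially the same route as the paper's proof: decompose $E(A)-E(B)$ into integrals over $A\setminus B$ and $B\setminus A$, bound each by the uniform bound $M$ on $\ell$ times the measure, identify the sum with $M\mu(A\triangle B)$, and take $\delta=\varepsilon/M$. Your added remarks on well-definedness and the degenerate case $M=0$ are minor refinements not present in the paper but do not change the substance.
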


\begin{proof}
    Suppose that the constant $M_0>0$ is the value that bounds $\ell$ on every task. Let $\varepsilon>0$, and set $\delta = \varepsilon/M_0.$ Further, let $A,B\in \mathcal{B}(\tx)$ such that $\mu(A\triangle B)<\delta.$ By disjoint additivity of $\mu$ and the triangle inequality, we have
    \begin{align*}
        \abs{E(A)-E(B)} & = \vert \int_{\vx\in A}
       \ell(\fhat(\weight,\psi
       )( \vx ) )d\mu - \int_{\vx\in B}
       \ell(\fhat(\weight,\psi
       )( \vx ) )d\mu\vert\\
       & = \vert \int_{\vx\in A\setminus B}
       \ell(\fhat(\weight,\psi
       )( \vx ) )d\mu + \int_{\vx\in A\cap B}
       \ell(\fhat(\weight,\psi
       )( \vx ) )d\mu\\
       & - \int_{\vx\in B\setminus A}
       \ell(\fhat(\weight,\psi
       )( \vx ) )d\mu - \int_{\vx\in A\cap B}
       \ell(\fhat(\weight,\psi
       )( \vx ) )d\mu\vert\\
       & = \vert \int_{\vx\in A\setminus B}
       \ell(\fhat(\weight,\psi
       )( \vx ) )d\mu - \int_{\vx\in B\setminus A}
       \ell(\fhat(\weight,\psi
       )( \vx ) )d\mu\vert\\
       & \leq \int_{\vx\in A\setminus B}
       \abs{\ell(\fhat(\weight,\psi
       )( \vx ) )}d\mu + \int_{\vx \in B\setminus A}
       \abs{\ell(\fhat(\weight,\psi
       )( \vx ) )}d\mu.
    \end{align*}
    Then, by the boundedness of $\ell,$ we have
    \begin{align*}
        \int_{\vx\in A\setminus B}
       \abs{\ell(\fhat(\weight,\psi
       )( \vx ) )}d\mu + \int_{\vx\in B\setminus A}
       \abs{\ell(\fhat(\weight,\psi
       )( \vx ) )}d\mu & \leq M_0\mu(A\setminus B)+M_0\mu(B\setminus A).
    \end{align*}
    Hence,
    \begin{align*}
        M_0\mu(A\setminus B)+M_0\mu(B\setminus A) & = M_0 \mu(A\triangle B)\\
        & <M_0\delta\\
        & = M_0\frac{\varepsilon}{M_0}\\
        & = \varepsilon,
    \end{align*}
    as desired.
\end{proof}
Now, we show that if two tasks $\tx(t), \tx(t+1)$  are similar, then the corresponding loss values are similar. We prove this result in the following lemma. 
\begin{lemma}\label{lem: simtaskssimloss}
    Let $(\overline{\tx},\mathcal{B}(\overline{\tx}), \mu)$ be the measure space as 
    defined in Definition~\ref{defintersect}. 
    Assume that the loss function $\ell$ is continuous 
    and bounded across all tasks, that is, $\forall t \in [0,T].$ 
    Define $J$ over a task $\tx(t)$ as
    \begin{align}
        \clcost & =\int_0^t \ea \;  d~\tau  = \int_{ \bigcup_{\tau=0}^t\tx(\tau)}       \ell(\fhat(\weight(t),\psi(t))( \vx ) )d\mu . 
    \end{align}
    Let $\varepsilon >0$ and $\delta(\varepsilon)>0$ chosen such that $\ea$ is continuous w.r.t. measure $\mu$.
    Then, 
    $\mu(\bigcup_{\tau = 0}^{t+1} \tx(\tau)\triangle \bigcup_{\tau = 0}^{t} \tx(\tau))<\delta$, implies
    $\abs{\clcostt- \clcost} \leq \varepsilon.$
\end{lemma}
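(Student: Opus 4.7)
My plan is to expose this as essentially a direct corollary of Lemma \ref{lem:contMeasure}. I would begin by renaming $A = \bigcup_{\tau=0}^{t+1}\tx(\tau)$ and $B = \bigcup_{\tau=0}^{t}\tx(\tau)$, so that $B\subseteq A$ and $A\triangle B = A\setminus B \subseteq \tx(t+1)$. By the provided definition, $\clcost$ equals $\int_B \ell(\fhat(\weight(t),\psi(t))(\vx))\,d\mu$ while $\clcostt$ equals $\int_A \ell(\fhat(\weight(t+1),\psi(t+1))(\vx))\,d\mu$, and the hypothesis gives $\mu(A\triangle B)<\delta(\varepsilon)$.

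The core step is to invoke the set-continuity established in Lemma \ref{lem:contMeasure}. Since $\ell$ is assumed continuous and bounded across all tasks by some $M_0 > 0$, that lemma applies to the set function $E(\cdot)=\int_{(\cdot)}\ell(\fhat(\weight,\psi)(\vx))\,d\mu$ for each fixed pair $(\weight,\psi)$, yielding $|E(A)-E(B)|<\varepsilon$ whenever $\mu(A\triangle B)<\delta$ with $\delta=\varepsilon/M_0$. In particular this closes the proof if one interprets the hypothesis ``$\ea$ is continuous w.r.t.\ measure $\mu$'' as applying uniformly in the relevant weight/architecture configurations, as the statement seems to suggest.

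To handle cleanly the formal mismatch that the two $J$-evaluations carry different weight/architecture pairs, I would insert a triangle inequality,
\begin{align*}
\abs{\clcostt - \clcost} &\leq \abs{E_{(\weight(t+1),\psi(t+1))}(A) - E_{(\weight(t+1),\psi(t+1))}(B)} \\
&\quad + \abs{E_{(\weight(t+1),\psi(t+1))}(B) - E_{(\weight(t),\psi(t))}(B)},
\end{align*}
control the first summand by Lemma \ref{lem:contMeasure}, and argue that the second summand is absorbed by the uniform-continuity hypothesis (or assumed to vanish under the CL convention that the updated network preserves outputs on past data). Bounding each summand by $\varepsilon/2$ and taking $\delta = \varepsilon/(2M_0)$ then yields the stated conclusion.

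The main obstacle, as I see it, is precisely that second term: the statement as written does not include an explicit stability hypothesis relating the pre- and post-update networks on the old tasks, so a fully rigorous argument requires either tightening the hypothesis to uniform-in-parameters continuity of $\ea$, or building in an assumption that the CL update leaves $\fhat$ essentially unchanged on $B$. I expect the author's proof to adopt one of these conventions implicitly and then simply carry out the Lemma \ref{lem:contMeasure} bookkeeping.
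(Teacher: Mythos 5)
Your proposal is essentially the paper's own argument: the paper sets up exactly the decomposition $\bigl(\bigcup_{\tau=0}^{t+1}\tx(\tau)\bigr)\setminus\bigl(\bigcup_{\tau=0}^{t}\tx(\tau)\bigr)$ plus the shared intersection, cancels the common part, applies the triangle inequality and the bound $M_0$, and concludes with $M_0\,\mu(A\triangle B)<M_0\delta=\varepsilon$. The only cosmetic difference is that the paper re-derives this computation from scratch rather than citing Lemma~\ref{lem:contMeasure} as a black box, which your route does more economically.

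On the issue you flag about the two $J$-evaluations carrying different parameter pairs $(\weight(t),\psi(t))$ versus $(\weight(t+1),\psi(t+1))$: you have correctly anticipated what the paper does. In its proof the paper simply writes $\ell(\hat{f}(\weight,\psi))$ with the time indices dropped inside both integrals, i.e., it silently treats the integrand as the same function on both sets and never introduces (or bounds) the second summand of your triangle inequality. So your ``second term'' is not addressed in the paper at all; the paper adopts the implicit convention you guessed rather than the uniform-in-parameters continuity hypothesis. Your version is therefore at least as rigorous as the printed one, and your identification of the missing stability hypothesis is a genuine observation about the statement as written, not a defect of your proof.
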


\begin{proof}
    Suppose that the constant $M_0>0$ bounds $\ell$ on every task. 
    Let $\varepsilon>0$, and  set $\delta = \varepsilon/M_0.$  By disjoint additivity of $\mu$ and triangle inequality, we have
    \begin{align*}
       & \left\vert   
       \int_0^{t+\Delta t} \ea d\mu\; d~\tau- 
       \int_0^t \ea d\mu\; d~\tau\right\vert \\
        & = \left\vert \displaystyle\int_{\bigcup_{\tau = 0}^{t+\Delta t} \tx(\tau)} \ell(\hat{f}(\weight,\psi))d\mu- \displaystyle\int_{\bigcup_{\tau = 0}^{t} \tx(\tau)} \ell(\hat{f}(\weight,\psi))d\mu\right\vert \\
        & = \Big\vert \displaystyle\int_{\bigcup_{\tau = 0}^{t+\Delta t} \tx(\tau)\setminus \bigcup_{\tau = 0}^{t} \tx(\tau)} \ell(\hat{f}(\weight,\psi))d\mu+ \displaystyle\int_{\bigcup_{\tau = 0}^{t} \tx(\tau)\cap\bigcup_{\tau = 0}^{t+\Delta t} \tx(\tau)} \ell(\hat{f}(\weight,\psi))d\mu\\
        & -\displaystyle\int_{\bigcup_{\tau = 0}^{t} \tx(\tau)\setminus \bigcup_{\tau = 0}^{t+\Delta t} \tx(\tau)} \ell(\hat{f}(\weight,\psi))d\mu- \displaystyle\int_{\bigcup_{\tau = 0}^{t} \tx(\tau)\cap \bigcup_{\tau = 0}^{t+\Delta t} \tx(\tau)} \ell(\hat{f}(\weight,\psi))d\mu\Big\vert\\
        & = \Big\vert \displaystyle\int_{\bigcup_{\tau = 0}^{t+\Delta t} \tx(\tau)\setminus \bigcup_{\tau = 0}^{t} \tx(\tau)} \ell(\hat{f}(\weight,\psi))d\mu-\displaystyle\int_{\bigcup_{\tau = 0}^{t} \tx(\tau)\setminus \bigcup_{\tau = 0}^{t+\Delta t} \tx(\tau)} \ell(\hat{f}(\weight,\psi))d\mu\Big\vert\\
        & \leq \displaystyle\int_{\bigcup_{\tau = 0}^{t+\Delta t} \tx(\tau)\setminus \bigcup_{\tau = 0}^{t} \tx(\tau)} \abs{\ell(\hat{f}(\weight,\psi))}d\mu+\displaystyle\int_{\bigcup_{\tau = 0}^{t} \tx(\tau)\setminus \bigcup_{\tau = 0}^{t+\Delta t} \tx(\tau)} \abs{\ell(\hat{f}(\weight,\psi))}d\mu \\.
    \end{align*}
    Then, by the boundedness of $\ell,$
    \begin{align*}
        \displaystyle\int_{\bigcup_{\tau = 0}^{t+\Delta t} \tx(\tau)\setminus \bigcup_{\tau = 0}^{t} \tx(\tau)} \abs{\ell(\hat{f}(\weight,\psi))}d\mu+\displaystyle\int_{\bigcup_{\tau = 0}^{t} \tx(\tau)\setminus \bigcup_{\tau = 0}^{t+\Delta t} \tx(\tau)} \abs{\ell(\hat{f}(\weight,\psi))}d\mu
        & \leq M_0 \mu\left(\bigcup_{\tau = 0}^{t+\Delta t} \tx(\tau)\setminus \bigcup_{\tau = 0}^{t} \tx(\tau)\right)\\
        & \hspace{10mm }+ M_0\mu\left(\bigcup_{\tau = 0}^{t} \tx(\tau)\setminus \bigcup_{\tau = 0}^{t+\Delta t} \tx(\tau)\right).
    \end{align*}
    Hence,
    \begin{align*}
        M_0 \mu\left(\bigcup_{\tau = 0}^{t+\Delta t} \tx(\tau)\setminus \bigcup_{\tau = 0}^{t} \tx(\tau)\right) + M_0\mu\left(\bigcup_{\tau = 0}^{t} \tx(\tau)\setminus \bigcup_{\tau = 0}^{t+\Delta t} \tx(\tau)\right) 
        & = M_0\mu\left(\bigcup_{\tau = 0}^{t+\Delta t} \tx(\tau)\triangle \bigcup_{\tau = 0}^{t} \tx(\tau)\right)\\
        & < M_0\delta\\
        & = M_0 \frac{\varepsilon}{M_0}\\
        & = \varepsilon,
    \end{align*}
    as desired with $\Delta t = 1.$
\end{proof}

\begin{remark}\label{boundedremark}
    It is reasonable to assume that there is some constant $M_0<\infty$ for which $M_i\leq M_0$ 
    for all $i\in \mathbb{N}$ because 
    if one did not exist, then the problem would be unsolvable since
    an unbounded loss function would imply that 
    the neural network is not learning anything useful from the data. 
\end{remark}

\begin{remark}\label{unionremark}
    The  equality  $\int_0^t \ea \;  d~\tau  = \int_{ \bigcup_{\tau=0}^t\tx(\tau)}       \ell(\fhat(\weight(t),\psi(t))( \vx ) )d\mu$ is not always guaranteed as duplicates among all $\tx(\tau), \forall \tau$  are counted only once. However, we assume that within the construction of $\ell,$ there is an average between all duplications, and only one instance of these duplicates survives. This is reasonable because, in practice, a batch of data is uniformly sampled and averaged over.
\end{remark}

Lemma~\ref{lem: simtaskssimloss} solidifies the notion that similar 
tasks produce similar loss values and that $\clcost$ is continuous with respect 
to the measure $\mu$ for any consecutive tasks $ \tx(t), \tx(t+1) \in \mathcal{B}(\overline{\tx}).$  Moreover, the  lemma provides the conditions for when a solution between two consecutive tasks  exists. If  Lemma~\ref{lem: simtaskssimloss} is upheld for every $t \in [0,T],$ then the CL problem will have a solution for the whole interval $[0,T].$ In the following theorem we prove that there exists a solution for the cumulative learning problem. 

\begin{remark}
    We can view Theorem \ref{thm:contMeasure} result as showing that the function $J$ satisfies a notion of $\textit{absolute continuity},$ as we see the condition relies on a partition of the interval $[0,T].$
\end{remark}

\begin{theorem}\label{thm:contMeasure}
    Let $(\overline{\tx},\mathcal{B}(\overline{\tx}), \mu)$ be the measure space as defined  in Definition~\ref{defintersect}. For every $\varepsilon>0$ there exists $\delta >0$ such that if
    \begin{align*}
        \sum_{\gamma = t}^{T-1} \mu(\bigcup_{\tau = 0}^{\gamma+1} \tx(\tau)\triangle \bigcup_{\tau = 0}^{\gamma} \tx(\tau))<\delta,
    \end{align*}
    then
    \begin{align*}
        \sum_{\gamma = t}^{T-1} \abs{J(\weight(\gamma+1), \psi(\gamma+1), \tx(\gamma+1))-J(\weight(\gamma), \psi(\gamma), \tx(\gamma))} & <\varepsilon.
    \end{align*}
\end{theorem}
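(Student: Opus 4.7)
The plan is to reduce the statement to a direct summation of the per-step bound already established in Lemma~\ref{lem: simtaskssimloss}, thereby realizing the ``absolute continuity'' reading suggested in the remark preceding the theorem. Concretely, I would fix $\varepsilon > 0$, invoke Remark~\ref{boundedremark} to produce a uniform bound $M_0$ on $\ell$ across all tasks, and set $\delta := \varepsilon / M_0$.

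The next step is to re-examine the chain of inequalities in the proof of Lemma~\ref{lem: simtaskssimloss} and extract not the $\varepsilon$-statement at its conclusion, but the sharper linear estimate that appears one line earlier, namely
\begin{align*}
    \bigl| J(\weight(\gamma+1), \psi(\gamma+1), \tx(\gamma+1)) - J(\weight(\gamma), \psi(\gamma), \tx(\gamma)) \bigr|
    \;\leq\; M_0\, \mu\!\Bigl( \bigcup_{\tau=0}^{\gamma+1} \tx(\tau) \,\triangle\, \bigcup_{\tau=0}^{\gamma} \tx(\tau) \Bigr).
\end{align*}
This inequality holds for each $\gamma \in \{t, t+1, \ldots, T-1\}$ by exactly the same disjoint-additivity and triangle-inequality argument used in Lemma~\ref{lem: simtaskssimloss}, applied at the single step from $\gamma$ to $\gamma + 1$ (with $\Delta t = 1$), and it requires no new hypotheses beyond continuity and boundedness of $\ell$ together with Remark~\ref{unionremark}.

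The final step is to sum these per-step bounds over $\gamma = t, \ldots, T-1$ and apply the hypothesis of the theorem:
\begin{align*}
    \sum_{\gamma=t}^{T-1} \bigl| J(\weight(\gamma+1), \psi(\gamma+1), \tx(\gamma+1)) - J(\weight(\gamma), \psi(\gamma), \tx(\gamma)) \bigr|
    &\leq M_0 \sum_{\gamma=t}^{T-1} \mu\!\Bigl( \bigcup_{\tau=0}^{\gamma+1} \tx(\tau) \,\triangle\, \bigcup_{\tau=0}^{\gamma} \tx(\tau) \Bigr) \\
    &< M_0 \, \delta \;=\; \varepsilon.
\end{align*}

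The main obstacle, such as it is, is largely cosmetic rather than mathematical: I have to be careful that the per-step bound from Lemma~\ref{lem: simtaskssimloss} is genuinely pointwise in $\gamma$ and does not depend on a hidden $\gamma$-specific choice of $\delta$, which it does not since the constant $M_0$ is uniform across tasks by Remark~\ref{boundedremark}. A more substantive subtlety is that the statement of Lemma~\ref{lem: simtaskssimloss} allows $\weight(t+1), \psi(t+1)$ to differ from $\weight(t), \psi(t)$, so one should verify that the bound on $|J(\gamma+1)-J(\gamma)|$ controls the full change and not merely the data-set contribution; this is precisely what the computation in Lemma~\ref{lem: simtaskssimloss} establishes, since after cancellation of the common integrals over $\bigcup_{\tau=0}^\gamma \tx(\tau) \cap \bigcup_{\tau=0}^{\gamma+1} \tx(\tau)$ only integrals over the symmetric difference remain, and the uniform bound $M_0$ dominates $|\ell(\hat{f}(\weight, \psi))|$ for any admissible weights and architecture.
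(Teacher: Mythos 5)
Your proposal is correct and follows essentially the same route as the paper's own proof: the paper likewise sets $\delta = \varepsilon/M_0$, expands each summand via disjoint additivity so that only integrals over the symmetric differences survive, bounds these by $M_0$ times the corresponding measures, and concludes by summing and applying the hypothesis. The only difference is presentational --- the paper re-derives the per-step estimate inline inside the sum rather than citing the intermediate inequality from Lemma~\ref{lem: simtaskssimloss} as you do.
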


\begin{proof}
    Let $\varepsilon>0$ and choose $\delta = \varepsilon/M_0.$ Notice,
    \begin{align*}
        \sum_{\gamma = t}^{T-1} \abs{J(\weight(\gamma+1), \psi(\gamma+1), \tx(\gamma+1))-J(\weight(\gamma), \psi(\gamma), \tx(\gamma))} & = \sum_{\gamma = t}^{T-1} \Big\vert   
       \int_0^{\gamma + 1} \Big(\int_{\vx\in \tx(\tau)} \ell(\fhat(\weight(\gamma),\psi(\gamma))(\vx))d\mu\Big) d~\tau\\
       & - 
       \int_0^\gamma \Big(\int_{\vx\in \tx(\tau)} \ell(\fhat(\weight(\gamma),\psi(\gamma))(\vx))d\mu\Big) \d~\tau\Big\vert\\
       & =  \sum_{\gamma = t}^{T-1} \left\vert \displaystyle\int_{\bigcup_{\tau = 0}^{\gamma + 1} \tx(\tau)} \ell(\hat{f}(\weight,\psi))d\mu- \displaystyle\int_{\bigcup_{\tau = 0}^{\gamma} \tx(\tau)} \ell(\hat{f}(\weight,\psi))d\mu\right\vert \\
       & = \sum_{\gamma = t}^{T-1} \Big\vert \displaystyle\int_{\bigcup_{\tau = 0}^{\gamma+1} \tx(\tau)\setminus \bigcup_{\tau = 0}^{\gamma} \tx(\tau)} \ell(\hat{f}(\weight,\psi))d\mu\\
       & + \displaystyle\int_{\bigcup_{\tau = 0}^{\gamma} \tx(\tau)\cap\bigcup_{\tau = 0}^{\gamma+1} \tx(\tau)} \ell(\hat{f}(\weight,\psi))d\mu\\
        & -\displaystyle\int_{\bigcup_{\tau = 0}^{\gamma} \tx(\tau)\setminus \bigcup_{\tau = 0}^{\gamma + 1} \tx(\tau)} \ell(\hat{f}(\weight,\psi))d\mu\\
        & - \displaystyle\int_{\bigcup_{\tau = 0}^{\gamma} \tx(\tau)\cap \bigcup_{\tau = 0}^{\gamma +1} \tx(\tau)} \ell(\hat{f}(\weight,\psi))d\mu\Big\vert\\
        & \sum_{\gamma = t}^{T-1} \Big\vert \displaystyle\int_{\bigcup_{\tau = 0}^{\gamma + 1} \tx(\tau)\setminus \bigcup_{\tau = 0}^{\gamma} \tx(\tau)} \ell(\hat{f}(\weight,\psi))d\mu\\
        & -\displaystyle\int_{\bigcup_{\tau = 0}^{\gamma} \tx(\tau)\setminus \bigcup_{\tau = 0}^{\gamma +1} \tx(\tau)} \ell(\hat{f}(\weight,\psi))d\mu\Big\vert\\
        & \leq \sum_{\gamma = t}^{T-1} \Big(\displaystyle\int_{\bigcup_{\tau = 0}^{\gamma + 1} \tx(\tau)\setminus \bigcup_{\tau = 0}^{\gamma} \tx(\tau)} \abs{\ell(\hat{f}(\weight,\psi))}d\mu\\
        & +\displaystyle\int_{\bigcup_{\tau = 0}^{\gamma} \tx(\tau)\setminus \bigcup_{\tau = 0}^{\gamma + 1} \tx(\tau)} \abs{\ell(\hat{f}(\weight,\psi))}d\mu\Big).
    \end{align*}
    Then, by the boundedness of $\ell,$
    \begin{align*}
    \sum_{\gamma = t}^{T-1} \displaystyle\int_{\bigcup_{\tau = 0}^{\gamma+1} \tx(\tau)\setminus \bigcup_{\tau = 0}^{\gamma} \tx(\tau)} \abs{\ell(\hat{f}(\weight,\psi))}d\mu+\displaystyle\int_{\bigcup_{\tau = 0}^{\gamma} \tx(\tau)\setminus \bigcup_{\tau = 0}^{\gamma+1} \tx(\tau)} \abs{\ell(\hat{f}(\weight,\psi))}d\mu
        & \leq \sum_{\gamma = t}^{T-1}\Big[M_0 \mu\left(\bigcup_{\tau = 0}^{\gamma+1} \tx(\tau)\setminus \bigcup_{\tau = 0}^{\gamma} \tx(\tau)\right)\\
        & \hspace{10mm }+ M_0\mu\left(\bigcup_{\tau = 0}^{\gamma} \tx(\tau)\setminus \bigcup_{\tau = 0}^{\gamma+1} \tx(\tau)\right)\Big].
    \end{align*}
    Hence,
    \begin{align*}
        \sum_{\gamma = t}^{T-1} M_0 \mu\left(\bigcup_{\tau = 0}^{\gamma+1} \tx(\tau)\setminus \bigcup_{\tau = 0}^{\gamma} \tx(\tau)\right) + M_0\mu\left(\bigcup_{\tau = 0}^{t} \tx(\tau)\setminus \bigcup_{\tau = 0}^{\gamma+1} \tx(\tau)\right) 
        & = M_0\sum_{\gamma = t}^{T-1} \mu\left(\bigcup_{\tau = 0}^{\gamma+1} \tx(\tau)\triangle \bigcup_{\tau = 0}^{\gamma} \tx(\tau)\right)\\
        & < M_0\delta\\
        & = M_0 \frac{\varepsilon}{M_0}\\
        & = \varepsilon,
    \end{align*}
    as desired.
\end{proof}

Therefore, the existence of a solution to \eqref{eq:clcum} depends on whether we can ensure a notion of absolute continuity of $\int_0^t \ea \;  d~\tau$ or not. In other words, if we can guarantee similar tasks, cumulatively speaking.

\subsection{Reachability to a Solution in the Presence of Dissimilar Tasks}
Theorem~\ref{thm:contMeasure} shows the condition of existence of the CL solution for \eqref{eq:clcum}. 
The contrapositive reveals that dissimilar loss values imply dissimilar tasks.  For an arbitrary CL problem, however,
dissimilar tasks may or may not result in dissimilar loss values. 
That is, if we have two tasks $\tx(t)$ and $\tx(t+\Delta t)$ 
such that $\mu \left( \tx(t)\bigtriangleup \tx(t+\Delta t) \right) \geq \delta$, 
 we do not know whether $\abs{\clcost-\clcostt} \leq \varepsilon$ 
(i.e., similar loss values) or $\abs{\clcost-\clcostt} \geq \varepsilon$ 
(i.e., dissimilar loss values).
Nonetheless, if dissimilar tasks do produce dissimilar loss values and the number of tasks increases, the value of $\abs{\clcost-\clcostt}$ will tend to increase  and eventually exceed $\varepsilon.$
Therefore, the model will gradually forget prior tasks, and eventually CL will fail, thus violating Theorem~\ref{thm:contMeasure}. 

In order to counteract this situation, the key condition for absolute continuity must be upheld; that is, for a series of tasks, the upper bound on 
$ \sum_{t}^T \left[ \left\vert J(\weight(\tau+\Delta \tau),\psi(\tau+\Delta \tau),\tx(\tau+\Delta \tau)) - \clcost \right\vert \right]$ must be guaranteed to be less than or equal to $\varepsilon$. The key here is to identify the knobs in the model that can be tuned to ensure that $\varepsilon$ is small even in the presence of dissimilar tasks. In the typical neural network learning problem, the two knobs that can be tuned are the weights and architecture of the network. We will cement these ideas in the following and begin by elucidating the dependency of $\ea$ on the weights and the architecture.

\begin{lemma}\label{lem:taylor}
    Suppose $\tx(t)$ and $\tx(t+\Delta T)$ are two consecutive tasks in the measure space $(\overline{\tx}, \mathcal{B}(\overline{\tx}),\mu).$  Set $M_0$ to be the upper bound on the loss function $\ell$ across all tasks. Then for $\mu(\tx(t)\bigtriangleup \tx(t+\Delta t))\geq \delta,$ the following holds:
    \begin{align}
        E\left(\bigcup_{\tau = 0}^{t+1}\tx(\tau)\right) & = E\left(\bigcup_{\tau = 0}^{t}\tx(\tau)\right) + \Delta t\Big[\mu\left( \bigcup_{\tau = 0}^{t}\tx(\tau)\Delta \bigcup_{\tau = 0}^{t+1}\tx(\tau)\right)\cdot \displaystyle\int_{\bigcup_{\tau = 0}^{t}\tx(\tau)\Delta \bigcup_{\tau = 0}^{t+1}\tx(\tau)} \ell(\hat{f}(\weight,\psi))d\mu \nonumber \\
        & + \displaystyle\int_{\bigcup_{\tau = 0}^{t}\tx(\tau)} \ell'(\hat{f}(\weight,\psi))\cdot \partial_{\weight}^1 \hat{f}(\weight,\psi)\cdot \Delta w \hspace{1mm} d\mu \nonumber \\
        & +\displaystyle\int_{\bigcup_{\tau = 0}^{t}\tx(\tau)} \ell'(\hat{f}(\weight,\psi))\cdot \partial_\psi^1 \hat{f}(\weight,\psi)\cdot \Delta \psi \hspace{1mm}  d\mu\Big]+ o(\Delta t).\label{archfinaltaylor}
    \end{align}
\end{lemma}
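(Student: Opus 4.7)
The plan is to expand $E\left(\bigcup_{\tau=0}^{t+1}\tx(\tau)\right)$ around the state at task $t$ by separating the contribution from the \emph{change in domain} (the symmetric difference of task unions) and the contribution from the \emph{change in parameters} (weights and architecture). First, I would write
\[
  \bigcup_{\tau=0}^{t+1}\tx(\tau) = \Big(\bigcup_{\tau=0}^{t+1}\tx(\tau)\cap \bigcup_{\tau=0}^{t}\tx(\tau)\Big) \;\sqcup\; \Big(\bigcup_{\tau=0}^{t+1}\tx(\tau)\setminus \bigcup_{\tau=0}^{t}\tx(\tau)\Big),
\]
and likewise for $\bigcup_{\tau=0}^{t}\tx(\tau)$, then subtract so that the common $(\cap)$ region cancels and only the two set-differences remain, which together make up the symmetric difference $\bigcup_{\tau=0}^{t}\tx(\tau)\triangle\bigcup_{\tau=0}^{t+1}\tx(\tau)$. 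This reproduces the first term in the claimed expansion, and by arguments analogous to those of Lemma~\ref{lem: simtaskssimloss} this piece is controlled by $\mu(\cdot\triangle\cdot)\cdot M_0$ and therefore vanishes when the tasks are similar.

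Next, on the ``unchanged'' part of the domain $\bigcup_{\tau=0}^{t}\tx(\tau)$, the \emph{integrand} still changes because $\weight(t+\Delta t)=\weight(t)+\Delta \weight$ and $\psi(t+\Delta t)=\psi(t)+\Delta \psi$. I would perform a first-order Taylor expansion of $\ell(\hat{f}(\weight,\psi))$ around $(\weight(t),\psi(t))$ inside the integral: since $\hat{f}\in\sobolev$, the composition $\ell\circ\hat{f}$ admits first-order weak derivatives $\partial_\weight^1\hat{f}$ and $\partial_\psi^1\hat{f}$, so by the chain rule
\[
  \ell(\hat{f}(\weight+\Delta\weight,\psi+\Delta\psi)) = \ell(\hat{f}(\weight,\psi)) + \ell'(\hat{f})\,\partial_\weight^1\hat{f}\,\Delta\weight + \ell'(\hat{f})\,\partial_\psi^1\hat{f}\,\Delta\psi + o(\Delta t),
\]
with the remainder controlled uniformly by the boundedness of $\ell$ and the $L^p$-integrability of the weak derivatives granted by Definition~\ref{defn:sobo} and Table~\ref{tab:Sobolev_acti}. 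Integrating this expansion over $\bigcup_{\tau=0}^{t}\tx(\tau)$ against $\mu$ produces the two parameter-derivative integrals in the statement, with a combined residual of order $o(\Delta t)$. The scalar $\Delta t$ in front of the bracketed expression appears because both $\Delta\weight$, $\Delta\psi$, and the induced change in the task-union are of order $\Delta t$ under the discrete-to-continuous task-time embedding already used throughout Section~\ref{sec:motivation}.

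Finally, combining the domain-change contribution with the integrand-change contribution and collecting terms gives exactly \eqref{archfinaltaylor}. The main obstacle I anticipate is the handling of $\partial_\psi^1 \hat{f}$, since $\psi$ encodes discrete architectural choices for which the classical derivative is undefined; the proposal is to interpret this derivative in the weak sense already alluded to in the text preceding Definition~\ref{defn:NN_learning}, i.e., as a finite-difference surrogate that is consistent with the Sobolev-space structure. A secondary technical point is to justify the interchange of limit (the $o(\Delta t)$ Taylor remainder) and integration, which follows from dominated convergence since $\ell$ and its derivatives along $\hat{f}$ are bounded by $M_0$ on the compact domain $\domain$ (Remark~\ref{boundedremark}). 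Once these two points are settled, the remainder of the derivation is bookkeeping of the set-decomposition and linearity of the integral.
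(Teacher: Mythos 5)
Your overall strategy mirrors the paper's: the paper also performs a first-order Taylor expansion of $E$ about the state at task $t$, splitting the increment into a data-direction term $E_{\tx}$, a weight-direction term $E_{\weight}$, and an architecture-direction term $E_{\psi}$, and it obtains the latter two exactly as you do, by applying the Sobolev chain rule to $\ell\circ\hat f$ inside the integral (citing Evans) so that $E_{\weight}=\int \ell'(\hat f)\,\partial^1_{\weight}\hat f\,\Delta\weight\,d\mu$ and similarly for $\psi$. Your remarks on dominated convergence and on interpreting $\partial^1_\psi$ weakly are refinements the paper does not spell out, but they are consistent with its intent.

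The genuine discrepancy is in the data term. Your exact set decomposition gives
\begin{align*}
E\Big(\bigcup_{\tau=0}^{t+1}\tx(\tau)\Big)-E\Big(\bigcup_{\tau=0}^{t}\tx(\tau)\Big)
&=\int_{\bigcup_{\tau=0}^{t+1}\tx(\tau)\setminus\bigcup_{\tau=0}^{t}\tx(\tau)}\ell\,d\mu
-\int_{\bigcup_{\tau=0}^{t}\tx(\tau)\setminus\bigcup_{\tau=0}^{t+1}\tx(\tau)}\ell\,d\mu
\end{align*}
(holding the integrand fixed), which is \emph{not} the quantity $\mu\left(\bigcup_{\tau=0}^{t}\tx(\tau)\triangle\bigcup_{\tau=0}^{t+1}\tx(\tau)\right)\cdot\int_{\triangle}\ell\,d\mu$ appearing in the statement: your expression is a signed difference over the two halves of the symmetric difference with no multiplicative factor of $\mu(\cdot\triangle\cdot)$, so your claim that the decomposition ``reproduces the first term'' does not hold. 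The paper does not derive that term either; it simply \emph{defines} the partial derivative of $E$ in the data direction to be $\mu(\cdot\triangle\cdot)\cdot\int_{\triangle}\ell\,d\mu$ (equation \eqref{E_X1}) and substitutes it. If you want your argument to yield the lemma as stated, you either need to justify that postulated form of $E_{\tx}$, or note that both your exact expression and the paper's postulated one are bounded in magnitude by $M_0\,\mu(\cdot\triangle\cdot)$, which is the only property the downstream Lemmas~\ref{lem:upper_weight} and~\ref{lem:upper_arch} actually use; as written, though, your proof establishes a different identity from \eqref{archfinaltaylor}.
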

\begin{proof}
    Note that the first-order Taylor series expansion of $E(\tx(t+\Delta t))$ about  $t$ is given by
    \begin{align}
        E\left(\bigcup_{\tau = 0}^{t+1}\tx(\tau)\right) & = E\left(\bigcup_{\tau = 0}^{t}\tx(\tau)\right) + \Delta t\Big[ E_{\tx} \left(\bigcup_{\tau = 0}^{t+1}\tx(\tau)\Delta \bigcup_{\tau = 0}^{t}\tx(\tau)\right) \nonumber \\
        & + E_{\weight}\left(\bigcup_{\tau = 0}^{t}\tx(\tau)\right) + E_\psi\left(\bigcup_{\tau = 0}^{t+1}\tx(\tau)\right)\Big] + o(\Delta t),\label{taylorexparch}
    \end{align}
    where $E_{\tx} , E_{\weight},$ and $E_\psi$ are the partial derivatives of the expected value function for the data, weights, and architecture, respectively. Now, we work on acquiring each partial derivative. Toward that end, 
    \begin{align}
        E_{\tx} \left(\bigcup_{\tau = 0}^{t+1}\tx(\tau)\Delta \bigcup_{\tau = 0}^{t}\tx(\tau)\right) & =\mu\left( \left(\bigcup_{\tau = 0}^{t+1}\tx(\tau)\Delta \bigcup_{\tau = 0}^{t}\tx(\tau)\right)\right)\cdot \displaystyle\int_{\left(\bigcup_{\tau = 0}^{t+1}\tx(\tau)\Delta \bigcup_{\tau = 0}^{t}\tx(\tau)\right)} \ell(\hat{f}(\weight,\psi))d\mu.\label{E_X1}
    \end{align}
    To determine the remaining two derivatives, we use the Sobolev function chain rule found in \cite{evans2022partial}. We can do so because $\ell$ is real-valued and bounded and $\ell'$ is continuously differentiable. Then,
    \begin{align}
        E_{\weight}\left(\bigcup_{\tau = 0}^{t}\tx(\tau)\right) & = \displaystyle\int_{\bigcup_{\tau = 0}^{t}\tx(\tau)} \ell'(\hat{f}(\weight,\psi))\cdot \partial_{\weight}^1 \hat{f}(\weight,\psi)\cdot \Delta w \hspace{1mm}  d\mu.\label{E_w},\\
        E_{\psi}\left(\bigcup_{\tau = 0}^{t}\tx(\tau)\right) & = \displaystyle\int_{\bigcup_{\tau = 0}^{t}\tx(\tau)} \ell'(\hat{f}(\weight,\psi))\cdot \partial_\psi^1 \hat{f}(\weight,\psi)\cdot \Delta \psi \hspace{1mm}  d\mu.\label{E_psi}
    \end{align}
    Substituting (\ref{E_X1}), (\ref{E_w}), and (\ref{E_psi}) into the Taylor series expansion (\ref{taylorexparch}), we have
    \begin{align*}
        E\left(\bigcup_{\tau = 0}^{t+1}\tx(\tau)\right) & = E\left(\bigcup_{\tau = 0}^{t}\tx(\tau)\right) + \Delta t\Big[\mu\left( \bigcup_{\tau = 0}^{t}\tx(\tau)\Delta \bigcup_{\tau = 0}^{t+1}\tx(\tau)\right)\cdot \displaystyle\int_{\bigcup_{\tau = 0}^{t}\tx(\tau)\Delta \bigcup_{\tau = 0}^{t+1}\tx(\tau)} \ell(\hat{f}(\weight,\psi))d\mu\\
        & + \displaystyle\int_{\bigcup_{\tau = 0}^{t}\tx(\tau)} \ell'(\hat{f}(\weight,\psi))\cdot \partial_{\weight}^1 \hat{f}(\weight,\psi)\cdot \Delta w \hspace{1mm} d\mu \\ &+\displaystyle\int_{\bigcup_{\tau = 0}^{t}\tx(\tau)} \ell'(\hat{f}(\weight,\psi))\cdot \partial_\psi^1 \hat{f}(\weight,\psi)\cdot \Delta \psi \hspace{1mm}  d\mu\Big] \\ &+ o(\Delta t).
    \end{align*}
\end{proof}

Colloquially, in the CL problem only the weights are updated, but in this paper we argue that this is not enough. We show that  with just the weights being updated, the $\clcost$ eventually violates Theorem~\ref{thm:contMeasure}. To prove this, we first derive a lower bound on $\ea.$ For the remaining analysis, we will set  $ \Delta t = 1.$

\begin{lemma}\label{lem:upper_weight}
    Suppose $\tx(t)$ and $\tx(t+\Delta t)$ are two consecutive tasks in the measure space $(\overline{\tx}, \mathcal{B}(\overline{\tx}),\mu).$ Let $L_1>0$ and $0\leq \delta\leq 1$  be constants. Set $M_0$ to be the upper bound on the loss function $\ell$ across all tasks, and assume $\mu\left( \bigcup_{\tau = 0}^{t}\tx(\tau)\Delta \bigcup_{\tau = 0}^{t+\Delta t}\tx(\tau)\right)\geq \delta$ for all $t\in [0,T].$ Then for $\Delta t = 1$ the following holds:
    \begin{align*}
        \sum_{\tau = t}^{T} \left(J(\tx(\tau))-J(\tx(\tau+1)\right)
        & \leq \sum_{\tau =t}^{T} \Big( M_0\cdot \delta - \displaystyle\int_{\bigcup_{\nu = 0}^\tau\tx(\nu)} \left(\ell'(\hat{f}(\weight,\psi))\cdot \partial_{\weight}^1 \hat{f}(\weight,\psi)\right)^2 \hspace{1mm}  d\mu \Big)
    \end{align*}
    for $\displaystyle\int_{\tx(t)} \ell'(\hat{f}(\weight,\psi))\cdot \partial_\psi^1 \hat{f}(\weight,\psi)\cdot \Delta \psi \; \;  d\mu+ o(\Delta t) = 0$ as $\psi(t) = \psi, \forall t.$
\end{lemma}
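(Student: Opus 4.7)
The plan is to start from the first-order expansion of Lemma \ref{lem:taylor} and specialize it to the weights-only update regime, then telescope across $\tau\in\{t,\ldots,T\}$. Concretely, I will substitute $\Delta t = 1$ and $\Delta \psi = 0$ into \eqref{archfinaltaylor}: the architecture is held fixed by hypothesis, so the $E_\psi$ contribution and the $o(\Delta t)$ remainder both vanish (they are explicitly absorbed by the ``for $\int \ell' \partial_\psi \hat f \cdot \Delta \psi \, d\mu + o(\Delta t) = 0$'' clause attached to the lemma). What remains is a one-step identity of the form
\[
J(\tx(\tau+1)) - J(\tx(\tau)) \;=\; \mu(\triangle_\tau)\!\int_{\triangle_\tau}\!\ell(\fhat(\weight,\psi))\,d\mu \;+\; \int_{\bigcup_{\nu=0}^{\tau}\tx(\nu)}\! \ell'(\fhat)\,\partial^1_{\weight}\fhat \cdot \Delta \weight\, d\mu,
\]
where I abbreviate $\triangle_\tau := \bigcup_{\nu=0}^{\tau}\tx(\nu)\bigtriangleup\bigcup_{\nu=0}^{\tau+1}\tx(\nu)$.

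Second, I would specify the weight update as one step of gradient descent with unit step size, namely $\Delta \weight = -\,\ell'(\fhat)\,\partial^1_{\weight}\fhat$. Substituting this into the second integral collapses it to $-\int_{\bigcup_{\nu=0}^\tau \tx(\nu)}\bigl(\ell'(\fhat)\,\partial^1_{\weight}\fhat\bigr)^{2} d\mu$, which is precisely the negative square term appearing on the right-hand side of the claimed bound. This substitution is the natural one in this context: it isolates exactly the forgetting-mitigation mechanism that a purely weights-only CL scheme is able to exert on the loss, and it connects directly with the gradient-based optimizers invoked earlier (e.g., Adam in \cite{kingmaAdamMethodStochastic2017}).

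Third, I would bound the data-shift term. Using the uniform bound $|\ell|\leq M_0$ from Remark \ref{boundedremark} together with the fact that $\mu$ is a probability measure (so $\mu(\triangle_\tau)\leq 1$), I can estimate $\bigl|\mu(\triangle_\tau)\int_{\triangle_\tau}\ell\,d\mu\bigr|\leq M_0\,\mu(\triangle_\tau)$; the hypothesis $\mu(\triangle_\tau)\geq \delta$ then lets $\delta$ serve as the dissimilarity scale that controls this contribution, bounding it by $M_0\,\delta$ from above once the sign is flipped to express things in terms of $J(\tx(\tau))-J(\tx(\tau+1))$. Rearranging therefore yields the per-step inequality
\[
J(\tx(\tau))-J(\tx(\tau+1)) \;\leq\; M_0\,\delta \;-\; \int_{\bigcup_{\nu=0}^{\tau}\tx(\nu)}\!\bigl(\ell'(\fhat)\,\partial^1_{\weight}\fhat\bigr)^{2} d\mu,
\]
and summing this inequality telescopically from $\tau=t$ to $\tau=T$ delivers the lemma.

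The main obstacle is the third step: calibrating the data-shift bound so that the scalar $M_0\,\delta$ appears cleanly rather than, say, the naive crude bound $M_0\,\mu(\triangle_\tau)^2$ or a constant $M_0$. This requires carefully splitting the two factors $\mu(\triangle_\tau)$ and $\int_{\triangle_\tau}\ell\,d\mu$ so that the probability-measure property is used on one of them and the hypothesis $\mu(\triangle_\tau)\geq\delta$ is used consistently on the other, in a way that respects the sign flip when passing from $J(\tau+1)-J(\tau)$ to $J(\tau)-J(\tau+1)$. The remaining pieces — the Taylor specialization, the gradient-descent substitution into the weight integral, and the final telescoping sum — are routine once this bound is in place.
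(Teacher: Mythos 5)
Your overall strategy --- specialize Lemma~\ref{lem:taylor} to $\Delta\psi = 0$, substitute a gradient-type update for $\Delta\weight$ to produce the squared term, bound the data-shift term by $M_0\delta$, and telescope over $\tau$ --- is exactly the paper's route. However, there is a sign error in your second step that breaks the per-step inequality. You write the one-step expansion in the direction $J(\tx(\tau+1)) - J(\tx(\tau)) = E_{\tx} + \int \ell'(\fhat)\,\partial_{\weight}^1\fhat\cdot\Delta\weight\, d\mu$ and then choose the gradient-\emph{descent} update $\Delta\weight = -\,\ell'(\fhat)\,\partial_{\weight}^1\fhat$, which makes the weight integral equal to $-\int\bigl(\ell'(\fhat)\,\partial_{\weight}^1\fhat\bigr)^2 d\mu$. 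But the lemma bounds $J(\tx(\tau)) - J(\tx(\tau+1))$, the \emph{negative} of the quantity you expanded. When you flip the sign to pass to $J(\tx(\tau))-J(\tx(\tau+1))$ --- as you correctly do for the data-shift term, turning $-E_{\tx}$ into something bounded by $M_0\delta$ --- the weight term must flip as well, and it becomes $+\int\bigl(\ell'(\fhat)\,\partial_{\weight}^1\fhat\bigr)^2 d\mu$. Your chain of substitutions therefore yields $J(\tx(\tau))-J(\tx(\tau+1)) \leq M_0\delta + \int\bigl(\ell'(\fhat)\,\partial_{\weight}^1\fhat\bigr)^2 d\mu$, not the claimed $M_0\delta - \int\bigl(\ell'(\fhat)\,\partial_{\weight}^1\fhat\bigr)^2 d\mu$; the two differ by $2\int_{\bigcup_{\nu=0}^{\tau}\tx(\nu)}\bigl(\ell'(\fhat)\,\partial_{\weight}^1\fhat\bigr)^2 d\mu$ and the stronger inequality does not follow from your intermediate steps.

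The paper avoids this by taking $\Delta\weight = +\,\ell'(\fhat)\,\partial_{\weight}^1\fhat$ (the positive gradient of the loss with respect to the weights), so that when $E\bigl(\bigcup_{\nu=0}^{\tau}\tx(\nu)\bigr) - E\bigl(\bigcup_{\nu=0}^{\tau+1}\tx(\nu)\bigr)$ is formed the weight contribution enters with the minus sign, $-\int\bigl(\ell'(\fhat)\,\partial_{\weight}^1\fhat\bigr)^2 d\mu$, exactly as stated. The fix is mechanical --- reverse the sign of your update --- but note that your appeal to ``one step of gradient descent'' as the natural choice is precisely what derails the argument: the inequality in the lemma is calibrated to the opposite sign convention, since it upper-bounds the \emph{decrease} $J(\tx(\tau)) - J(\tx(\tau+1))$ rather than the increment. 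Your handling of the data-shift term is fine (once the sign is flipped it is nonpositive, hence trivially below $M_0\delta \geq 0$; the paper's own estimate is equally loose), and the telescoping step is routine as you say.
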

\begin{proof}
    The proof is in Section~\ref{sec:upper_weight}.
\end{proof}
A bound similar to the one in Lemma~\ref{lem:upper_weight} can be attained for the architecture terms as well. 
\begin{lemma}\label{lem:upper_arch}
    Suppose $\tx(t)$ and $\tx(t+\Delta t)$ are two consecutive tasks in the measure space $(\overline{\tx}, \mathcal{B}(\overline{\tx}),\mu).$ Let $L_1>0$ and $0\leq \delta\leq 1$  be constants. Set $M_0$ to be the upper bound on the loss function $\ell$ across all tasks, and assume $\mu\left( \bigcup_{\tau = 0}^{t}\tx(\tau)\Delta \bigcup_{\tau = 0}^{t+\Delta t}\tx(\tau)\right)\geq \delta$ for all $t\in [0,T].$ Then for $\Delta t = 1$ the following holds:
    \begin{align*}
        \sum_{\tau = t}^{T} \left(J(\tx(\tau))-J(\tx(\tau+1)\right)
        & \leq \sum_{\tau =t}^{T} \Big( M_0\cdot \delta - \displaystyle\int_{\bigcup_{\nu = 0}^\tau\tx(\nu)} \left(\ell'(\hat{f}(\weight,\psi))\cdot \partial_{\weight}^1 \hat{f}(\weight,\psi)\right)^2 \hspace{1mm}  d\mu\\
        & -\displaystyle\int_{\bigcup_{\nu = 0}^\tau\tx(\nu)} \left(\ell'(\hat{f}(\weight,\psi))\cdot \partial_\psi^1 \hat{f}(\weight,\psi)\right)^2\hspace{1mm}  d\mu\Big).
    \end{align*}
\end{lemma}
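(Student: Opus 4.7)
The plan is to follow the same strategy as Lemma~\ref{lem:upper_weight}, but now retain the architecture-gradient contribution that was zeroed out there under the assumption $\psi(t)=\psi$ for all $t$. First, I would invoke Lemma~\ref{lem:taylor} with $\Delta t = 1$, which decomposes the one-step change $E(\bigcup_{\tau=0}^{t+1}\tx(\tau)) - E(\bigcup_{\tau=0}^{t}\tx(\tau))$ into three pieces: (i) a data-drift term scaled by the symmetric-difference measure $\mu(\bigcup_{\tau=0}^{t}\tx(\tau)\triangle\bigcup_{\tau=0}^{t+1}\tx(\tau))$; (ii) a weights term $\int \ell'(\hat{f})\cdot\partial_{\weight}^1\hat{f}\cdot \Delta \weight\,d\mu$; and (iii) an architecture term $\int \ell'(\hat{f})\cdot\partial_\psi^1\hat{f}\cdot \Delta \psi\,d\mu$. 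Identifying $E(\bigcup_{\nu=0}^{\tau}\tx(\nu))$ with $J(\weight(\tau),\psi(\tau),\tx(\tau))$ via Remark~\ref{unionremark} and flipping signs gives an exact expression for $J(\tx(\tau))-J(\tx(\tau+1))$.

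Next, I would bound each of the three pieces as in Lemma~\ref{lem:upper_weight}. The data term is controlled by the uniform bound $|\ell|\le M_0$ together with the symmetric-difference hypothesis, producing an $M_0\cdot\delta$ slack, exactly as in the weights-only case. For the weights term, substituting the gradient-style update $\Delta\weight = -\ell'(\hat{f})\cdot\partial_{\weight}^1\hat{f}$ (the choice implicit in the squared form appearing in Lemma~\ref{lem:upper_weight}) converts $\int \ell'(\hat{f})\cdot\partial_{\weight}^1\hat{f}\cdot\Delta\weight\,d\mu$ into $-\int (\ell'(\hat{f})\cdot\partial_{\weight}^1\hat{f})^2\,d\mu$. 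The new ingredient relative to Lemma~\ref{lem:upper_weight} is the architecture term: with the analogous choice $\Delta\psi = -\ell'(\hat{f})\cdot\partial_\psi^1\hat{f}$, the identical algebra yields $-\int (\ell'(\hat{f})\cdot\partial_\psi^1\hat{f})^2\,d\mu$. Summing the resulting pointwise inequality from $\tau=t$ to $\tau=T$ (and absorbing $o(\Delta t)$ into the existing slack exactly as in the weights-only case) gives the claimed bound.

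The main obstacle is rigorously handling $\partial_\psi^1\hat{f}$ when $\psi$ contains integer or categorical components, since no classical derivative exists there. However, the Sobolev framing in Definitions~\ref{defn:sobo}--\ref{defn:NN} was introduced precisely to permit weak derivatives in $\psi$, and the Sobolev chain rule (from \cite{evans2022partial}) already invoked in the derivation of Lemma~\ref{lem:taylor} presupposes this regularity; the same machinery therefore carries over verbatim. A secondary subtlety is justifying the gradient-descent form of $\Delta\psi$ for a discrete architecture parameter, but this is addressed algorithmically later in the paper through the derivative-free direct search, so the bound is best read as a structural statement about the optimal local step in the Sobolev geometry rather than as a literal continuous gradient update. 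Apart from this, the proof is a direct parallel of Lemma~\ref{lem:upper_weight} with the architecture term carried through rather than discarded.
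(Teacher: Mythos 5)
Your proposed skeleton is exactly the paper's: apply Lemma~\ref{lem:taylor} while retaining the architecture term, control the data-drift term by $M_0\cdot\delta$, substitute gradient-form expressions for $\Delta\weight$ and $\Delta\psi$ so that the weight and architecture terms become squared integrals, identify $E$ with $J$ via Remark~\ref{unionremark}, set $\Delta t=1$, drop the $o(\Delta t)$ remainder, and sum from $\tau=t$ to $T$. Your closing remarks about weak derivatives in $\psi$ and the algorithmic interpretation of $\Delta\psi$ also match the paper's framing.

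However, there is a sign inconsistency in your substitution step that, as written, breaks the final inequality. In the paper's derivation the minus sign attached to the squared integrals is produced by the rearrangement of the Taylor identity: from $E(\bigcup_{\tau=0}^{t+1}\tx(\tau)) = E(\bigcup_{\tau=0}^{t}\tx(\tau)) + \Delta t\,[\,\cdots + \int \ell'(\fhat)\cdot\partial_{\weight}^1\fhat\cdot\Delta\weight\, d\mu + \int \ell'(\fhat)\cdot\partial_{\psi}^1\fhat\cdot\Delta\psi\, d\mu\,] + o(\Delta t)$ one gets $E(\bigcup_{\tau=0}^{t}\tx(\tau)) - E(\bigcup_{\tau=0}^{t+1}\tx(\tau)) \le M_0\delta - \int \ell'(\fhat)\cdot\partial_{\weight}^1\fhat\cdot\Delta\weight\, d\mu - \int \ell'(\fhat)\cdot\partial_{\psi}^1\fhat\cdot\Delta\psi\, d\mu$, and the paper then chooses the \emph{positive}-signed updates $\Delta\weight = \ell'(\fhat)\cdot\partial_{\weight}^1\fhat$ and $\Delta\psi = \ell'(\fhat)\cdot\partial_{\psi}^1\fhat$ so that each remaining term becomes $-\int\big(\ell'(\fhat)\cdot\partial^1\fhat\big)^2 d\mu$. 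You instead take $\Delta\weight = -\ell'(\fhat)\cdot\partial_{\weight}^1\fhat$ and $\Delta\psi = -\ell'(\fhat)\cdot\partial_{\psi}^1\fhat$ \emph{and} attach the resulting minus sign to the integrands themselves; once these are pushed through the sign flip that defines $J(\tx(\tau))-J(\tx(\tau+1))$, the squared integrals enter the upper bound with a $+$ sign, i.e.\ you obtain $\sum_{\tau}\big(J(\tx(\tau))-J(\tx(\tau+1))\big) \le \sum_{\tau}\big(M_0\delta + \int(\cdots)^2 d\mu + \int(\cdots)^2 d\mu\big)$, which is strictly weaker than and does not imply the stated bound. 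The fix is immediate — use the positive-signed updates, since the minus you need is already supplied by the rearrangement — but as written the derivation would fail at this step.
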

\begin{proof}
    The proof is  in Section~\ref{sec:upper_arch}.
\end{proof}

The following result tells us that in the case of dissimilar tasks and with dissimilar loss values ($\sum_{\tau = t}^{T} \left(J(\tx(\tau)) - J(\tx(\tau+1))\right)\geq \varepsilon)$ for a fixed architecture across tasks, we can learn the architecture at each task such that we can acquire similar loss values tasks ($\sum_{\tau = t}^{T} \left(J(\tx(\tau)) - J(\tx(\tau+1))\right) < \varepsilon).$ 
\begin{theorem}\label{thm:intersection}
    Suppose $\tx(t)$ and $\tx(t+\Delta t)$ are two consecutive tasks in the measure space $(\overline{\tx}, \mathcal{B}(\overline{\tx}),\mu).$ Set $M_0$ to be the upper bound on the loss function $\ell$ across all tasks. Moreover, let $\varepsilon>0$ such that
    $\sum_{\tau = t}^{T} \left(J(\tx(\tau)) - J(\tx(\tau+1))\right) \geq \varepsilon$ and assume $\mu\left( \bigcup_{\tau = 0}^{t}\tx(\tau)\Delta \bigcup_{\tau = 0}^{t+1}\tx(\tau)\right)\geq \delta$ for all $t\in [0,T].$ If for a fixed architecture $\psi^*$ for all $t\in [0,T]$
    \begin{align*}
        \sum_{\tau = t}^{T} \left(J(\weight(\tau), \psi^*, \tx(\tau)) - J(\weight(\tau+1), \psi^*, \tx(\tau+1))\right) & \geq \varepsilon,
    \end{align*}
    then we can choose a new architecture $\psi(t)$ for $[0,T]$ such that we can have
    \begin{align*}
        \sum_{\tau = t}^{T} \left(J(\weight(\tau), \psi^*, \tx(\tau)) - J(\weight(\tau+1), \psi(\tau), \tx(\tau+1))\right) & < \varepsilon.
    \end{align*}
\end{theorem}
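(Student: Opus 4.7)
The plan is to contrast the two upper bounds already established in Lemmas~\ref{lem:upper_weight} and~\ref{lem:upper_arch}. Lemma~\ref{lem:upper_weight} bounds the telescoped forgetting gap under a fixed architecture $\psi^{*}$ by
\[
\sum_{\tau=t}^{T}\Big( M_{0}\delta-\int_{\bigcup_{\nu=0}^{\tau}\tx(\nu)}\bigl(\ell'(\hat{f}(\weight,\psi^{*}))\cdot \partial^{1}_{\weight}\hat{f}(\weight,\psi^{*})\bigr)^{2}d\mu\Big),
\]
while Lemma~\ref{lem:upper_arch} allows architecture variation and gives the strictly sharper bound obtained by subtracting an additional non-positive term $-\int(\ell'\cdot \partial^{1}_{\psi}\hat{f})^{2}d\mu$ at each $\tau$. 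The hypothesis that the fixed-architecture sum is already $\geq \varepsilon$ says the fixed-$\psi^{*}$ bound is not tight enough to close the forgetting gap; the free parameter in the sharper bound is precisely the architectural gradient, which I will exploit.

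First, I would instantiate Lemma~\ref{lem:upper_arch} on the telescoped sum $\sum_{\tau=t}^{T}\bigl(J(\weight(\tau),\psi^{*},\tx(\tau))-J(\weight(\tau+1),\psi(\tau),\tx(\tau+1))\bigr)$, obtaining the bound
\[
\sum_{\tau=t}^{T}\Big( M_{0}\delta-\!\!\int_{\bigcup_{\nu=0}^{\tau}\tx(\nu)}\!(\ell'\partial^{1}_{\weight}\hat{f})^{2}d\mu-\!\!\int_{\bigcup_{\nu=0}^{\tau}\tx(\nu)}\!(\ell'\partial^{1}_{\psi}\hat{f})^{2}d\mu\Big).
\]
Next, I would denote by $G_{\weight}(\tau)$ and $G_{\psi}(\tau)$ the weight and architecture integral terms, respectively. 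Since by hypothesis the fixed-architecture bound is at least $\varepsilon$, writing $\Sigma_{\weight}:=\sum_{\tau}G_{\weight}(\tau)$ we have $(T-t+1)M_{0}\delta-\Sigma_{\weight}\geq \varepsilon$. The variable-architecture bound therefore becomes $(T-t+1)M_{0}\delta-\Sigma_{\weight}-\Sigma_{\psi}$, and the proof reduces to exhibiting a choice $\psi(t)$ making $\Sigma_{\psi}:=\sum_{\tau}G_{\psi}(\tau)$ strictly greater than $(T-t+1)M_{0}\delta-\Sigma_{\weight}-\varepsilon$. Because $G_{\psi}(\tau)=\int(\ell'\cdot \partial^{1}_{\psi}\hat{f}\cdot \Delta\psi)^{2}/(\Delta\psi)^{2}\,d\mu$-style quantity is a squared quantity depending on the admissible architectural perturbation $\Delta\psi$ through the weak derivative $\partial^{1}_{\psi}\hat{f}$, I would invoke the expressivity of $\sobolev$ (Definition~\ref{defn:sobo}, Table~\ref{tab:Sobolev_acti}) to choose $\psi(t)\in\Psi$ whose weak derivative contribution produces a $G_{\psi}(\tau)$ large enough; concretely, at each $\tau$ pick $\psi(\tau)$ so that $G_{\psi}(\tau)$ exceeds $M_{0}\delta-G_{\weight}(\tau)-\varepsilon/(T-t+1)$, which is always possible as long as the architectural search space $\Psi$ is rich enough to modulate $\partial^{1}_{\psi}\hat{f}$. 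Summing these per-task choices collapses the overall bound below $\varepsilon$, which is the desired conclusion.

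The main obstacle will be making rigorous the claim that $\Psi$ admits a $\psi(\tau)$ producing an architectural gradient with a sufficiently large squared $L^{p}$ mass. Since $\psi$ ranges over discrete (number of neurons, layers) and categorical choices, $\partial^{1}_{\psi}\hat{f}$ is interpreted only in the weak sense discussed after Definition~\ref{defn:NN}; I would have to assume, or explicitly add as a hypothesis, that $\Psi$ is rich enough that for every $\tau$ there is a reachable $\psi(\tau)$ whose weak architectural derivative is nonzero on a set of positive $\mu$-measure inside $\bigcup_{\nu\leq\tau}\tx(\nu)$. Under that expressivity assumption the summation argument is routine (telescoping plus a pointwise-in-$\tau$ construction), but without it the theorem would only hold vacuously. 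I would also need to confirm that the $\psi(\tau)$ chosen to inflate $G_{\psi}(\tau)$ remains admissible in the upper-level \eqref{eq:arch} problem, which amounts to observing that the proof only requires existence in $\Psi$, not optimality.
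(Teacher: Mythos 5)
Your proposal follows essentially the same route as the paper: it contrasts the weights-only bound of Lemma~\ref{lem:upper_weight} with the architecture-inclusive bound of Lemma~\ref{lem:upper_arch} and exploits the extra non-negative squared term $\int\bigl(\ell'\cdot\partial^{1}_{\psi}\hat{f}\bigr)^{2}d\mu$ to argue the telescoped forgetting gap can be pushed below $\varepsilon$. The expressivity gap you flag---that one must additionally assume $\Psi$ is rich enough for the architectural term to be made sufficiently large---is real, but the paper's own proof glosses over exactly the same point with the phrase ``by changing the architecture we may be able to achieve,'' so your version is, if anything, more candid about what the argument actually establishes.
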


\begin{proof}
    Let $\varepsilon >0$, and choose $\delta$ $J$ to be absolutely continuous. As
    \begin{align*}
        \displaystyle\int_{\bigcup_{\nu = 0}^\tau\tx(\nu)} \left(\ell'(\hat{f}(\weight,\psi))\cdot \partial_{\psi}^1 \hat{f}(\weight,\psi)\right)^2 & >0,
    \end{align*}
    notice that
    \begin{align*}
         \sum_{\tau = t}^{T} \left(J(\tx(\tau)) - J(\tx(\tau+1))\right)
        & \leq \sum_{\tau =t}^{T} \Big( M_0\cdot \delta - \displaystyle\int_{\bigcup_{\nu = 0}^\tau\tx(\nu)} \left(\ell'(\hat{f}(\weight,\psi))\cdot \partial_{\weight}^1 \hat{f}(\weight,\psi)\right)^2 \hspace{1mm}  d\mu\\
        & -\displaystyle\int_{\bigcup_{\nu = 0}^\tau\tx(\nu)} \left(\ell'(\hat{f}(\weight,\psi))\cdot \partial_{\psi}^1 \hat{f}(\weight,\psi)\right)^2 \hspace{1mm}  d\mu \Big)\\
        & \leq \sum_{\tau =t}^{T} \Big( M_0\cdot \delta - \displaystyle\int_{\bigcup_{\nu = 0}^\tau\tx(\nu)} \left(\ell'(\hat{f}(\weight,\psi))\cdot \partial_{\weight}^1 \hat{f}(\weight,\psi)\right)^2 \hspace{1mm}  d\mu \Big).
    \end{align*}
    Thus, by changing the architecture we may be able to achieve
    \begin{align*}
         \sum_{\tau = t}^{T} \left(J(\tx(\tau)) - J(\tx(\tau+1))\right)
        & \leq \sum_{\tau =t}^{T} \Big( M_0\cdot \delta - \displaystyle\int_{\bigcup_{\nu = 0}^\tau\tx(\nu)} \left(\ell'(\hat{f}(\weight,\psi))\cdot \partial_{\weight}^1 \hat{f}(\weight,\psi)\right)^2 \hspace{1mm}  d\mu\\
        & -\displaystyle\int_{\bigcup_{\nu = 0}^\tau\tx(\nu)} \left(\ell'(\hat{f}(\weight,\psi))\cdot \partial_{\psi}^1 \hat{f}(\weight,\psi)\right)^2 \hspace{1mm}  d\mu \Big)\\
        & <\varepsilon\\
        & \leq \sum_{\tau =t}^{T} \Big( M_0\cdot \delta - \displaystyle\int_{\bigcup_{\nu = 0}^\tau\tx(\nu)} \left(\ell'(\hat{f}(\weight,\psi))\cdot \partial_{\weight}^1 \hat{f}(\weight,\psi)\right)^2 \hspace{1mm}  d\mu \Big).
        \end{align*}
    Thus, we have reduced the difference in loss values below the $\varepsilon$ threshold to gain similar loss values when we began with dissimilar tasks.
\end{proof}
The implications of Theorem~\ref{thm:intersection} indicate that it is possible to shrink the effect of the new task on the forgetting cost $J$ by changing the architecture of the network by pushing $\varepsilon$ closer to zero. The closer $\varepsilon$ is to zero, the more the cardinality of the intersection space and the more the task sets are connected. Therefore, upholding the absolute continuity of $J$ is feasible. 

It is now of interest to identify how to change the architecture and efficiently learn over a series of tasks. In particular, we must examine to what extent we can control the effect of architecture change on the CL problem.

\begin{figure}
    \centering
    \includegraphics[width=\linewidth]{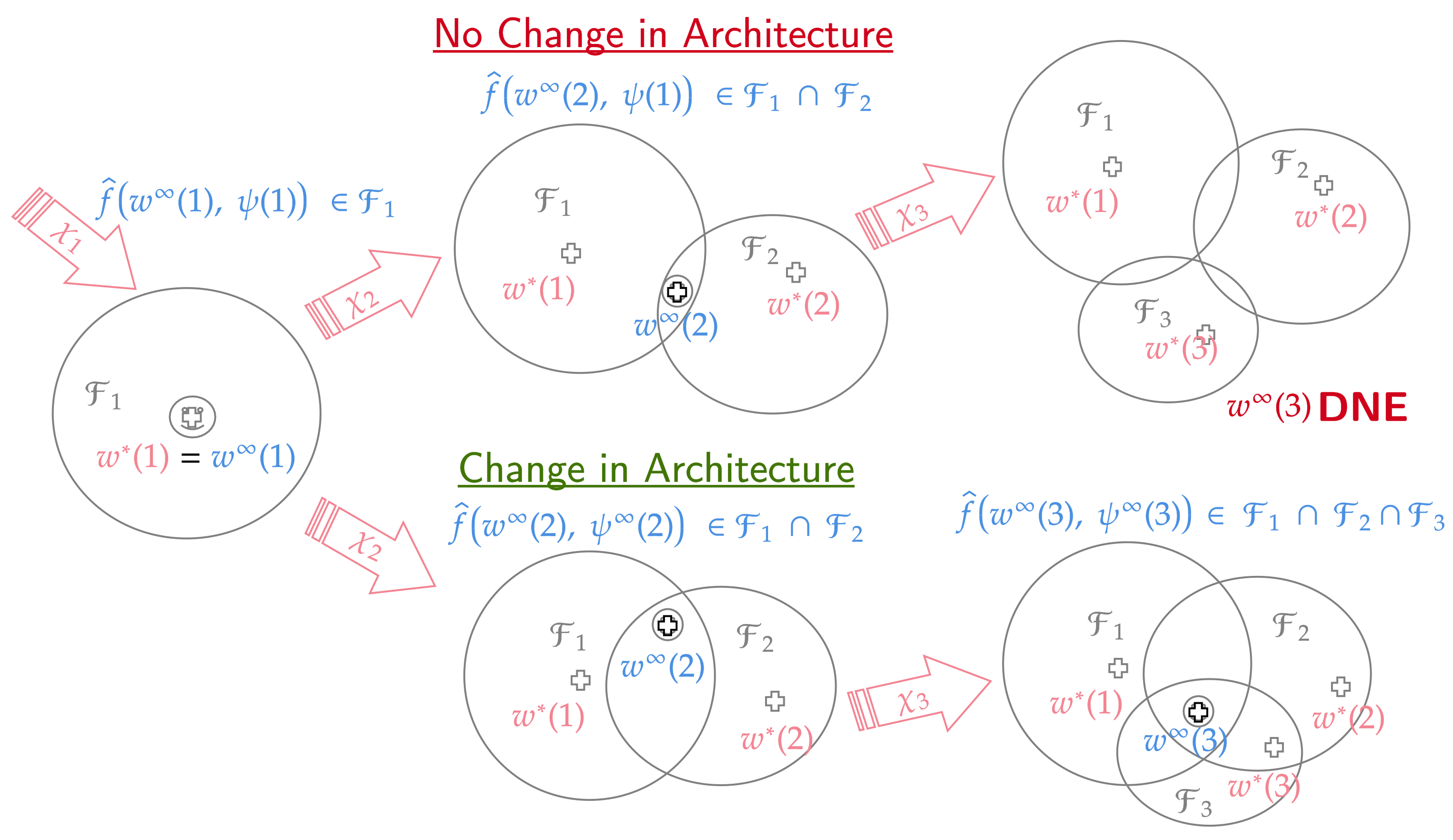}
    \caption{CL solution, where we change the size of the intersection space by introducing more capacity, through choosing novel hyperparameters}
    \label{fig:solution}
\end{figure}
\section{CL Solution}
To understand how to change the architecture of the model and learn over a series of tasks within the CL paradigm, we consider Figure~\ref{fig:solution}. We begin at task $t = 1$ and determine an optimal weight denoted $w^*(1) = w^\infty(1).$ This gives us a neural network solution in $\mathcal{F}_1.$ Then, as the next task is observed, we perform a derivative-free hyperparameter search, in particular, the directional direct search method~\cite{larson2019derivative}, to obtain a new architecture. The solution to this search is denoted $\psi^\infty(2)$ as in  Figure ~\ref{fig:solution}. Notably,  a new quandary arises since the new architecture introduces a parameter space that may have a different size from the one used for the previous task. Unfortunately, it is not feasible to trivially transfer information from the previous architecture to the new one. The common and state-of-the-art solution to this problem is to randomly re-initialize the parameters in the new architecture from retrain on all available tasks, which is resource heavy as well as impractical. Instead, we develop a low-rank transfer algorithm that seeks to transfer information between the previous architecture to the new architecture in an efficient way. Mathematically, our goal is to solve the following bi-level problem:
\begin{align}\label{eq:bilevel} \tag{bi-level}
     V^*(t,\weight(t))  = \min_{\weight \in \mathcal{W}(\psi^*(t))} V(t,\weight(t)),\quad \text{ where}\; \;  \psi^*(t)  = \mathrm{arg}\min_{\psi\in \Psi} J(\weight(t),\psi,\tx(t)).
\end{align}
Noting that the weight search space at each $t$ is a function of the optimal architecture, we change the weights from $t\rightarrow t+\Delta~t$ as 
\begin{align}
    \weight(t+\Delta~t) = \mathrm{A}^*(t) \weight(t) \mathrm{B}(t)^{*T},
\end{align}
where $\mathrm{A}$ and $\mathrm{B}$ are matrices that enable the transfer of information between two different parameter spaces and where $\mathrm{A}^*, \mathrm{B}^*$ are the optimal values such that loss of information is minimal. This change in the weight space for each new task introduces dynamics into the continual learning problem defined by $V^*.$ The following result therefore provides the total variation~(the dynamics) in $V^*$ as a function of tasks.
\begin{proposition}\label{HJB}
    The total variation in $V^*(t)$ at any given task $t$ is given by
    \begin{align}\label{eq:HJB}
        -\partials_t V^*(t)& = \min_{\weight \in \mathcal{W}(\psi^*(t))} J(\weight(t),\psi^*(t),\tx(t)) + \partials_{\tx}d_{t} \tx+ \partials_{\weight} V^*(t) \left[\mathrm{A}^*(t)\weight(t)( \mathrm{B}^*(t))^T + u(t)\right],
    \end{align}
    where $\mathrm{A}^*(t),B^*(t)$ are optimal $\mathrm{A}(t)$ and $\mathrm{B}(t)$ for task $t$ and where $u(t)$ represents the updates made to the each weights matrix of the new dimensions. 
\end{proposition}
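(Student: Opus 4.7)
The plan is to derive the Hamilton--Jacobi--Bellman (HJB) equation by invoking Bellman's principle of optimality applied to the cumulative value $V^*(t,\weight(t))$. First I would split the integral defining $V(t,\weight(t)) = \int_t^T J(\weight(\tau),\psi^*(t),\tx(\tau))\,d\tau$ over the sub-intervals $[t,t+\Delta t]$ and $[t+\Delta t,T]$. Using the principle of optimality from \cite{bertsekas2012dynamic}, this yields the recursion
\begin{align*}
V^*(t,\weight(t)) = \min_{\weight \in \mathcal{W}(\psi^*(t))} \Bigl\{ J(\weight(t),\psi^*(t),\tx(t))\,\Delta t + V^*(t+\Delta t,\weight(t+\Delta t)) + o(\Delta t) \Bigr\}.
\end{align*}

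Next I would perform a first-order Taylor expansion of $V^*(t+\Delta t,\weight(t+\Delta t))$ around the point $(t,\weight(t),\tx(t))$. Because we work in $\sobolev$ (Definition~\ref{defn:sobo}), the weak derivatives $\partials_t V^*$, $\partials_{\weight} V^*$, and $\partials_{\tx} V^*$ exist in the required sense, giving
\begin{align*}
V^*(t+\Delta t) &= V^*(t) + \partials_t V^*(t)\,\Delta t + \partials_{\weight} V^*(t)\,(\weight(t+\Delta t)-\weight(t)) \\
&\quad + \partials_{\tx} V^*(t)\,(\tx(t+\Delta t)-\tx(t)) + o(\Delta t).
\end{align*}
I would then substitute the low-rank transfer dynamics $\weight(t+\Delta t) = \mathrm{A}^*(t)\weight(t)(\mathrm{B}^*(t))^T + u(t)\Delta t$ for the weight increment, together with the task-evolution dynamics $\tx(t+\Delta t)-\tx(t) = d_t \tx \cdot \Delta t$, and collect the terms that are first order in $\Delta t$.

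Substituting the Taylor expansion into the Bellman recursion, cancelling $V^*(t)$ from both sides, dividing by $\Delta t$, and letting $\Delta t \to 0$ produces
\begin{align*}
0 = \min_{\weight \in \mathcal{W}(\psi^*(t))} \Bigl\{ J(\weight(t),\psi^*(t),\tx(t)) + \partials_t V^*(t) + \partials_{\tx} V^*(t)\,d_t \tx + \partials_{\weight} V^*(t)\bigl[\mathrm{A}^*(t)\weight(t)(\mathrm{B}^*(t))^T + u(t)\bigr] \Bigr\},
\end{align*}
from which the stated identity follows by moving $\partials_t V^*(t)$ to the left-hand side (noting that only $J$ and the $u(t)$ term inside the bracket actually depend on the minimization variable $\weight$ through the admissible set, while $\partials_t V^*$ is independent of it).

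The main obstacle will be handling the Taylor expansion of $V^*$ in $\weight$ \emph{across} the low-rank transfer step: when $\psi^*(t+\Delta t)\ne \psi^*(t)$, the vector $\weight(t+\Delta t)$ formally lives in a parameter space of different dimension than $\weight(t)$, so the increment $\weight(t+\Delta t)-\weight(t)$ must be interpreted after the canonical embedding $\weight(t)\mapsto \mathrm{A}^*(t)\weight(t)(\mathrm{B}^*(t))^T$ into the new parameter space. The Sobolev framework of Definition~\ref{defn:sobo} legitimises this by guaranteeing that $V^*$ admits weak derivatives with respect to the lifted weight variable; once this identification is fixed, the remaining manipulations are routine dynamic-programming algebra and the cancellation of $o(\Delta t)$ terms.
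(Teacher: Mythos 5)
Your proposal follows essentially the same route as the paper's proof: split the value integral via Bellman's principle over $[t,t+\Delta t]$ and $[t+\Delta t,T]$, Taylor-expand $V^*(t+\Delta t)$ to first order in $t$, $\tx$, and $\weight$, substitute the low-rank transfer dynamics $\mathrm{A}^*(t)\weight(t)(\mathrm{B}^*(t))^T + u(t)$ for the weight increment, cancel $V^*(t)$, and divide by $\Delta t$. Your additional remark on interpreting the weight increment across parameter spaces of different dimension is a reasonable clarification of a point the paper leaves implicit, but the argument itself is the same.
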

\begin{proof}
    The proof is in Section~\ref{sec:HJB}.
\end{proof}

Observe that \eqref{eq:HJB} is missing the term $\frac{\partials V^*}{\partials \psi}\frac{\partials \psi}{\partials t}.$ This omission is intentional because the exact effects of architecture cannot be measured through derivatives in this partial differential equation (PDE). Instead, in this PDE the effect is absorbed through the change in tensors $\mathrm{A}(t)$ and $\mathrm{B}(t)$.
Because of this absorption we can now define a lower bound on $\partials_{t} V^*(t).$ This lower bound, which is essentially a lower bound on the variation, is equivalent to the lower bound on $\varepsilon$ in Theorem~\ref{thm:intersection} with the distinction that the lower bound is derived on $V^*$ instead of just the forgetting cost $J.$ The idea covers the effects of the dynamic program. We derive this bound below.
\begin{theorem}\label{thm:lower_HJB}
Given proposition~\ref{HJB}, assume there exists a stochastic-gradient-based optimization procedure,  and choose $ u(t) =- \sum^{I} \alpha(i) \mathrm{g}^{(i)},$ where $I$ is the number of updates and $\alpha(i)$ is some learning rate and $\mathrm{g}^{(i)}$ is the gradient. Choose $\alpha(i)$ such that $\norm{\mathrm{g}_{\mathrm{MIN} }}  \norm{\sum^{I} \alpha(i) } \rightarrow 0$ as $I \rightarrow \infty$ and $\min_{\weight \in \mathcal{W}(\psi^*(t))} J(\weight(t),\psi^*(t),\tx(t)) \leq \varepsilon.$ Then, as long as the architecture is chosen such that $   \sing \delta = \left[ \singmin \norm{\mathrm{g}_{\mathrm{MIN} }}  \norm{\sum^{I} \alpha(i) } - \singw \norm{\mathrm{A}^*(t)\weight(t)( \mathrm{B}^*(t))^T} \right],$  the total variance in $V^*$ is bounded by $\varepsilon.$ 
\end{theorem}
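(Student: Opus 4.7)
My plan is to start from the HJB-type variational equation \eqref{eq:HJB} established in Proposition~\ref{HJB}, take absolute values of both sides, and then control each of the four terms on the right-hand side separately using the singular-value quantities $\sing,\singw,\singmin$ that appear in the stated condition. Concretely, I would rewrite
\begin{align*}
-\partials_t V^*(t) &= \min_{\weight \in \mathcal{W}(\psi^*(t))} J(\weight(t),\psi^*(t),\tx(t))
+ \partials_{\tx} d_t \tx \\
&\quad + \partials_{\weight} V^*(t)\,\mathrm{A}^*(t)\weight(t)(\mathrm{B}^*(t))^T
+ \partials_{\weight} V^*(t)\, u(t),
\end{align*}
so that the first term is already $\leq \varepsilon$ by hypothesis and only the last three ``drift'' terms have to be shown to cancel in magnitude.

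The second step is to attach a singular-value bound to each drift term. For the data-drift contribution $\partials_{\tx} d_t \tx$, I would argue, as in the proof of Lemma~\ref{lem:taylor} and using the hypothesis $\mu(\cdots\bigtriangleup\cdots)\geq\delta$ that controls the incremental data term, that $|\partials_{\tx} d_t \tx| \leq \sing \delta$, where $\sing$ is the largest singular value of the data operator. For the weight-transfer term I would use the operator inequality $|\partials_{\weight} V^*(t)\,\mathrm{A}^*(t)\weight(t)(\mathrm{B}^*(t))^T| \leq \singw\,\|\mathrm{A}^*(t)\weight(t)(\mathrm{B}^*(t))^T\|$. Finally, since $u(t) = -\sum^{I}\alpha(i)\,\mathrm{g}^{(i)}$ is a stochastic-gradient descent direction, the product $\partials_{\weight} V^*(t)\,u(t)$ produces a guaranteed \emph{reduction} whose magnitude is lower-bounded by $\singmin\,\|\mathrm{g}_{\mathrm{MIN}}\|\,\|\sum^{I}\alpha(i)\|$ via the minimum-singular-value inequality applied to the aligned descent direction.

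Combining these three bounds and inserting the exact algebraic identity
\[
\sing \delta = \singmin \,\|\mathrm{g}_{\mathrm{MIN}}\|\,\Bigl\|\sum^{I} \alpha(i)\Bigr\| - \singw \,\|\mathrm{A}^*(t)\weight(t)(\mathrm{B}^*(t))^T\|
\]
assumed on the architecture, the drift contributions collapse: the positive pieces $\sing\delta + \singw\,\|\mathrm{A}^*\weight\mathrm{B}^{*T}\|$ are exactly absorbed by the negative descent piece $\singmin\,\|\mathrm{g}_{\mathrm{MIN}}\|\,\|\sum\alpha(i)\|$. What survives is therefore $|\partials_t V^*(t)|\leq \min_{\weight}J(\weight(t),\psi^*(t),\tx(t))\leq \varepsilon$, which is the claim. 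The limiting statement ``$\|\mathrm{g}_{\mathrm{MIN}}\|\,\|\sum\alpha(i)\|\to 0$ as $I\to\infty$'' is then used to argue that the matched pair $\sing\delta$ and $\singw\,\|\mathrm{A}^*\weight\mathrm{B}^{*T}\|$ both shrink consistently, i.e., the architecture condition is asymptotically realizable.

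The main obstacle I anticipate is not the algebra but the signed/unsigned accounting in the two inequalities of opposite direction: I must upper-bound $|\partials_{\weight}V^*\cdot\mathrm{A}^*\weight\mathrm{B}^{*T}|$ via $\singw$ while simultaneously \emph{lower}-bounding $|\partials_{\weight}V^*\cdot u(t)|$ via $\singmin$ in order for the cancellation to go through. Making this rigorous requires either (i) invoking alignment of $u(t)$ with $\partials_{\weight}V^*$ (which is natural since $u$ is an SGD update on the same $V^*$-related cost) or (ii) restating the bound in terms of the worst-case direction so that the inequality is preserved. I would handle this by appealing to the minimum-singular-value characterization $\|Mx\|\geq\singmin\|x\|$ for the descent step and by noting that the architecture identity is stated as an equality precisely so that the two bounds are pinned together, leaving no slack to exploit.
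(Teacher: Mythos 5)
Your proposal follows essentially the same route as the paper's own proof: bound the $J$ term by $\varepsilon$, bound the data-drift term by $\sing\delta$, upper-bound the transfer term by $\singw \norm{\mathrm{A}^*(t)\weight(t)(\mathrm{B}^*(t))^T}$, lower-bound the descent term by $\singmin\norm{\mathrm{g}_{\mathrm{MIN}}}\norm{\sum^{I}\alpha(i)}$, and then impose the stated architecture equality so that the bracketed drift terms vanish, leaving the total variation bounded by $\varepsilon$. The signed/unsigned accounting issue you flag (mixing a $\singw$ upper bound with a $\singmin$ lower bound in the same inequality) is also present, and left implicit, in the paper's argument, so your treatment is faithful to it.
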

\begin{proof}
    The proof is in Section~\ref{sec:lower_HJB}.
\end{proof}
Therefore, even when $\alpha(i)$ is chosen such that $\norm{\mathrm{g}_{\mathrm{MIN} }}  \norm{\sum^{I} \alpha(i) } $ as $I \rightarrow \infty,$ $\sing \delta$ can be countered by 
$\singw \norm{\mathrm{A}^*(t)\weight(t)( \mathrm{B}^*(t))^T}$, and this bound is tunable because of the change in the architecture. Therefore, as long as $\singw \norm{\mathrm{A}^*(t)\weight(t)( \mathrm{B}^*(t))^T} = \sing \delta$, the total variance is bounded for all $t$. With the presence of bounded variance across all $t$, the sum of these variances is bounded. Therefore, it is clear that the $V^*$ is absolutely continuous as long as $\varepsilon$ is small for all $\delta >0.$
\begin{remark}
    Theorem~\ref{thm:lower_HJB} is crucial because it shows that, in the presence of a perfect stochastic gradient algorithm, the change in the tasks can be countered by the choice of the architecture. By choosing a new architecture and introducing the $A$ and $B$ matrices such that within the CL problem the effect of varying data distribution can be mitigated,  the CL problem can be acceptably solved.
\end{remark}

Given this guarantee, the rest of this section is dedicated to constructing an algorithm such that the condition to ensure this bounded variance is satisfied. In summary~(refer to Figure~\ref{fig:method}), for every new task, we utilize an off-the-shelf algorithm to search for a new optimal architecture. Once a new architecture is found,  we complete a low-rank transfer from the previous task's optimal weights to the new task parameter space. Then, using an off-the-shelf continual learning algorithm, we train the new architecture for the new task while balancing memory on all the prior tasks.  In the process of low-rank transfer, we guarantee transfer of learning by requiring that the $\partials_{t} \ell \leq \eta$, where $\eta$ is the acceptable tolerance on the loss value. Such tolerance allows the intersection of the neural network search spaces between subsequent tasks to increase such that the loss is small. Thus, our algorithm comprises three components: step 1 --  architecture search, step 2 --  low-rank transfer, and step 3 --  continual learning. While step 3 is directly adapted from \cite{chakraborty2025understanding, raghavan2021formalizing}, steps 1 and 2 are novel contributions for this work.


\begin{figure}
    \centering
    \includegraphics[width=1.1\linewidth]{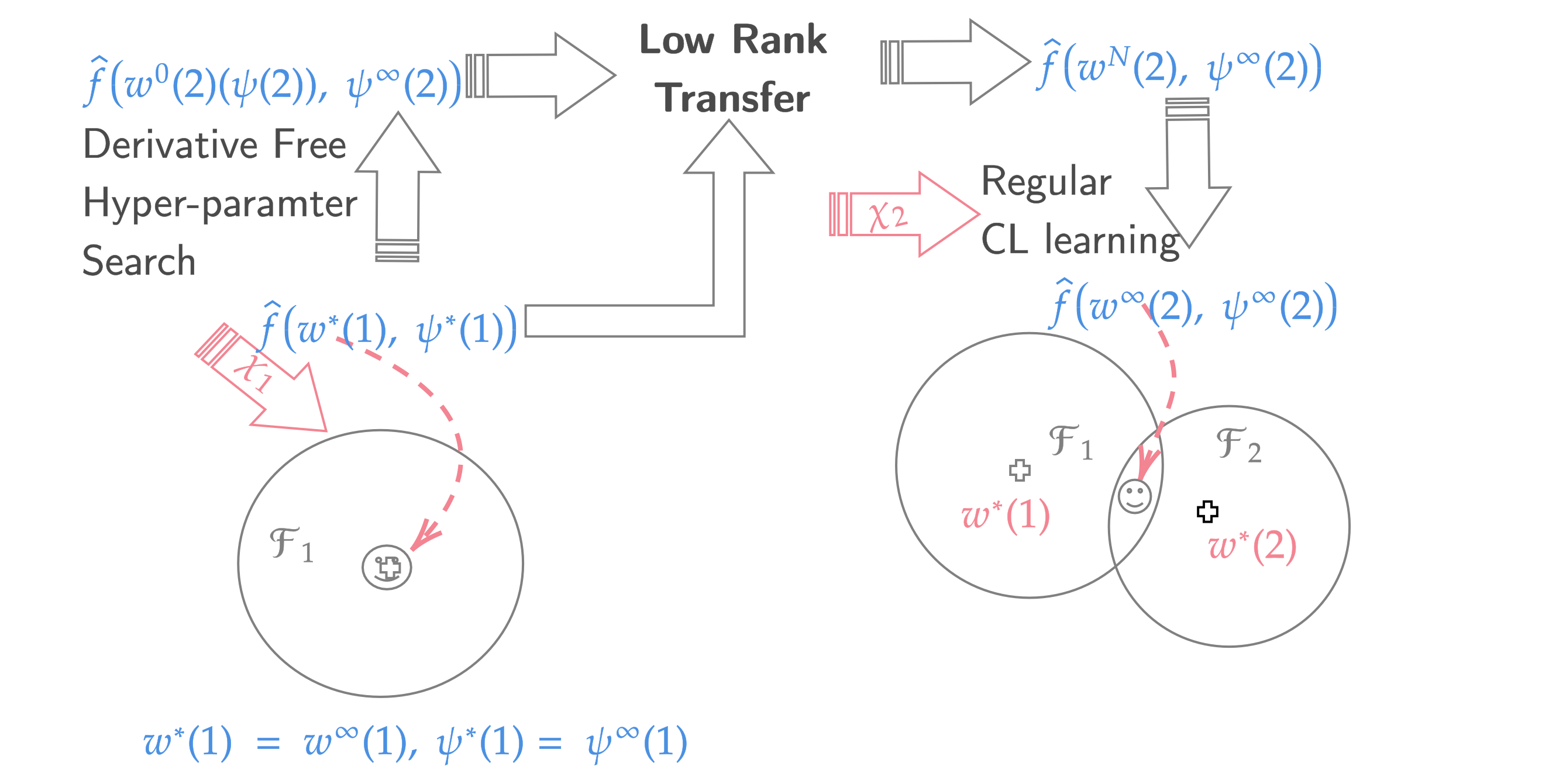}
    \caption{How we do this?}
    \label{fig:method}
\end{figure}

\subsection{\underline{Step 1:}  Architecture Search}\label{archsearch}

Although there are numerous architecture search methods that we could employ, we chose a derivative-free approach that completes a local search using finite difference approximations. This choice was made to mimic the weak derivatives available to us from viewing neural networks as functions of Sobolev spaces. As will become evident in Section \ref{HJB}, a notion of the change in the architecture is needed in order to model the dynamics and solve the lower optimization. We call the search method used a neighborhood directional direct-search (NDDS). The standard directional direct-search method is described in the survey in  \cite{larson2019derivative}.

The intuition behind NDDS is that we check ``neighboring" architectures and compare the current architecture with the expected value of the neural network when it trains on the neighboring architecture. If the neighboring architecture has a smaller expected value,
then we ``move" in that direction by selecting it as the new architecture. We then check the ``neighbors" of this new architecture. Since the architecture is a discrete variable, this search method provides a notion of a ``gradient." 

With this intuition, we now describe NDDS in depth. As we discuss this method, consult Algorithm \ref{DDSalg}. For task $t\in \tT,$ we set $\tx = \mathcal{Y}$ to be our training data and $J(\weight(t),\psi(t), \mathcal{Y})$ the expected value function. We set $x_s = \psi(t)$, and let $D_s$ be a finite set of directions. Further, we choose a ``step size" $\alpha_s\in \mathbb{N}.$ Now we generate the ``neighboring" points for $x_s$ via $D_s$ and $\alpha_s.$ We call these points \textit{poll points} and define them as follows:
\begin{align*}
    Poll Points = \{x_s + \alpha_s d_i:d_i\in D_s\}.
\end{align*}
Using a randomly selected subset of our data, denoted $\mathcal{Y}_s,$ we determine and compare the expected value of the neural network with randomly initialized weights according to each poll point architecture. If one of these poll points results in a smaller expected value than $x_s,$ we set $x_s$ equal to the poll point. We then repeat this process starting with this new architecture. This search terminates when the expected value is beneath a previously chosen threshold or after five times of repeating. In practice, we chose the threshold to be a percentage lower than the expected value produced by the original architecture.

Let us walk through an example. Suppose we would like to learn the optimal number of neurons per layer in our hidden layers for training a feedforward neural network (FNN) on a dataset. Because of the data, suppose the input layer is fixed at $784$ neurons and the output layer is fixed at $10$ neurons per layer. Let us fix the number of hidden layers at $2$ beginning with $50$ neurons in each hidden layer. We set $\mathcal{Y}_s$  to be a randomly chosen appropriately sized subset of task $t$ data and previous task data. Our direction set is $D_s = D = \{[0,0,10,0],[0,10,0,0]\}$ and will be the same for every $s.$ We start our search by setting the step size $\alpha_s = 10.$ For the first round, we have $x_s = [784,50,50,10],$ so our poll points are $[784,60,50,10],[784,50,60,10],$ and $[784,60,60,10].$ We then train an FNN of each size on the dataset $Y_s$ and determine the corresponding expected values. If the expected value of the FNN $[784,60,50,10]$ is smaller than that of the FNN $[784,50,50,10],$ for example, then $x_s = [784,60,50,10].$ We would then continue this process until the expected value of our network was less than our set threshold or we exhausted our neighborhood search.

      

\begin{algorithm}[t]
\caption{Neighborhood Directional Direct Search (NDDS)} \label{DDSalg}
\KwIn{Initial architecture $x_s$, step size $\alpha_0 \in \mathbb{N}$}
\KwIn{Threshold parameter $\mathrm{threshold} \in \mathbb{R}$}
\KwOut{Updated architecture $x_s$}

$j \gets 0$\;
$\mathrm{loss} \gets \texttt{training\_loop}(x_s, \mathcal{Y}_s)$\;
\While{$\mathrm{loss} > \mathrm{threshold}$ \textbf{or} $j < 5$}{
    $\mathrm{pollPoints} \gets \{x_s + \alpha_s d_i : d_i \in D_s\}$\;
    \ForEach{$\mathrm{poll} \in \mathrm{pollPoints}$}{
        $\mathrm{loss}_s \gets \texttt{training\_loop}(\mathrm{poll}, \mathcal{Y}_s)$\;
        \eIf{$\mathrm{loss} \leq \mathrm{loss}_s$}{
            $x_s \gets x_s$\;
        }{
            $x_s \gets \mathrm{poll}$\;
            $\mathrm{loss} \gets \mathrm{loss}_s$\;
        }
    }
    $j \gets j + 1$\;
}
\end{algorithm}

\subsection{\underline{Step 2:}  Low-Rank Transfer}\label{LRTsec}

Now that the optimal architecture (i.e., number of neurons per layer) has been determined for the current task, it is immediately obvious that  the size of the weights tensor will not match with the new architecture. Thus, we seek a method to determine a weights tensor corresponding to the new optimal architecture that retains previously learned information and transfers it. We propose a method we call low-rank transfer. 

Before we dive into the steps of this method, let us set some assumptions and notation. Recall that the only component of the architecture we seek to learn is the number of neurons per layer in our neural network. Thus, we assume our network has $d$ layers, and we fix all other architecture parameters except for the number of neurons in our hidden layers. From  Definition \ref{defn:NN},  $\psi_i(t)$  provides the dimensions for the corresponding weights matrix in each layer of our network.

In order of appearance, we assign the values of $\psi(t)$ to the values $r_i,s_i$ for each $i$ such that $1\leq i\leq d.$ If $\psi^*(t)$ represents the optimal architecture returned from the NDDS completed for task $t,$ then similarly we can assign the values of the dimensions of the weights matrices of each layer of $\psi_i^*(t)$ to be $a_i,b_i$ for $1\leq i\leq d.$ Our goal is to utilize the original weights matrices to project into the weights matrix space for the new architecture.

To begin, we initialize dimension $3$ tensors $\mathrm{A}(t),\mathrm{B}(t),$ which  each comprise  $d$ matrices. Let each matrix $\mathrm{A}_i(t)$ in $\mathrm{A}(t)$ be randomly generated with dimensions $a_i\times r_i$ for $1\leq i\leq d.$ Similarly, let each matrix $\mathrm{B}_i $ in $\mathrm{B}(t)$ be randomly generated with size $b_i\times s_i$ for $1\leq i\leq d.$ Then, set $C(t) = \mathrm{A}(t)\weight(t)\mathrm{B}^T(t).$ More specifically $C(t)$  comprises $d$ matrices such that $C_i(t) = \mathrm{A}_i(t)\weight_i(t)B^T_i(t)$ for all $1\leq i\leq d.$ Notice that the dimensions of each $C_i(t)$ are $a_i\times b_i.$ These are the dimensions of the weights matrices for a neural network with the new optimal architecture $\psi^*(t).$ See Figure \ref{LRT} to understand the construction of $C(t)$ more explicitly.
    \begin{figure}
        \centering
        \includegraphics[width=\linewidth]{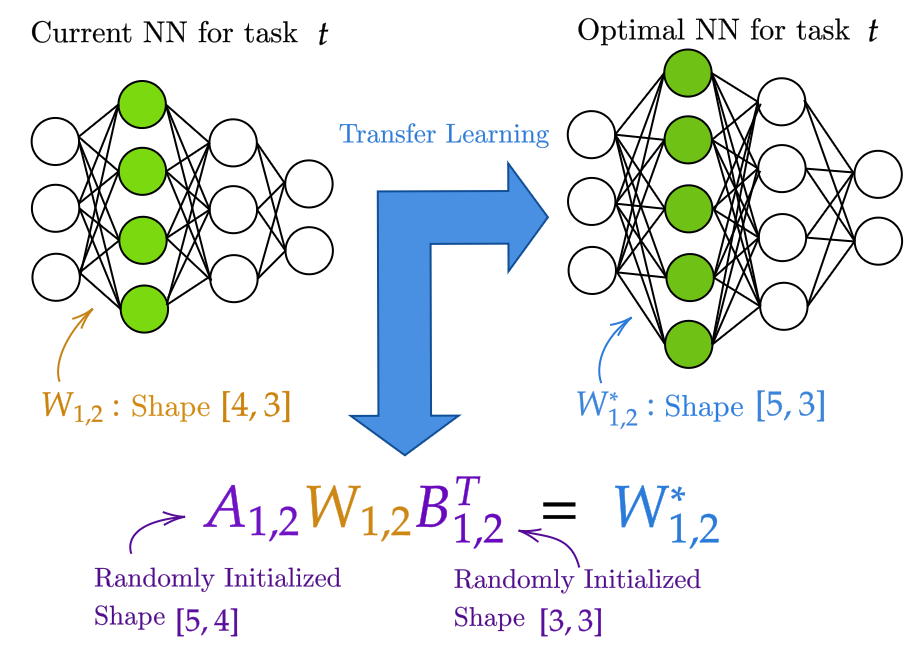}
        \caption{Method of low-rank transfer at a single layer} \label{LRT}
    \end{figure}

 To prevent loss of information from $\weight(t),$ when we make the transfer to $C(t),$ we  train only the $\mathrm{A}(t)$ and $\mathrm{B}(t)$ portions of the new weights tensor $C(t)$ on the task data for a chosen number of epochs, while freezing the $\weight(t)$ tensor. This additional training ensures a transfer of learning to the new weights corresponding to the new optimal architecture. If $C^*(t)$ represents $C(t)$ after this training on just tensors $\mathrm{A}(t)$ and $\mathrm{B}(t)$ and $\psi(t+1) = \psi^*(t),$ then we conclude our process by letting $w(t+1) = C^*(t)$ and completing the standard training of our neural network $\hat{f}(w(t+1),\psi(t+1))$ on task data. In Algorithm \ref{alg:three} we summarize each step of our method described in the preceding subsections.
 
\begin{algorithm}
\caption{Main Training Loop}\label{alg:three}
Choose $\weight(t)$ and $\psi(t)$ to begin\\
Set \texttt{epoch hyper-parameter}\\
\For{$t = 0,1,\ldots, T$}{
 \textbf{Step 1:} Standard Training of $\weight(t)$\\
    $\weight(t) \gets $\texttt{training\_loop}($\weight(t),\psi(t),$ $\tx(t)$, \texttt{epochs})\\

 \textbf{Step 2:} Architecture Search\\
    $\psi^*(t) \gets$ \texttt{NDDS}$(\psi(t),$ $\tx(t)$)\\
    
\textbf{Step 3:} Initialize $\mathrm{A}(t), \mathrm{B}(t)$\\
\For{$i = 1,...,d$}{
    $\mathrm{A}_i(t), \mathrm{B}_i  \gets$\texttt{init\_AB}($a_i,b_i,r_i,s_i)$}

 \textbf{Step 4:} Set $C(t)$\\
    \For{$i = 1,...,d$}{$C_i(t) = \mathrm{A}_i(t)\weight_i(t)B_i^T(t)$}

 \textbf{Step 5:} Fix $\weight(t),$ Train $\mathrm{A}(t),\mathrm{B}(t)$ for $\hat{f}(C(t),\psi(t+1))$\\
    $C^*(t) \gets$ \texttt{train\_AB}($C(t), \psi(t+1),$ $\tx(t)$, \texttt{epochs})\\

 \textbf{Step 6:} Set New Weights and Architecture\\
    $w(t+1) = C^*(t)$\\
    $\psi(t+1) = \psi^*(t)$\\

 \textbf{Step 7:} Standard Training on New NN
    \texttt{training\_loop}\\($\weight(t+1),\psi(t+1),$ $\tx(t)$, \texttt{epochs})
}
\end{algorithm}

\section{Related Works}
As noted in the introduction, theoretical and empirical results have proven that even in traditional machine learning, training weights of the network alone is not enough. By utilizing tools such as NAS (Neural Architecture Search), LoRA (Low-Rank Adaptation), and PEFT (Parameter-Efficient Fine Tuning), the accuracy of a trained model can readily be increased (\cite{liu2021survey}, \cite{LORAhu2022lora},  \cite{han2024parameter}). In the context of CL, however, learning optimal hyperparameters or architecture for each task poses a challenge: 
 learning of previous tasks be transferred.

Toward this end, networks with dynamic structures, such as progressive neural networks (PNGs) and dynamically expandable networks (DENs) were introduced in \cite{rusu2016progressive} and \cite{yoon2017lifelong}. 
PNNs add a layer to the neural network at each observed task \cite{rusu2016progressive}. While the model's accuracy did improve with such structure, extensive compute time is unavoidable as more tasks, and hence layers, are observed. DENs attempted to remedy the computing issues of PNNs. DENs first determine a subcollection of neurons to train on for a given task (via a metric) and then add a layer of neurons to the network where appropriate \cite{yoon2017lifelong}. This method also completes sparse regularization at each task to prune the network in an attempt to reduce compute time compared with PNNs. The use of a DEN did improve the model's accuracy. However, drawbacks such as compute time, overfitting, and transfer of learning still remained. Learning from previous tasks was not fully transferred, since the weights corresponding to the newly added network layers were randomly initialized. 

In more recent years researchers have built on the idea of DENs by developing methods that optimally choose how and when to adjust network architecture. CLEAS (Continual Learning with Efficient Architecture Search) is the first integration of NAS into the CL framework \cite{CLEASgao2022CLEAS}. This method  changes architecture only  when deemed necessary by utilizing a neuron-level NAS, rather than adding an entire layer to the neural network. Moreover, the network is pruned at each step to remove unnecessary layers and/or neurons. The algorithm touts its smaller, purposely learned network structure, which allows for reduced network complexity and hence reduced computation time. In particular, CLEAS improves model accuracy by up to $6.70\%$ and reduces network complexity by up to $51.0\%$ on the three benchmark datasets. A strategy similar to CLEAS is CAS (Continual Architecture Search)  \cite{CASpasunuru2019continual}. The premise of  adding and removing to the architecture only when deemed optimal for a given task remains the same. Rather than using NAS, however, they use what they call Efficient Neural Architecture Search, which incorporates a weight-sharing strategy. SEAL (Searching Expandable Architectures for Incremental Learning) again seeks to change  the hyperparameters and architecture of the neural network only when necessary, but the method differs from CLEAS in how it determines when to alter the network \cite{SEALgambella2025SEAL}. In particular, SEAL uses a capacity estimation metric to make such a decision.

The previous methods all sought to expand the network in some fashion to increase the accuracy of the network. Now we shift to how parameter-efficient fine-tuning methods have been utilized in CL. In particular, we investigate the use of LoRA (Low-Rank Adaptation) \cite{LORAhu2022lora}. The two methods we will discuss here are intended for pretrained transformers that are now attempting to learn tasks. The CoLoRA method utilizes a new LoRA adapter for each new task to train a task expert model \cite{wistuba2023continual}. The expert model is then used to train the transformer. At the time, this method produced state-of-the-art results, but at a high computational cost. Since then, the CLoRA method has been introduced \cite{CLORAmuralidhara2025}. The key difference between these two strategies is the number of LoRA adapters. Rather than reinitializing and training a new LoRA adapter for each task, a single-adapter approach is used. This reduces the compute time, while maintaining accuracy and transferring learning.

In this paper we attempt to combine the most profitable aspects of methods such as CLEAS and CLoRA. In other words, our goal is to learn the optimal architecture for each task and utilize a low-rank transfer method to optimally transfer learning to the new architecture. The use of a LoRA-like method to easily guarantee transfer of learning from one architecture to another in standard CL has not been accomplished. Moreover, no previous algorithm has provided a theoretical mathematics framework to support the empirical data. 

\section{Experimental Results}\label{sec: experiments}
In this section we evaluate the algorithm in \ref{alg:three} and seek to answer the following questions: Does training with weights for a large number of tasks lead to a saturation as discussed in Section~\ref{sec:motivation}? Does changing the architecture actually help with learning over consecutive tasks~(Section~\ref{sec:upper_weight})? Does changing the architecture improve saturation over a series of tasks? And do these ideas carry forward to different types of neural network architectures? We examine these questions for three continual learning problems: regression, image classification, and graph classification.

\subsection{Datasets and Metrics}

\textbf{Datasets and Task Splits:} We utilize three data sets, one for each learning problem.

\begin{itemize}
    \item \textbf{Random Sine:} For the regression experiment, we utilize a randomly generated sine data set. Each task is defined using a sine function $y = a\sin(bx + \phi) + \varepsilon,$ where $a$ is the amplitude, $b$ is the frequency, $\phi$ is the phase shift and $\epsilon \sim \mathcal{N}(0, \sigma^2)$ is Gaussian noise. Inputs are sampled from $x \sim \mathcal{U}([-90, 90])$. Tasks differ by amplitude and phase. This is ideal, as it allows for distinct, but controllable, distribution shifts in data at each task.

    \item \textbf{MNIST:}~\cite{lecun1998mnist} We split MNIST into sequential tasks by grouping digit pairs depending on the number of tasks. For example, if we are to learn $5$ tasks total, Task $0$ consists of digits $0$ and $1,$ and Task 1 consists of digits $2$ and $3.$ Moreover, images are $28 \times 28$ grayscale.

    \item \textbf{Synthetic Graphs:} We generate random graphs using PyTorch Geometric's FakeDataset ~\cite{pyg}. Each graph contains an average of 15 nodes with 10-dimensional node features and belongs to one of 5 classes for graph-level classification. We emulate task distribution shifts in this setting by allowing all tasks to share the same five classes but we introduce progressive perturbations at each additional task learned. More specifically, at each task $t,$ we apply three types of perturbations: (1) Gaussian feature noise with standard deviation $\sigma_t = 0.02t$, (2) random edge dropout with probability $p_t = 0.01t$, and (3) constant feature shift of magnitude $s_t = 0.01t$. Thus, Task $0$ serves as the unperturbed baseline. This is an optimal choice for the continual learning setting, as it tests the models ability to adapt to distribution shift while retaining knowledge.
\end{itemize}

\textbf{Metrics:} We utilize three standard metrics to evaluate continual learning performance ~\cite{lopez2017gradient, chaudhry2018riemannian, diaz2018dont}. Let $R_{j,i}$ denote the performance on task $i$ after training on task $j$. For regression, $R_{j,i}$ is the mean squared error (MSE); for classification, $R_{j,i}$ is accuracy.


\begin{itemize}
    \item \textbf{Average Performance:} The mean performance across all tasks after training:
    \[
    \text{Avg} = \frac{1}{T} \sum_{i=0}^{T-1} R_{T-1,i}
    \]

    \item \textbf{Backward Transfer (BWT):} Measures how learning new tasks affects previous tasks~\cite{lopez2017gradient}.
    \[
    \text{BWT} = \frac{1}{T-1} \sum_{i=0}^{T-2} \left( R_{i,i} - R_{T-1,i} \right)
    \]
    Note that for MSE, positive BWT indicates improvement.

    \item \textbf{Forgetting (Forward Transfer FWT):} The average increase in error from the best achieved performance~\cite{chaudhry2018riemannian}:
    \[
    \text{FWT} = \frac{1}{T-1} \sum_{i=0}^{T-2} \max\left(0, R_{T-1,i} - \min_{j \leq T-1} R_{j,i}\right)
    \]
\end{itemize}

For all experiments, we train 500 epochs per task. We use MSE loss for regression and cross-entropy loss for classification.

\subsection{Experimental Conditions}
To explore and thoroughly evaluate the theoretical work we have developed, we consider four experimental conditions. Each condition allows us to test a particular aspect of our theory by isolating the effects of learning architecture and transferring knowledge. Specific implementation details about each condition is included in Section ~\ref{sec:implementdetails}, but we provide an overview of each condition below for quick reference.

\begin{itemize}
    \item \textbf{C1 (Baseline):} Standard continual learning with fixed architecture. The network architecture $\psi$ remains constant across all tasks as well as the learning rate. For this baseline comparison, we use a replay-driven continual learning approach which seeks to balance forgetting and generalization \cite{raghavan2021formalizing}, ~\cite{raghavan2023learningcontinuallysequencegraphs}. See Figure~\ref{fig:alg_conditions}(a) for a description of the algorithm.

    \item \textbf{C2 (Heuristics):} Fixed architecture with training heuristics. The architecture $\psi$ remains fixed, but we add cosine learning rate decay, task warm-up epochs (Algorithm~\ref{alg:warmup}), and adaptive gradient weights $[\alpha, \beta, \gamma]$ (Algorithm~\ref{alg:adaptive}). This condition tests whether empirical smoothing techniques alone can improve performance. See Figure~\ref{fig:alg_conditions}(b) for a full description of the algorithm.

    \item \textbf{C3 (Architecture Search Only):} Architecture search without knowledge transfer. At each task, we search for a new architecture $\psi(t+1)$ using the NDDS method (Algorithm~\ref{DDSalg}). However, once the new architecture is determined, the weights are randomly re-initialized to the corresponding sizes rather than introducing transfer matrices $A$ and $B.$ The purpose of this method is to isolate the impact of architecture change from that of knowledge transfer. See Figure~\ref{fig:alg_conditions}(c) for an algorithm description.

    \item \textbf{C4 (Knowledge Transfer via $AWB$):} This is our full approach, which combines architecture search with low-rank knowledge transfer via the  $V = A \weight B^T$. When the architecture changes, we train $A$ and $B$ matrices (Algorithm~\ref{alg:train_ab}) to transfer knowledge from the old parameter space to the new one, as described in Theorem~\ref{thm:lower_HJB}. See Figure~\ref{fig:alg_conditions}(d) for an algorithm description.
\end{itemize}

The comparison of conditions C3 and C4 are crucial. Although both of the conditions change (learn) an optimal architecture, only C4 transfers knowledge. This isolates the contribution of the knowledge transfer from architecture search alone.


\subsection{Implementation Details}\label{sec:implementdetails}

\subsubsection{Experimental Infrastructure}  
All experiments were conducted on a 14-inch MacBook Pro (2024) with Apple M4 Pro chip (14-core CPU, 20-core GPU, 24 GB unified memory) and the system ran macOS (Sequoia 15.6.1). The codebase uses JAX~\cite{jax} and Equinox~\cite{kidger2021equinox}. Experimental outputs were logged and visualized using TensorBoard. All runs used fixed random seeds for reproducibility. Code is available at \url{https://github.com/krm9c/ContLearn.git}.

\subsubsection{Experimental Conditions Implementation Details}
All conditions use a replay-driven continual learning approach, which seeks to balance forgetting and generalization demonstrated in ~\cite{raghavan2021formalizing} for regression and image classification experiments. For graph experiments, we use the similar approach outlined in ~\cite{raghavan2023learningcontinuallysequencegraphs} Note that we use an experience replay buffer size for all experiments and conditions.

\paragraph{Conditions C1 and C2: Baseline Training }

For C1, training uses constant learning rate $\eta = 10^{-4}$ with AdamW optimizer. No warm-up or adaptive mechanisms are applied.

For C2, we add three heuristics to the baseline training conditions:
\begin{itemize}
    \item \textit{Heuristic 1:} Cosine learning rate decay from $10^{-4}$ to $10^{-6}.$
    
    \item \textit{Heuristic 2:} Task warm-up for $25$ epochs at the start of each new task. The purpose of the task warm-up is train on the next task for a select number of epochs prior to adjustments in other conditions. See Algorithm~\ref{alg:warmup} for implementation.

    \item \textit{Heuristic 3:} Adaptive gradient weights $[\alpha, \beta, \gamma].$ Such weights adjust based on task transition difficulty, and are for current task, experience replay, and regularization, respectively. See Algorithm~\ref{alg:adaptive} for implementation.
\end{itemize}

\paragraph{Condition C3: Architecture Search }
At each task, we use a neighborhood direct-directional search (NDDS) method to find an optimal architecture. The search explores candidate architectures by incrementally adding neurons to hidden layers and/or adjusting the filter size. Below, we provide implementation examples of NDDS in the feedforward, CNN, and GCN settings, but see Section~\ref{archsearch} and Algorithm~\ref{DDSalg} for specific details.
\begin{itemize}
    \item \textit{Feedforward Networks:} Suppose we begin with an architecture of $[10,40,40,10]$ and our step size of our search is $4.$ Our candidates to be evaluated would be:  $[10, 45, 40, 10]$, $[10, 40, 45, 10]$, and $[10, 45, 45, 10].$ After training each candidate for $100$ epochs, the best-performing architecture (smallest loss value) will be chosen for the next round.

    \item \textit{CNNs and GCNs:} To jointly search for optimal filter size and hidden layers, a nested loop is used. For each filter size, multiple rounds of step-wise candidate evaluations are completed, akin to the feedforward network setting.
\end{itemize}

After determining the new (optimal) architecture for the given task, we must adjust the weights matrices, as the current weights will not longer match the new architecture. In C3, we choose to randomly reinitialized using Glorot uniform~\cite{GlorotB10}. The random reinitialization allows us to evaluate the effect of architecture optimization alone, without transferring previously learned information.

\paragraph{Condition C4: Knowledge Transfer via AWB }
We again use NDDS method to find optimal architecture for our given task. The search explores candidate architectures by incrementally adding neurons to hidden layers and/or adjusting the filter size. Once an optimal architecture is chosen, we must again adjust the weights matrices. In C4, we choose to transfer learned knowledge from $\weight(t)$ to $\weight(t+1)$ through the $A$ and $B$ matrices. The transformation $V = A \weight B^T$ maps parameters between architectures of different sizes, see Section~\ref{LRTsec}. 

A key component of C4 is that we allowing training of the matrices $A$ and $B,$ while freezing the matrix $\weight,$ prior to set the new weights matrix to $V = A \weight B^T.$ This is crucial as it provides tuning of the matrices $A$ and $B$ before transferring the learning in $\weight.$
To complete this process in practice, we perform the following steps:
\begin{itemize}
    \item[(1)] \textit{Freeze the weights matrix $\weight$ and initialize $A$ and $B$ matrices}: The appropriate sizes for the $A$ and $B$ matrices are chosen based on the new (optimal) architecture and the previous architecture. The goal is for $V = A \weight B^T$ to be the correct size for the new architecture. In Appendix~\ref{app: MLP AB}, we walk through an example of determining the correct sizes of $A$ and $B$ for each layer of a feedforward neural network, which is transitioning from an architecture of  $[64,128,128,10]$ to an architecture of $[64,256,256,10].$ In Appendix~\ref{app: CNN AB}, we complete an example similar example but for the CNN setting (i.e. accounting for filter and FFN transfer).
    
    \item[(2)]\textit{Train $A$ and $B$ Matrices:} Training is completed for a fixed number of iterations using Adam. See Algorithm~\ref{alg:train_ab} for implementation details.\\

    \item[(3)] \textit{Set $V = A\weight B^T$ and Train: } The weights are set to $V = A\weight B^T,$ which now corresponds to the new architecture of the network. We then the train the new weights and record necessary information. 
\end{itemize}

The transfer and training process ensures continuity of the loss function across architecture changes, as established in Theorem~\ref{thm:lower_HJB}. For a succinct description of C4, see Algorithm~\ref{alg:c4}.

\subsection{Regression Experiments}~\label{sec:reg}
In this section, we complete and analyze  three regression experiments: learning two tasks, learning ten tasks, and learning five tasks with the introduction of noise into each task. In each experiment, we compare the four experimental conditions previously discussed.

\subsubsection{Overview of Experiments}
Below, we specify the details of data, task splits, and hyperparameter choices for such experiments.

\textbf{Dataset and Task Splits:} As stated in the beginning of Section \ref{sec: experiments}, we will be utilizing a randomly generated sine dataset for our regression experiments, as it is a standard benchmark for continual learning in regression settings. Recall that each task is defined by a sine function with a distinct amplitude and/or phase, where inputs $x,$ are sampled from a fixed domain $x \sim \mathcal{U}([-90, 90]).$ This controlled variation across tasks provides a simple, yet nontrivial, setting to evaluate a model’s ability to learn sequentially and resist catastrophic forgetting.

\textbf{Hyperparameter Choices:}  For the regression problem, we utilize a standard feedforward neural network with 4 layers. Training uses AdamW~\cite{loshchilov2017fixing} with learning rate $10^{-4}$, batch size 1024, and Glorot uniform~\cite{GlorotB10} initialization. For C2,C3, and C4 we use a cosine learning rate scheduling with 25 warm-up epochs at each task transition. Gradient weights are $[\alpha, \beta, \gamma] = [0.4, 0.4, 0.1]$ for current task, experience replay, and regularization respectively. See further hyperparameter details in Tables~\ref{tab:hyperparameters} and~\ref{tab:sine_hyperparams} of Appendix~\ref{app:standard_cl}

\begin{figure}[!htb]
    \centering
    \includegraphics[width=0.95\linewidth]{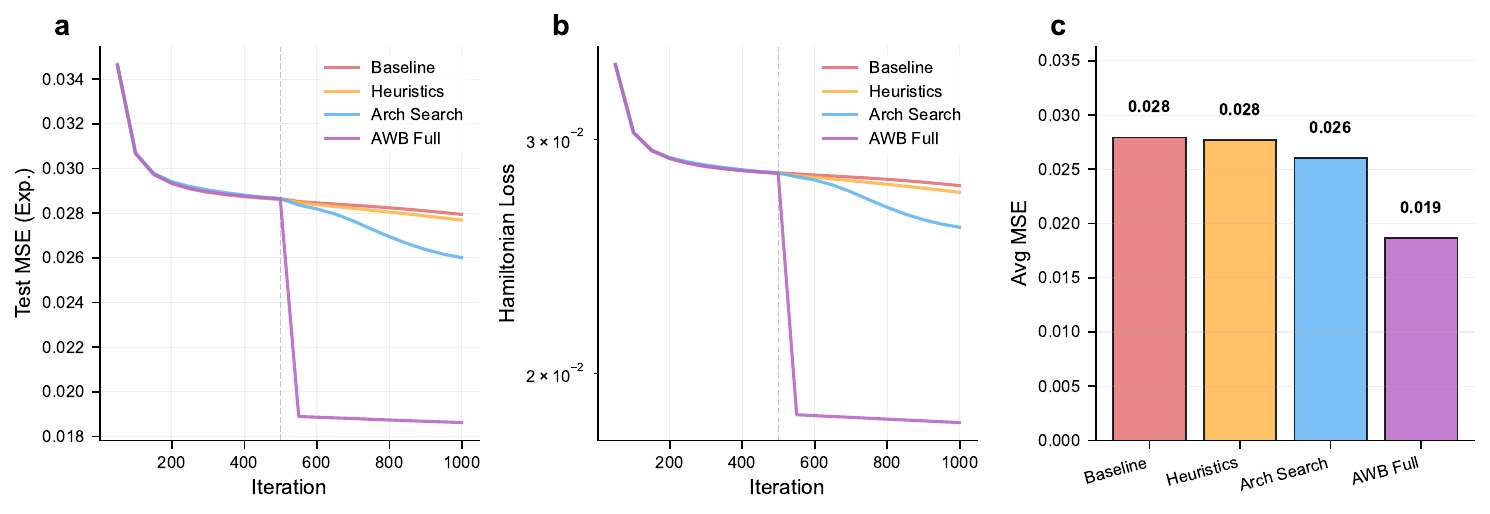}
    \caption{Sine regression (2 tasks): Comparison of conditions C1--C4. (a) Test MSE on experience replay. (b) Hamiltonian loss. (c) Average MSE showing C4 achieves 33\% improvement over baseline.}
    \label{fig:sine_2task_comprehensive}
\end{figure}

\subsubsection{Experiment 1}
 For the first experiment, we learn two tasks and train $500$ epochs on each task. Figure ~\ref{fig:sine_2task_comprehensive} provides a comprehensive comparison of all four conditions for this two task experiment. In ~\ref{fig:sine_2task_comprehensive} (b), with introduction of a new task, we observe that the model's Hamiltonian loss does not reduce under baseline CL condition (C1) or Heuristic condition (C2). This behavior suggests that, with changing tasks, the change in the weight parameters has no effect on the training and the learning stagnates. Notably, we argued in our theory, particularly in Lemma~\ref{lem:upper_weight}, that the change in the tasks can lead to a loss of continuity of the forgetting cost, leading to the conclusion that just changing the weights is not enough. 
 
To obviate this behavior, we seek to change the architecture on the fly. The architecture is changed with random reinitialization of weights in C3, and in C4 the architecture is changed with knowledge transfer of weights via transfer matrices $A$ and $B.$ In ~\ref{fig:sine_2task_comprehensive} (b) and (c), we observe that while the architecture search alone (C3) reduces the Hamiltonian loss compared to the baseline (C1) or Heuristics (C2) by 7\%, 
with the addition of the $AWB$ transfer (C4), we achieve a 33\% improvement over the baseline (C1). 

We note that the transfer between parameter spaces completed in C4 not only counters the effect of changing tasks, as proved in Theorem~\ref{thm:lower_HJB}, but also intuitively provides a better starting point for learning the new tasks. The difference is that learning behavior between the baseline (C1) and $AWB$ Full (C4) implies that the conclusions from Lemmas~\ref{lem:upper_arch} and \ref{lem:upper_weight} are indeed valid, where change in the architecture lowers the upper bound indicated by the lower loss values. All conclusions from Figure~\ref{fig:sine_2task_comprehensive} (b) extend to Figure \ref{fig:sine_2task_comprehensive} (c), where the test MSE values are graphed.

\subsubsection{Experiment 2}
For the second experiment, we seek to determine whether  the performance from the previous experiment extends to a larger number of tasks. In this process, we learn $10$ tasks of data and train $500$ epochs on each task. To ensure statistical reliability we run this $10$ task experiment with three random seeds. The results and an extensive comparison of the four conditions is provided in Figure~\ref{fig:sine_10task_comprehensive} and Table~\ref{tab:sine_cl_metrics}. In Table~\ref{tab:sine_cl_metrics}, we observe that AWB Full (C4) achieves a 39\% reduction in average MSE compared to the baseline (C1). Additionally, we see a 37\% reduction in forgetting over the baseline (0.0144 vs. 0.0227) and an increase in backwards transfer. More importantly though, AWB Full (C4) achieves a 37\% improvement in average MSE and 35\% improvement in forgetting in comparison to architecture search alone without knowledge transfer (C3). This is crucial, as we again observe the significant impact of transferring knowledge over architecture search alone, and we validate the theory we developed. These concludes are also reflected in the test MSE displayed in Figure~\ref{fig:sine_10task_comprehensive} (a), and the gradient norm evolution in (c) indicates training stability.

In Figure~\ref{fig:sine_10task_comprehensive} (d), (e), and (f), error bands ($\pm$ 1 std) across the three seeds. We observe a large error bound in the AWB Full (C4) condition across panels (d), (e), and (f). This is due to the inherent instability of architecture search methods and is expected. We again note that  any architecture search approach can be introduced here, including popular  methods such as DeepHyper~\cite{Balaprakash2018DeepHyper}.

\begin{table}[!htb]
\centering
\caption{Continual learning metrics for sine regression (10 tasks, 3 seeds). Values reported as mean $\pm$ std. Best values are highlighted in \textbf{bold}. Improvement equates to reduced Avg MSE and Forgetting values and increased BWT and FWT.}
\label{tab:sine_cl_metrics}
\begin{tabular}{lcccc}
\toprule
\textbf{Condition} & \textbf{Avg MSE ($\downarrow$)} & \textbf{BWT ($\uparrow$)} & \textbf{FWT ($\uparrow$)} & \textbf{Forgetting ($\downarrow$)} \\
\midrule
C1: Baseline         & $0.0256 \pm 0.0000$ & $0.0014 \pm 0.0000$ & $0.0000 \pm 0.0000$ & $0.0227 \pm 0.0000$ \\
C2: Heuristics       & $0.0255 \pm 0.0001$ & $0.0012 \pm 0.0000$ & $0.0000 \pm 0.0000$ & $0.0226 \pm 0.0000$ \\
C3: Arch Search      & $0.0251 \pm 0.0001$ & $0.0014 \pm 0.0001$ & $0.0000 \pm 0.0000$ & $0.0223 \pm 0.0000$ \\
C4: AWB Full         & $\mathbf{0.0156 \pm 0.0027}$ & $\mathbf{0.0005 \pm 0.0018}$ & $\mathbf{0.0000 \pm 0.0000}$ & $\mathbf{0.0144 \pm 0.0026}$ \\
\bottomrule
\end{tabular}
\end{table}

\begin{figure}[!htb]
    \centering
    \includegraphics[width=0.95\linewidth]{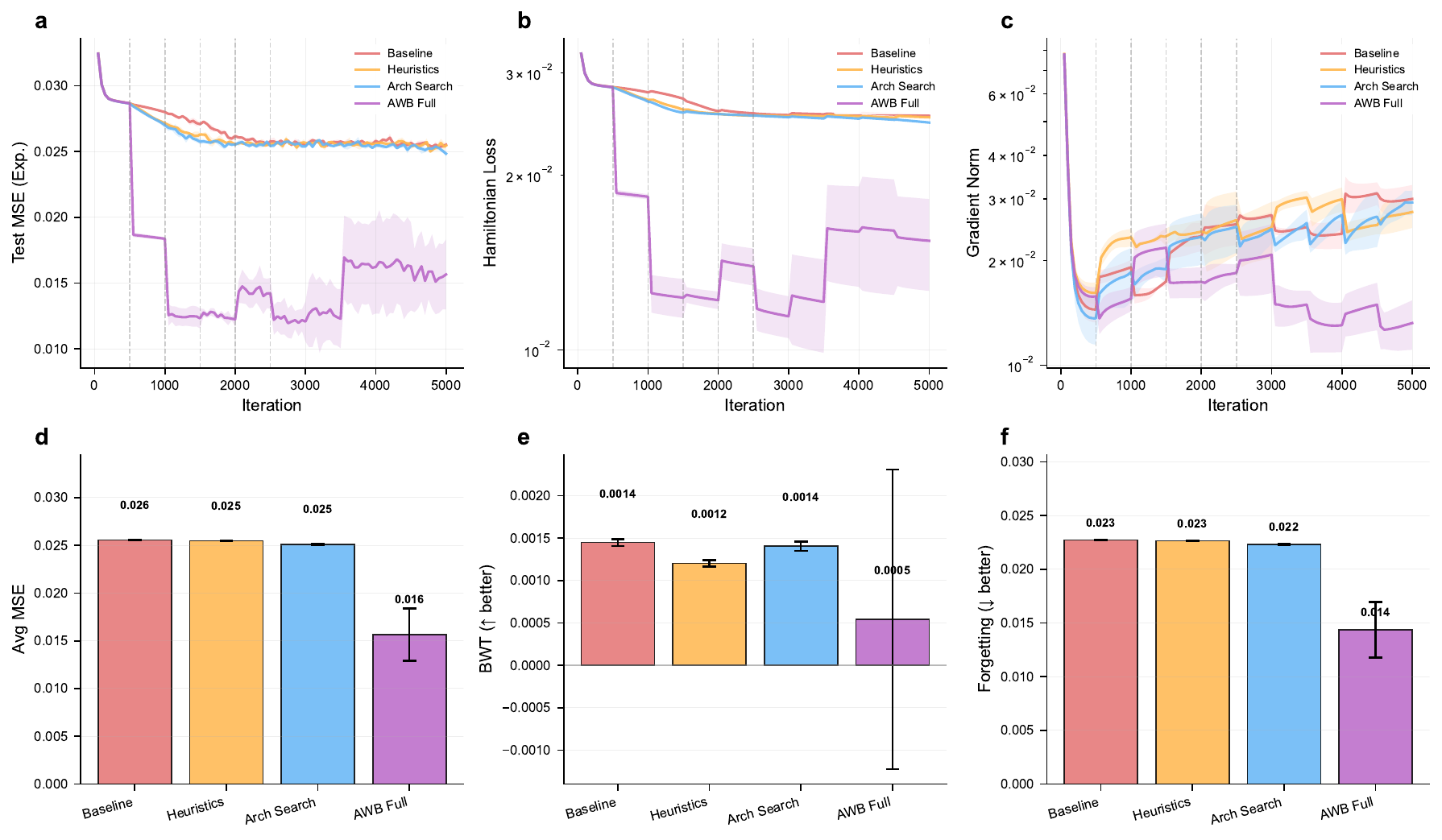}
    \caption{Sine regression (10 tasks, 3 seeds): Comparison of conditions C1--C4. Shaded regions show $\pm$1 std. (a) Test MSE on experience replay. (b) Hamiltonian loss. (c) Gradient norm. (d) Average MSE. (e) Backward transfer (BWT). (f) Forgetting.}
    \label{fig:sine_10task_comprehensive}
\end{figure}

\subsubsection{Experiment 3}
For the third experiment, we learn $5$ tasks of data and train $500$ epochs on each task. To  understand the implications of Lemma~\ref{lem:upper_arch} and \ref{lem:upper_weight}, we introduce noise into each subsequent task and observe the learning behavior.

In Figure~\ref{fig:sine_noise_comprehensive}, we provide a comprehensive comparison across the four conditions on the noisy data with three random seeds. We note that panels (a), (b), and (c) are on a logarithmic scale.  We observe a continual decrease in the Hamiltonian loss of the baseline (C1) and heuristic (C2) conditions, which contrasts results of experiments 1 and 2. Additionally, we achieve consistently lower error throughout training on C4. In panels (d), (e), and (f), we see that C4 provides a 4\% improvement in average MSE over C1. The backward transfer (BWT) metric confirms that C4 maintains positive transfer while minimizing forgetting. These results demonstrate that the AWB transfer mechanism remains effective even in the presence of noisy data, and supports the conclusions from Theorem~\ref{thm:contMeasure} and Lemmas \ref{lem:upper_arch} and \ref{lem:upper_weight},

\begin{figure}[!htb]
    \centering
    \includegraphics[width=0.95\linewidth]{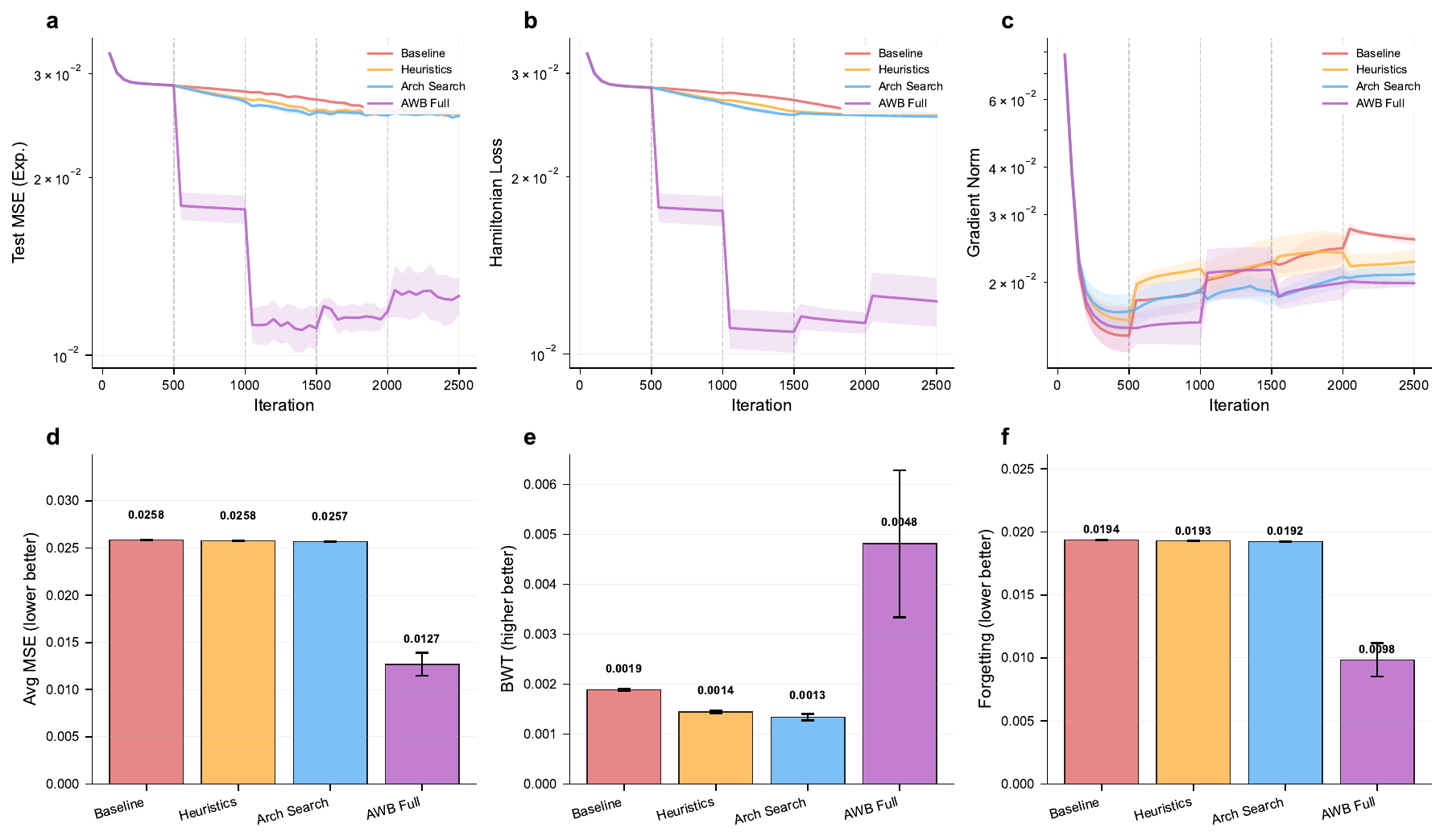}
    \caption{Noisy sine regression (5 tasks, 3 seeds): Comparison of conditions C1--C4 with noise introduced at each task. Shaded regions show $\pm$1 std. (a) Test MSE on experience replay. (b) Hamiltonian loss. (c) Gradient norm. (d) Average MSE. (e) Backward transfer (BWT). (f) Forgetting. C4 (AWB Full) achieves 4\% lower MSE than the baseline while maintaining positive backward transfer.}
    \label{fig:sine_noise_comprehensive}
\end{figure}

\textbf{Architecture Evolution:} Figure~\ref{fig:arch_evolution} visualizes the Hamiltonian loss alongside the change in architecture at each task. Under architecture search alone (C3), such search occurs without previous task knowledge since we randomly reinitialize weights. This results in the spikes in loss values at each task transition. In C4, we observe that transferring the learning via the AWB method, allows for the knowledge of previous tasks upon searching. This leads not only to smoother transitions between tasks, but consistent reduction in Hamiltonian loss. This visualization provides direct evidence that architecture adaptation combined with low-rank transfer (C4) outperforms architecture adaptation alone (C3).

\begin{figure}[!htb]
    \centering
    \includegraphics[width=0.95\linewidth]{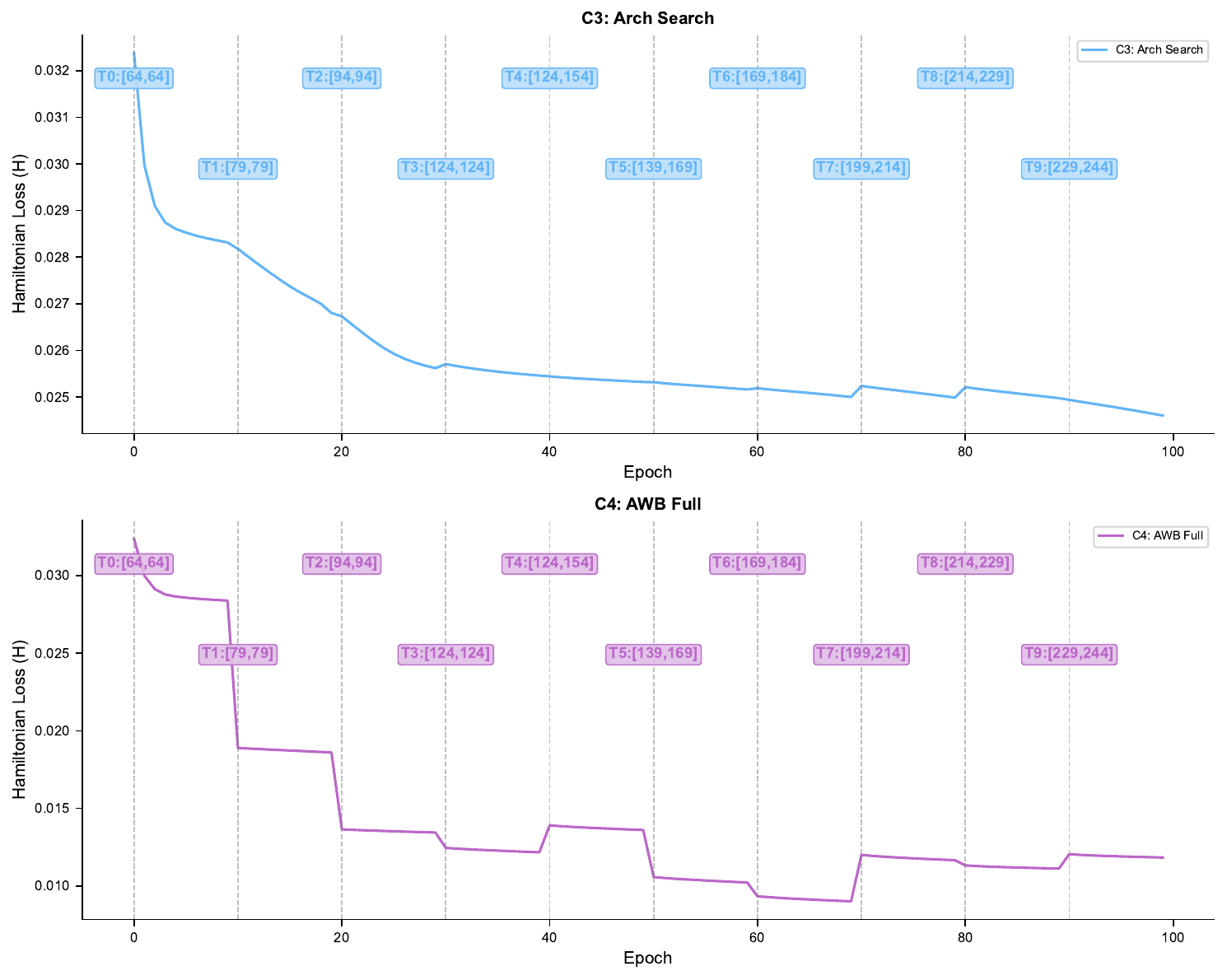}
    \caption{Hamiltonian loss evolution with architecture changes marked at task boundaries. Top: C3 (architecture search without transfer). Bottom: C4 (AWB full with transfer). Labels show network architecture at each task transition. C4 demonstrates smoother transitions and lower overall loss due to the AWB transfer mechanism.}
    \label{fig:arch_evolution}
\end{figure}

\subsection{Image Classification Experiments}
To evaluate our approach on image classification, we conduct experiments on the MNIST dataset~\cite{lecun1998mnist}. We perform two classification experiments: learning two tasks and learning 10 tasks.

\subsubsection{Overview of Experiments}

\textbf{Dataset and Task Splits:} As stated above, we will be utilizing the $28 \times 28$ grayscale images in the MNIST dataset for our experiments. We split MNIST into sequential tasks by grouping digit pairs depending on the number of tasks. For our first experiment, we group digits $0$ through $4$ in Task $0$ and digits $5$ through $9$ in Task $1.$ For our second experiment (where we seek to learn $10$ tasks), each task consists of images of a different digit.

\textbf{Hyperparameter Choices:} We use a convolutional neural network with two convolutional layers (filter size 3, 3 output channels each) followed by $2 \times 2$ max-pooling and ReLU activations. The flattened features pass through a feedforward network with hidden layers of size 512 and 64. Training uses AdamW~\cite{loshchilov2017fixing} with learning rate $10^{-4}$, batch size 1024, for 200 epochs per task. For C2--C4, we use cosine learning rate scheduling with 25 warm-up epochs. Gradient weights are $[\alpha, \beta, \gamma] = [0.4, 0.4, 0.1]$.

\subsubsection{Experiment 1}

In the first experiment, we learn two tasks and train for $500$ epochs on each task. Full hyperparameter details for this experiment are in Appendix~\ref{app:standard_cl}. Below we see that Figure~\ref{fig:mnist_comprehensive} compares the four conditions (C1--C4) for this experiment. In panel (a), we see the test accuracy on the experience replay data, which exhibits that condition C4 (AWB Full) achieves the highest accuracy throughout training. In panel (b), we have the Hamiltonian loss evolution, where C4 shows fastest convergence. Finally, in panel (c), we summarize the average accuracy, showing that C4 achieves 94.5\% accuracy compared to 92.8\% for the baseline (C1).

Notably, C3 (architecture search without transfer) shows 3.4\% forgetting while C4 shows zero forgetting. This demonstrates that random re-initialization after architecture changes leads to catastrophic forgetting, while the AWB transfer mechanism preserves learned knowledge. These results validate the conclusions from Theorem~\ref{thm:lower_HJB} for classification tasks.

\begin{figure}[!htb]
    \centering
    \includegraphics[width=0.95\linewidth]{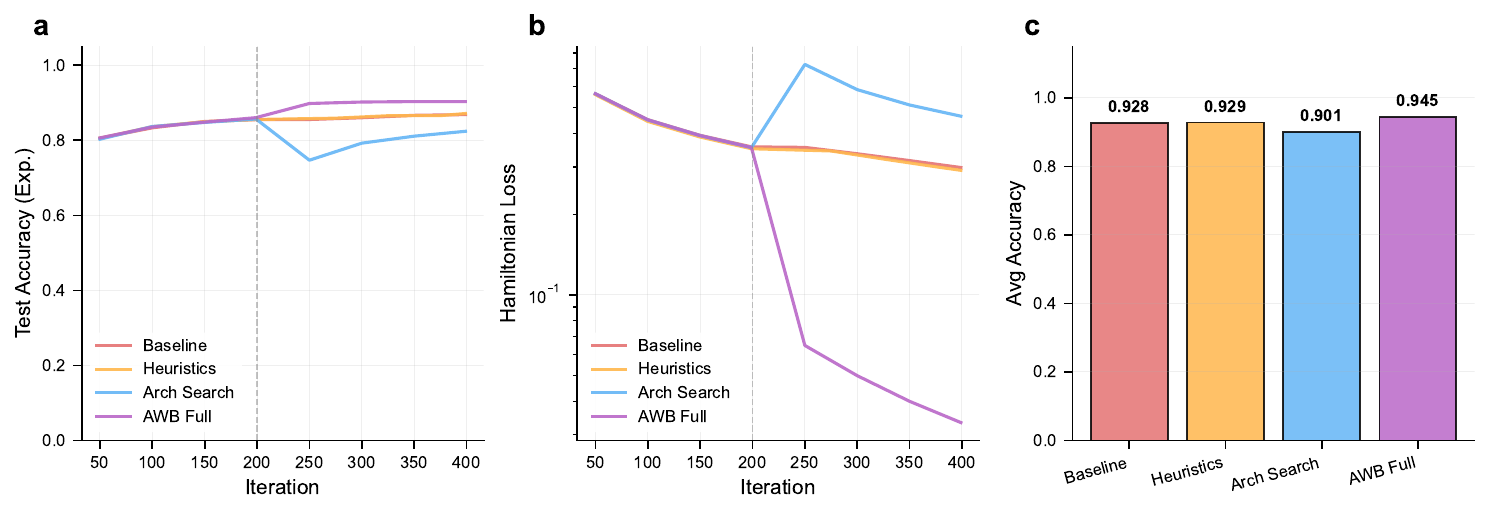}
    \caption{MNIST classification (2 tasks): Comparison of conditions C1--C4. (a) Test accuracy on experience replay. (b) Hamiltonian loss. (c) Average accuracy showing C4 achieves 94.5\% vs 92.8\% for baseline.}
    \label{fig:mnist_comprehensive}
\end{figure}

\subsubsection{Experiment 2}
As an extension to Figure~\ref{fig:mnist_comprehensive}, we conduct an experiment where we learn 10 tasks and plot the results in Figure~\ref{cnn}. We note now, that instead of the blanket improvement we observed in the sine dataset, we observe a significant improvement at the beginning for the first three tasks and then there is a significant jump that leads to significant deterioration. It is observed from Figure.~\ref{fig:diagnostics} that Task 3 introduces a problematic behavior with a sudden and sharp change into the task data, which leads to the model performing poorly. Our proposed A/B training is expected to smooth over this change, but this expected behavior is not observed in Figure.~\ref{fig:mnist_comprehensive}. Upon careful investigation, it was observed that the smoothing behavior is proportional to how many epochs are utilized for A/B training. To understand the effect of A/B training epochs, we perform the following experiment in the next section. 

\begin{figure}
    \centering
    \includegraphics[width=1\linewidth]{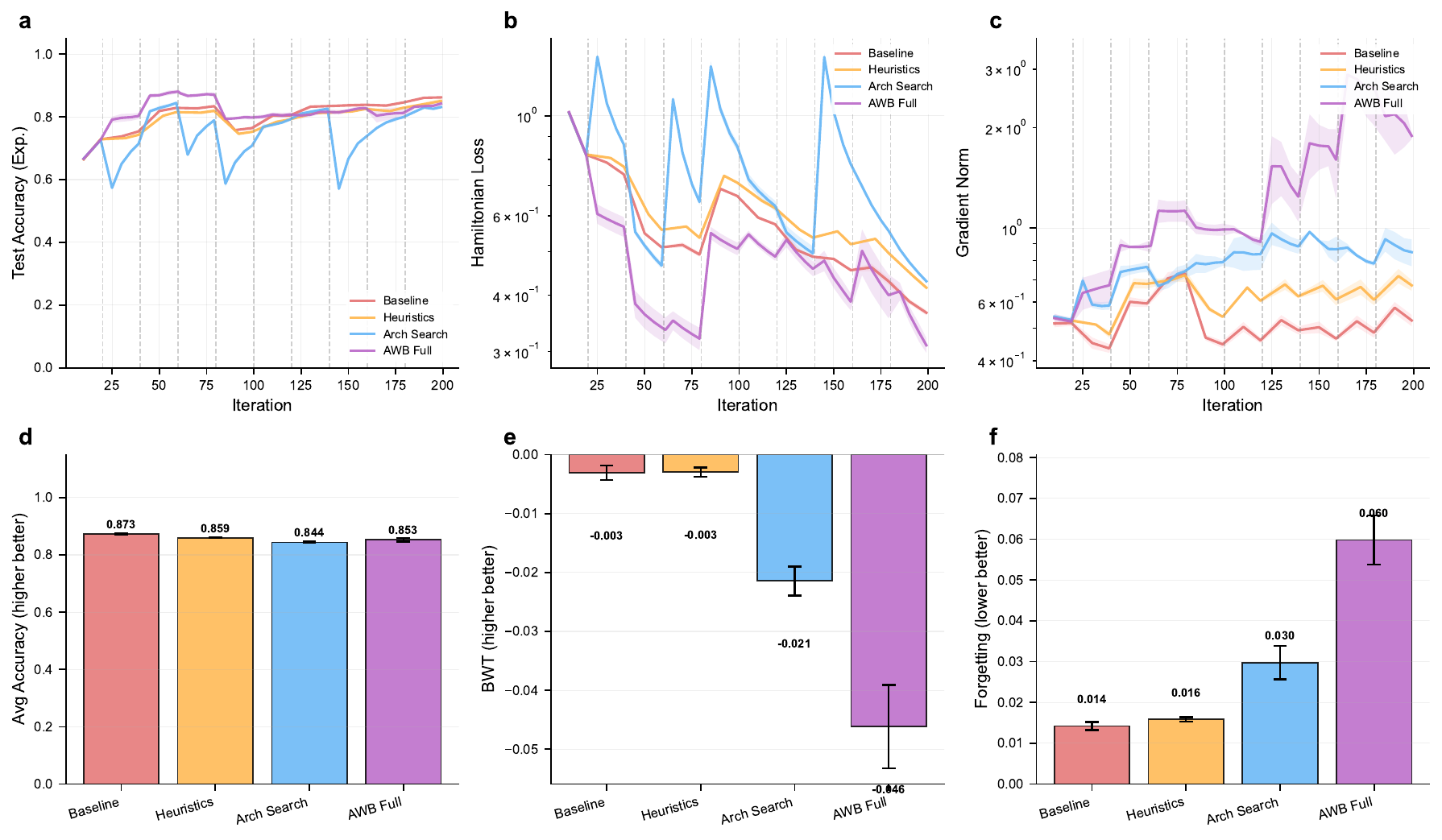}
    \caption{Classification experiment: Comparing loss values on training data for baseline continual learning method and method of learning optimal task architecture for the MNIST dataset}
    \label{cnn}
\end{figure}

\begin{figure}
    \centering
    \includegraphics[width=1\linewidth]{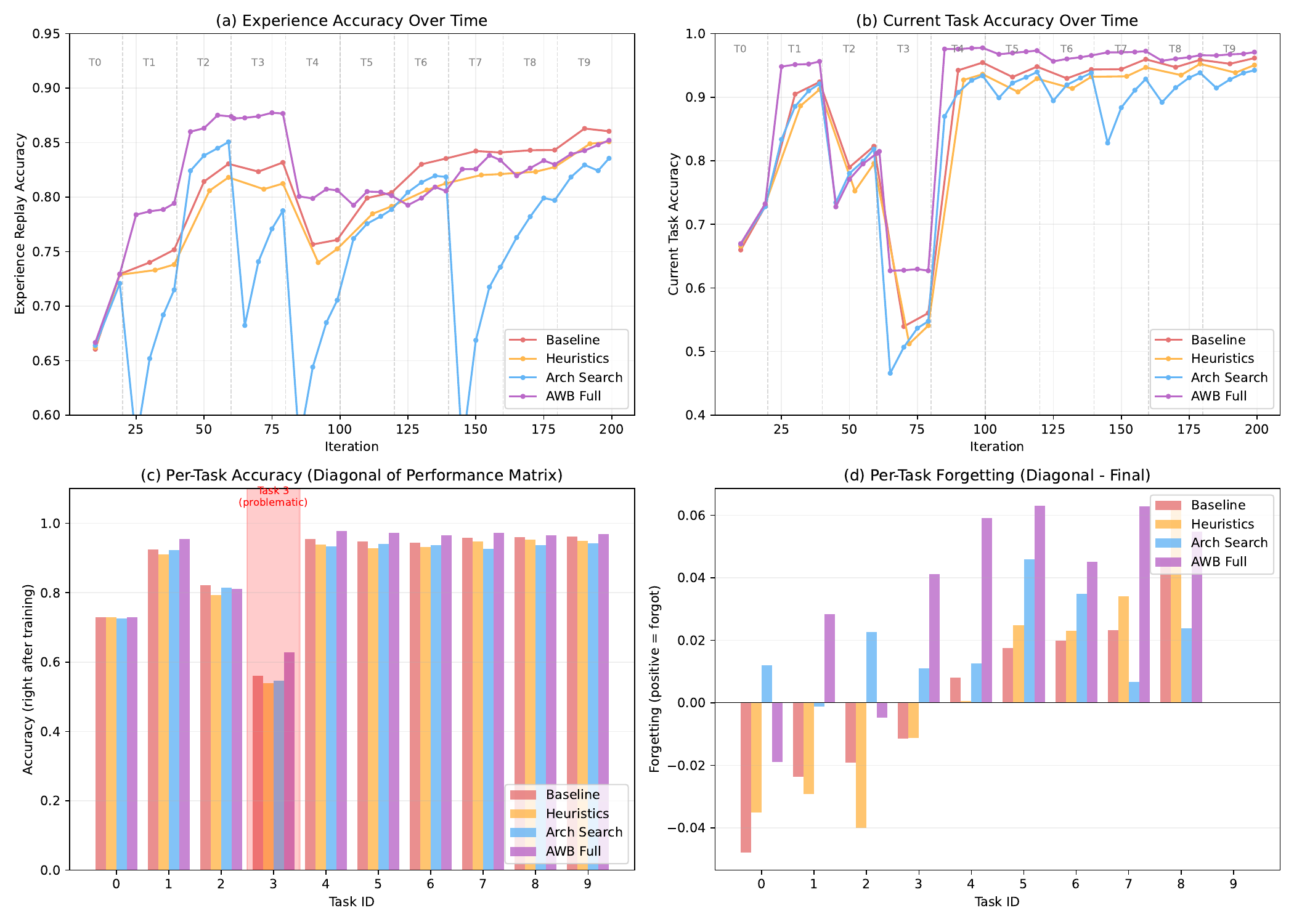}
    \caption{Classification experiment: More insight into the jump observed in the MNIST data set experiment}
    \label{fig:diagnostics}
\end{figure}

\subsubsection{Analysis of Reduced Accuracy in Experiment 2}
\label{sec:transform_analysis}
To understand the source of this performance drop in this experiment, we analyze A/B parameters affect classification difficulty. In our implementation, each task applies a random rotation angle $\theta \in [0, 180]^\circ$ and, importantly, uses the same angle for shearing: $\text{shear} = \theta$. This coupling of rotation and shear creates a compound distortion effect that varies non-linearly with the angle. Each task applies a random rotation and shear transform to MNIST digits, creating distribution shift between tasks. Figure~\ref{fig:transform_heatmap} shows how task difficulty varies with transform angle. At Task 3, we see a $92^\circ$ angle was applied. As an example, this would result in attempting to classify that an image showing a ``figure-eight" is the digit $8.$
\begin{figure}[!htb]
    \centering
    \includegraphics[width=0.95\linewidth]{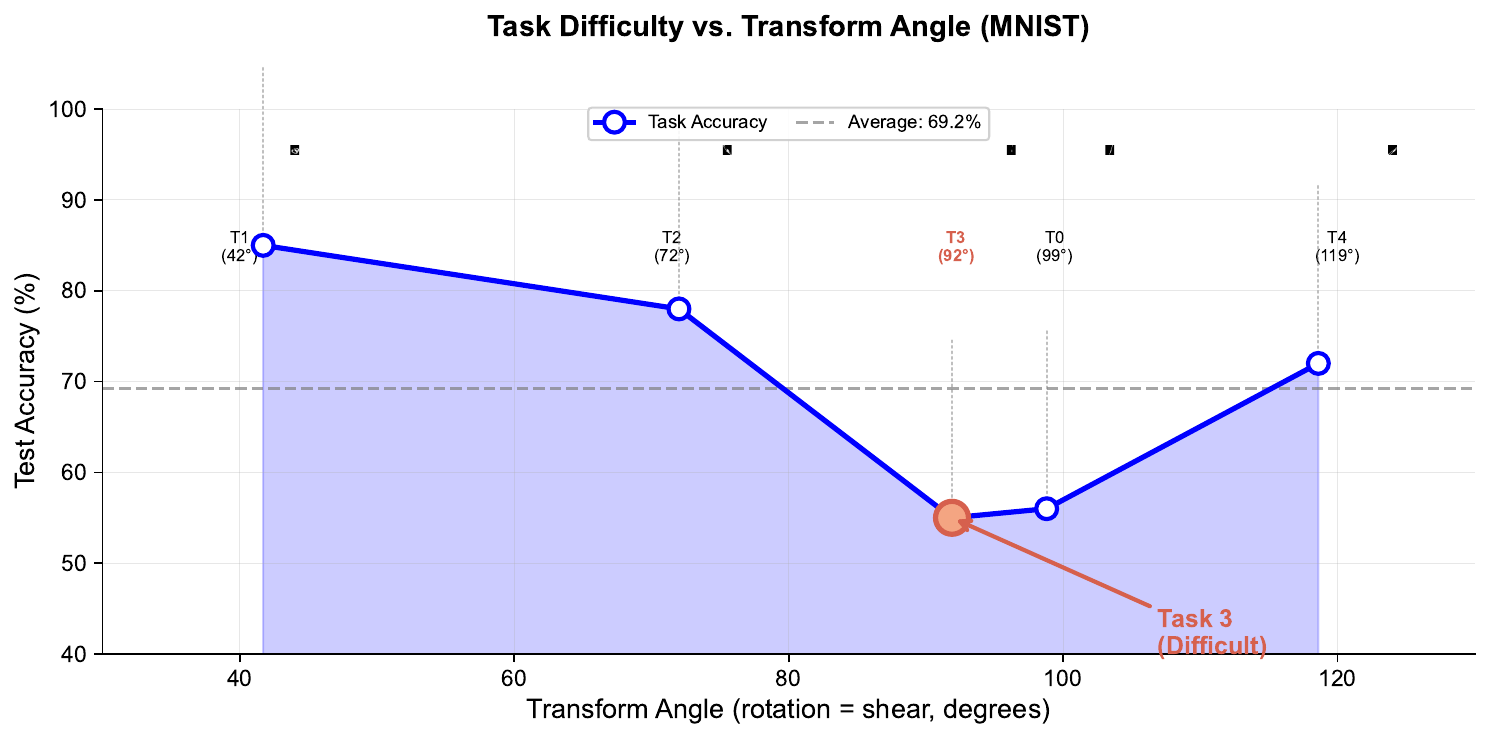}
    \caption{Task difficulty varies with transform angle. Top: transformed digit samples for each task. Task 3 (coral highlight) falls in a difficult region where combined rotation and shear severely distort digit shapes, resulting in lower accuracy.}
    \label{fig:transform_heatmap}
\end{figure}

As discussed in theory, that the prime way of handling such cases is to smoothen out the effect using A/B matrices which we use in this work. The key find is that the quality of smoothing is proportional to the number of epochs A/B matrices are trained for. A study of using different A/B matrices and their effect on the jump introduced by a rotation angle of 92 deg is shown in Figure.~\ref{fig:AB_ablation}.

\begin{figure}[!htb]
    \centering
    \includegraphics[width=0.95\linewidth]{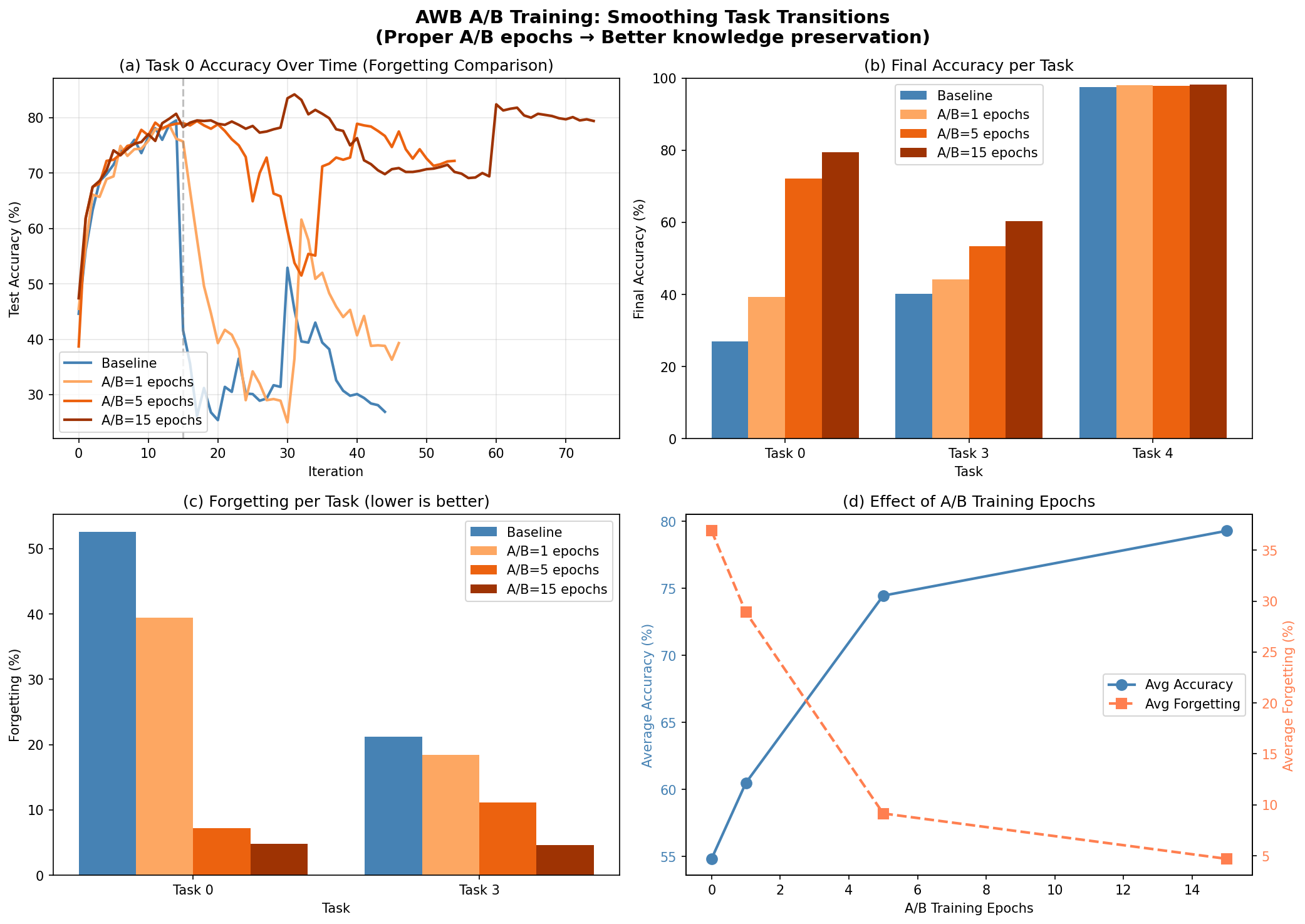}
    \caption{Comparison of how different number of epochs effect the performance of the smoothing.}
    \label{fig:AB_ablation}
\end{figure}

It is clear from Figure.~\ref{fig:AB_ablation} that, with an increase in the number of epochs for AB training, the test accuracy improves and forgetting reduces. This increase in the number of AB training epochs improves the smoothing of the forgetting costs over tasks and thus improves forgetting. 

\subsection{Graph Classification}
To further evaluate the AWB framework under controlled domain shift, we investigate the graph classfication problem.

\textbf{Dataset and Task Split:} As stated at the beginning of Section \ref{sec: experiments}, generate random graphs using PyTorch Geometric's FakeDataset. Each graph contains an average of 15 nodes with 10-dimensional node features and belongs to one of 5 classes for graph-level classification. We emulate task distribution shifts in this setting by allowing all tasks to share the same five classes but we introduce progressive perturbations at each additional task learned. More specifically, at each task $t,$ we apply three types of perturbations: (1) Gaussian feature noise with standard deviation $\sigma_t = 0.02t$, (2) random edge dropout with probability $p_t = 0.01t$, and (3) constant feature shift of magnitude $s_t = 0.01t$. Thus, Task $0$ serves as the unperturbed baseline. This is an optimal choice for the continual learning setting, as it tests the models ability to adapt to distribution shift while retaining knowledge.

\textbf{Hyperparameter Choices:} We utilize a Graph Convolutional Network (GCN) consisting of two graph convolution layers with hidden dimensions $[10 \rightarrow 32 \rightarrow 32]$, followed by max pooling and a 3-layer feedforward classifier with dimensions $[32 \rightarrow 32 \rightarrow 16 \rightarrow 5]$. All layers use LeakyReLU activations except the final classification layer. The model is trained using the Adam optimizer with learning rate $\eta = 10^{-4}$, batch size 128, and gradient clipping at norm 1.0. The Hamiltonian gradient is computed with weights $[\alpha, \beta, \gamma] = [0.4, 0.4, 0.1]$ for current task loss, experience replay loss, and regularization respectively. Each task is trained for 200 epochs with task warm-up enabled (20 epochs at 10\% learning rate). For the AWB Full condition (Condition 4), architecture search is performed with the following settings: 5 preliminary training epochs, 150 A/B matrix training epochs, 50 warmup epochs, and modified Adam betas $(0.5, 0.999)$ for V-training to enable faster adaptation after the warmup-to-main transition.

\textbf{Results:} Figure~\ref{fig:synthetic_graph_results} presents the comprehensive results across three experimental conditions: Baseline (fixed architecture, constant learning rate), Heuristics (task warmup, adaptive learning rate), and AWB Full (architecture search with A/B transfer). The top row shows training dynamics: (a) test accuracy on experience replay, (b) Hamiltonian loss, and (c) gradient norm over the 10-task sequence. The bottom row presents summary metrics: (d) average accuracy, (e) backward transfer (BWT), and (f) forgetting.

\begin{figure}[!htb]
    \centering
    \includegraphics[width=1\linewidth]{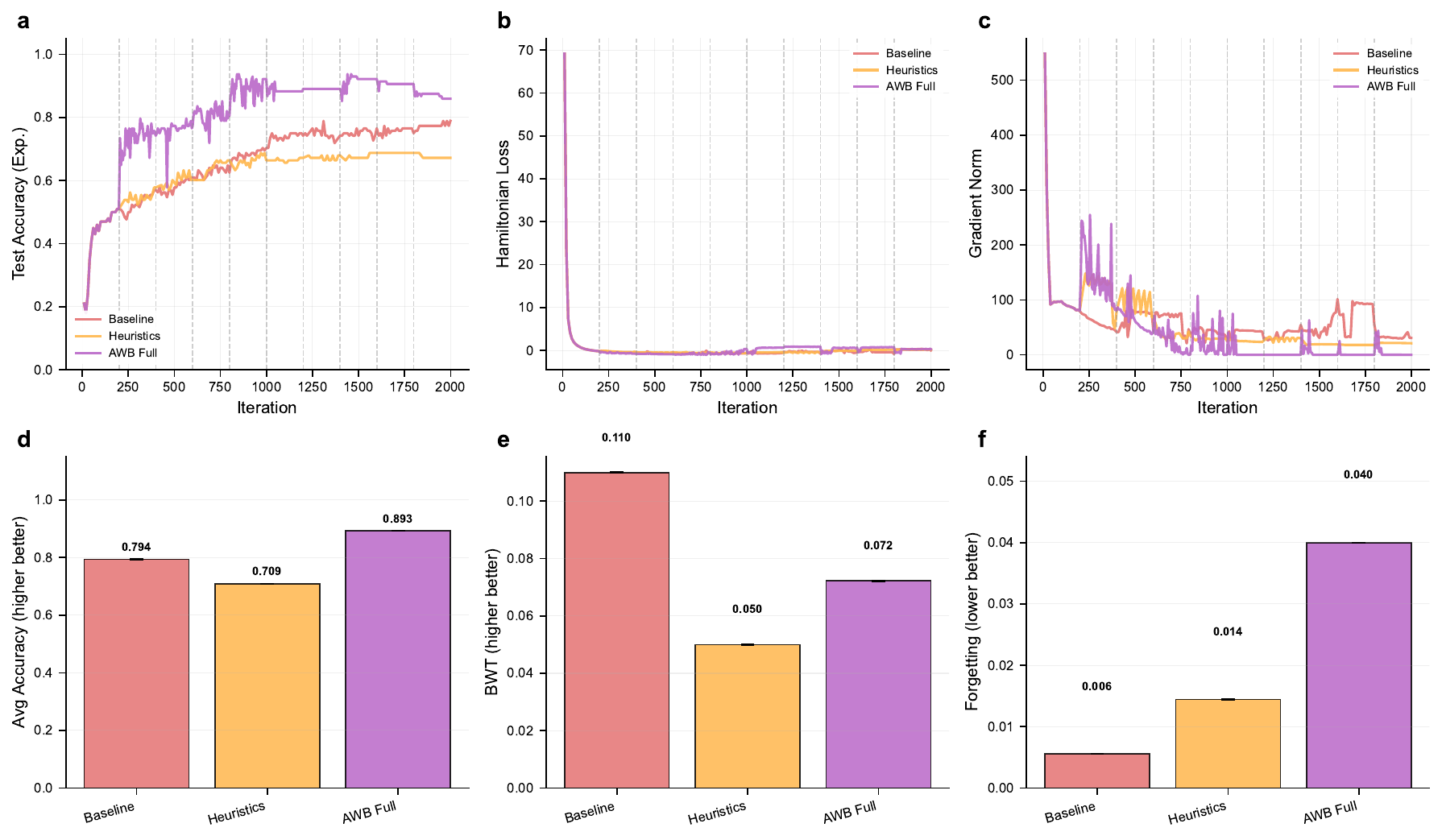}
    \caption{Synthetic graph classification with domain shift (10 tasks, 3 seeds). Top row: (a) Test accuracy on experience replay shows AWB Full maintaining high accuracy throughout training; (b) Hamiltonian loss; (c) Gradient norm. Bottom row: (d) AWB Full achieves 89.3\% average accuracy vs 79.4\% for Baseline; (e) Backward transfer; (f) Forgetting measure.}
    \label{fig:synthetic_graph_results}
\end{figure}

Figure~\ref{fig:task_accuracy_noise} provides deeper insight into the accuracy-forgetting trade-off by showing per-task final accuracy alongside the increasing noise level. Despite the linearly increasing perturbation (dashed line), AWB Full maintains consistently high accuracy (87--92\%) across all tasks, while baseline exhibits flat but lower accuracy (74--81\%).

\begin{figure}[!htb]
    \centering
    \includegraphics[width=0.95\linewidth]{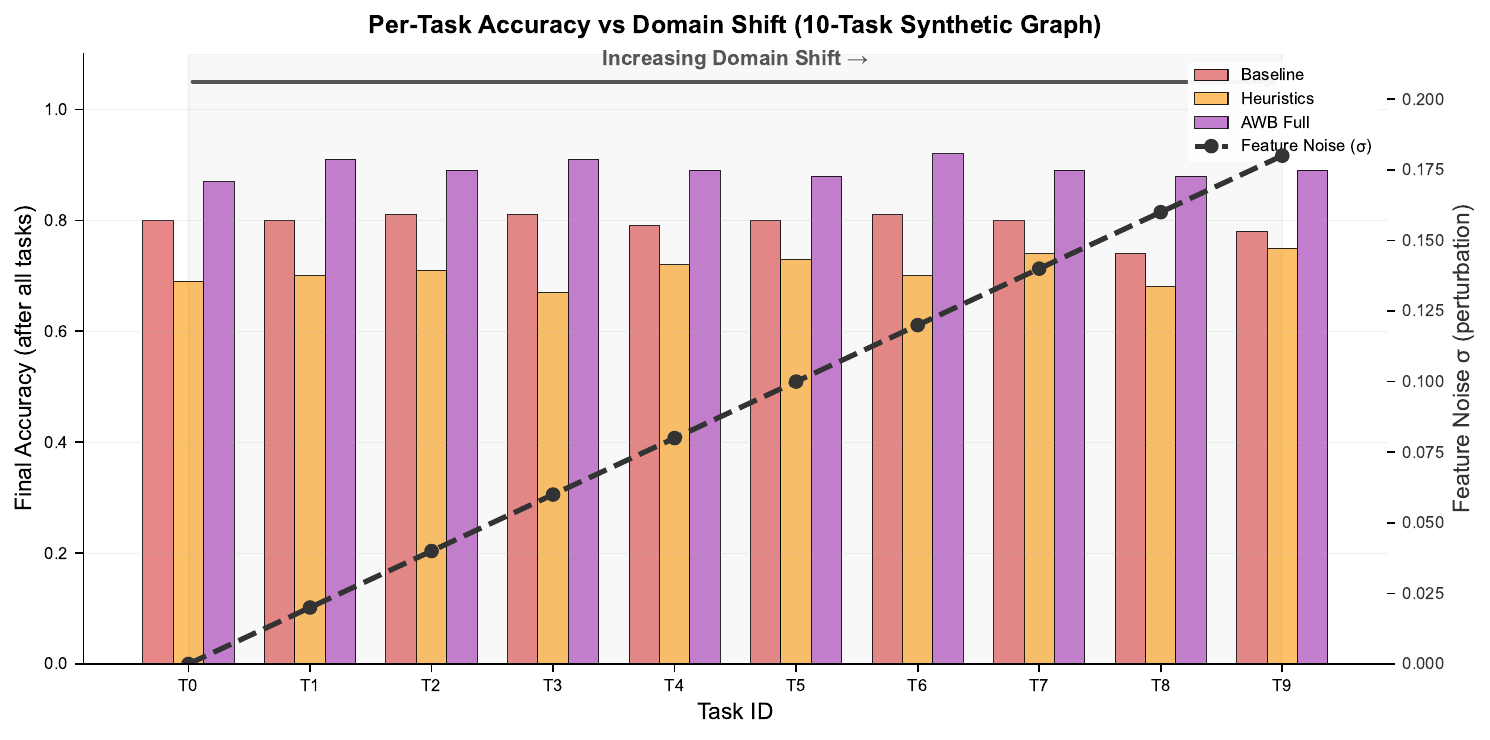}
    \caption{Per-task accuracy vs domain shift. Grouped bars show final accuracy on each task after training all 10 tasks. The dashed line indicates increasing feature noise ($\sigma$). AWB Full maintains high accuracy despite increasing perturbation, while baseline shows flat but lower performance.}
    \label{fig:task_accuracy_noise}
\end{figure}

\begin{table}[!htb]
\centering
\caption{Accuracy range summary for synthetic graph experiments. AWB Full achieves significantly higher mean accuracy with a tighter range, indicating consistent performance despite domain shift.}
\label{tab:accuracy_range}
\begin{tabular}{lccccl}
\toprule
\textbf{Method} & \textbf{Min Acc} & \textbf{Max Acc} & \textbf{Range} & \textbf{Mean $\pm$ Std} & \textbf{Interpretation} \\
\midrule
Baseline   & 0.74 & 0.81 & 0.07 & $0.794 \pm 0.020$ & Flat accuracy, limited learning \\
Heuristics & 0.67 & 0.75 & 0.08 & $0.709 \pm 0.025$ & Variable performance \\
AWB Full   & 0.87 & 0.92 & \textbf{0.05} & $\mathbf{0.893 \pm 0.015}$ & High accuracy, marginal forgetting \\
\bottomrule
\end{tabular}
\end{table}

\begin{table}[!htb]
\centering
\caption{Forgetting statistics for synthetic graph experiments. While AWB Full shows forgetting on more tasks, the absolute magnitude remains small (3.6\% average), representing a favorable trade-off for the 12.5\% accuracy improvement.}
\label{tab:forgetting_stats}
\begin{tabular}{lccc}
\toprule
\textbf{Method} & \textbf{Avg Forgetting} & \textbf{Max Forgetting} & \textbf{Tasks w/ Forgetting} \\
\midrule
Baseline   & \textbf{0.005} & 0.03 & 2/10 \\
Heuristics & 0.013 & 0.04 & 5/10 \\
AWB Full   & 0.036 & 0.07 & 8/10 \\
\bottomrule
\end{tabular}
\end{table}

These results reveal a favorable accuracy-forgetting trade-off for the AWB approach. The baseline model displays accuracy of 74--81\% and range of 7\% across all domain shifts. Thus, it neither learns effectively from new domains nor does it forget significantly. On the other hand, the AWB Full condition (C4) achieves 12.5\% higher average accuracy (89.3\% vs 79.4\%) while maintaining a tighter accuracy range (5\% vs 7\%). This demonstrates a consistent high performance despite the increased perturbation.

The forgetting analysis shown in (Table~\ref{tab:forgetting_stats}) exhibits that AWB Full has marginal forgetting (3.6\% average) on 8 out of 10 tasks, compared to near-zero forgetting for the baseline C1 condition (0.5\% average on 2 tasks). However, this small forgetting cost is offset by dramatically superior learning capacity, as AWB Full (C4) starts at higher accuracy and remains above baseline (C1) even after forgetting. Thus, the low-rank transfer mechanism preserves learning while also adapting to new domains.

\section{Conclusion}
In this paper we elucidated the dependency between the architecture, weights, and  loss function through a Sobolev space design. We derive the necessary condition for the existence of the continual learning problem and show that it requires the forgetting cost to be absolutely continuous. We show both theoretically and with experimentation that just changing the weights of the network is not enough to satisfy the necessary condition, and we devise an approach to change the architecture of the model on the fly. We show  that  our approach  substantially improvement the training and test performance even in the presence of noise in the tasks. We  also  show that, for a series of neural network architectures, we have better performance when changing architectures while learning continually on the series of tasks. 

\section*{Acknowledgments:} 
The first author AH was supported by the SWARM project, supported by the Department of Energy Award DE-SC0024387. The second author, KR, was supported by the U.S. Department of Energy, Office of Science (SC), Advanced Scientific Computing Research (ASCR), Competitive Portfolios Project on Energy Efficient Computing: A Holistic Methodology, under Contract DE-AC02-06CH11357. We also acknowledge the support by the U.S. Department of Energy for the SciDAC 6 RAPIDS institute. This research used resources of the Argonne Leadership Computing Facility, which is a DOE Office of Science User Facility supported under Contract DE-AC02- 06CH11357. We also thank the initial conversation with Dr. Stefan Wild, Lawrence Berkeley National Laboratory, which led to all of this work. We have utilized generative AI, particularly, claude-code from Anthropic to generate the plots, tables in the experiment. Some of the graphics and text in the appendix has been generated, \emph{claude-code}. Rest of the paper is the original product of the authors.

\newpage
\bibliography{NNbib.bib}
\bibliographystyle{tmlr}
\appendix
\section{Appendix}
\section{Proofs}
\subsection{Proof of Lemma~\ref{lem:upper_weight}}~\label{sec:upper_weight}
\begin{proof}
    By Lemma \ref{lem:taylor} and the assumption that $\mu\left( \bigcup_{\tau = 0}^{t}\tx(\tau)\Delta \bigcup_{\tau = 0}^{t+\Delta t}\tx(\tau)\right)\geq \delta,$ we have
    \begin{align*}
       E\left(\bigcup_{\tau = 0}^{t}\tx(\tau)\right) & = E\left(\bigcup_{\tau = 0}^{t+\Delta t}\tx(\tau)\right) - \Delta t\Big[\mu\left( \bigcup_{\tau = 0}^{t}\tx(\tau)\Delta \bigcup_{\tau = 0}^{t+\Delta t}\tx(\tau)\right)\cdot \displaystyle\int_{\bigcup_{\tau = 0}^{t}\tx(\tau)} \ell(\hat{f}(\weight,\psi))d\mu\\
        & +\displaystyle\int_{\bigcup_{\tau = 0}^{t}\tx(\tau)} \ell'(\hat{f}(\weight,\psi))\cdot \partial_{\weight}^1 \hat{f}(\weight,\psi)\cdot \Delta w \hspace{1mm}  d\mu\Big]- o(\Delta t)\\
        & = E\left(\bigcup_{\tau = 0}^{t+\Delta t}\tx(\tau)\right) + \Delta t\left(-\mu\left( \bigcup_{\tau = 0}^{t}\tx(\tau)\Delta \bigcup_{\tau = 0}^{t+\Delta t}\tx(\tau)\right)\right)\cdot \displaystyle\int_{\bigcup_{\tau = 0}^{t}\tx(\tau)\Delta \bigcup_{\tau = 0}^{t+\Delta t}\tx(\tau)} \ell(\hat{f}(\weight,\psi))d\mu\\
        & - \Delta t\displaystyle\int_{\bigcup_{\tau = 0}^{t}\tx(\tau)} \ell'(\hat{f}(\weight,\psi))\cdot \partial_{\weight}^1 \hat{f}(\weight,\psi)\cdot \Delta w \hspace{1mm}  d\mu- o(\Delta t)\\
        & \leq E\left(\bigcup_{\tau = 0}^{t+\Delta t}\tx(\tau)\right) + \Delta t \cdot \delta \cdot \displaystyle\int_{\bigcup_{\tau = 0}^{t}\tx(\tau)\Delta \bigcup_{\tau = 0}^{t+\Delta t}\tx(\tau)} \ell(\hat{f}(\weight,\psi))d\mu\\
        & - \Delta t\displaystyle\int_{\bigcup_{\tau = 0}^{t}\tx(\tau)} \ell'(\hat{f}(\weight,\psi))\cdot \partial_{\weight}^1 \hat{f}(\weight,\psi)\cdot \Delta w \hspace{1mm}  d\mu- o(\Delta t)\\
    \end{align*}
    Subtracting $E\left(\bigcup_{\tau = 0}^{t+\Delta t}\tx(\tau)\right)$ from both sides and combining like terms,
    \begin{align*}
         E\left(\bigcup_{\tau = 0}^{t}\tx(\tau)\right) - E\left(\bigcup_{\tau = 0}^{t+\Delta t}\tx(\tau)\right)
         & \leq \Delta t \cdot \delta \cdot \displaystyle\int_{\bigcup_{\tau = 0}^{t}\tx(\tau)\Delta \bigcup_{\tau = 0}^{t+\Delta t}\tx(\tau)} \ell(\hat{f}(\weight,\psi))d\mu\\
        & - \Delta t\displaystyle\int_{\bigcup_{\tau = 0}^{t}\tx(\tau)} \ell'(\hat{f}(\weight,\psi))\cdot \partial_{\weight}^1 \hat{f}(\weight,\psi)\cdot \Delta w \hspace{1mm}  d\mu- o(\Delta t).
    \end{align*}
    Additionally, we can assume that the loss function $\ell$ is bounded above by a constant $M_0,$ so
    \begin{align*}
        E\left(\bigcup_{\tau = 0}^{t}\tx(\tau)\right) - E\left(\bigcup_{\tau = 0}^{t+\Delta t}\tx(\tau)\right)
        & \leq \Delta t \cdot \delta \cdot M_0- \Delta t\displaystyle\int_{\bigcup_{\tau = 0}^{t}\tx(\tau)} \ell'(\hat{f}(\weight,\psi))\cdot \partial_{\weight}^1 \hat{f}(\weight,\psi)\cdot \Delta w \hspace{1mm}  d\mu- o(\Delta t).
    \end{align*}
    To consider this difference in expected values across future tasks, set $\Delta t =1 $ and let us sum across such future tasks, 
    \begin{align*}
         \sum_{\tau = t}^{T} \left(E\left(\bigcup_{\nu = 0}^{\tau}\tx(\nu)\right) - E\left(\bigcup_{\nu = 0}^{\tau+1}\tx(\nu)\right)\right)
        & \leq \sum_{\tau =t}^{T} \Big( M_0\cdot \delta - \displaystyle\int_{\bigcup_{\nu = 0}^{\tau}\tx(\nu)} \ell'(\hat{f}(\weight,\psi))\cdot \partial_{\weight}^1 \hat{f}(\weight,\psi)\cdot \Delta w \hspace{1mm}  d\mu \Big).
    \end{align*}
    Now, we can choose the following for each $\tau$
    \begin{align*}
        \Delta \weight & =  \ell'(\hat{f}(\weight,\psi))\cdot \partial_{\weight}^1 \hat{f}(\weight,\psi).
    \end{align*}
    Thus,
    \begin{align*}
         \sum_{\tau = t}^{T} \left(E\left(\bigcup_{\nu = 0}^{\tau}\tx(\nu)\right) - E\left(\bigcup_{\nu = 0}^{\tau+1}\tx(\nu)\right)\right)
        & \leq \sum_{\tau =t}^{T} \Big( M_0\cdot \delta - \displaystyle\int_{\bigcup_{\nu = 0}^\tau\tx(\nu)} \left(\ell'(\hat{f}(\weight,\psi))\cdot \partial_{\weight}^1 \hat{f}(\weight,\psi)\right)^2 \hspace{1mm}  d\mu \Big).
    \end{align*}
    Now, notice the following
    \begin{align*}
        J(\weight(\tau),\psi(\tau),\tx(\tau)) & = \int_{\bigcup_{\nu = 0}^\tau \tx(\nu)} \ell(\hat{f}(\weight,\psi))d\mu = E\left(\bigcup_{\nu = 0}^{\tau}\tx(\nu)\right).
    \end{align*}
    Thus,
    \begin{align*}
         \sum_{\tau = t}^{T} \left(J(\tx(\tau)) - J(\tx(\tau+1))\right)
         & = \sum_{\tau = t}^{T} \left(E\left(\bigcup_{\nu = 0}^{\tau}\tx(\nu)\right) - E\left(\bigcup_{\nu = 0}^{\tau+1}\tx(\nu)\right)\right)\\
        & \leq \sum_{\tau =t}^{T} \Big( M_0\cdot \delta - \displaystyle\int_{\bigcup_{\nu = 0}^\tau\tx(\nu)} \left(\ell'(\hat{f}(\weight,\psi))\cdot \partial_{\weight}^1 \hat{f}(\weight,\psi)\right)^2 \hspace{1mm}  d\mu \Big),
    \end{align*}
    as desired.
\end{proof}

\subsection{Proof of Lemma~\ref{lem:upper_arch}}~\label{sec:upper_arch}
\begin{proof}
    By Lemma \ref{lem:taylor} and the assumption that $\mu\left( \bigcup_{\tau = 0}^{t}\tx(\tau)\Delta \bigcup_{\tau = 0}^{t+\Delta t}\tx(\tau)\right)\geq \delta,$ we have
    \begin{align*}
       E\left(\bigcup_{\tau = 0}^{t}\tx(\tau)\right) & = E\left(\bigcup_{\tau = 0}^{t+\Delta t}\tx(\tau)\right) - \Delta t\Big[\mu\left( \bigcup_{\tau = 0}^{t}\tx(\tau)\Delta \bigcup_{\tau = 0}^{t+\Delta t}\tx(\tau)\right)\cdot \displaystyle\int_{\bigcup_{\tau = 0}^{t}\tx(\tau)} \ell(\hat{f}(\weight,\psi))d\mu\\
        & +\displaystyle\int_{\bigcup_{\tau = 0}^{t}\tx(\tau)} \ell'(\hat{f}(\weight,\psi))\cdot \partial_{\weight}^1 \hat{f}(\weight,\psi)\cdot \Delta w \hspace{1mm}  d\mu\\
        & +\displaystyle\int_{\bigcup_{\tau = 0}^{t}\tx(\tau)} \ell'(\hat{f}(\weight,\psi))\cdot \partial_{\psi}^1 \hat{f}(\weight,\psi)\cdot \Delta \psi \hspace{1mm}  d\mu\Big]- o(\Delta t)\\
        & = E\left(\bigcup_{\tau = 0}^{t+\Delta t}\tx(\tau)\right) + \Delta t\left(-\mu\left( \bigcup_{\tau = 0}^{t}\tx(\tau)\Delta \bigcup_{\tau = 0}^{t+\Delta t}\tx(\tau)\right)\right)\cdot \displaystyle\int_{\bigcup_{\tau = 0}^{t}\tx(\tau)\Delta \bigcup_{\tau = 0}^{t+\Delta t}\tx(\tau)} \ell(\hat{f}(\weight,\psi))d\mu\\
        & - \Delta t\displaystyle\int_{\bigcup_{\tau = 0}^{t}\tx(\tau)} \ell'(\hat{f}(\weight,\psi))\cdot \partial_{\weight}^1 \hat{f}(\weight,\psi)\cdot \Delta w \hspace{1mm}  d\mu\\
        &-\Delta t \displaystyle\int_{\bigcup_{\tau = 0}^{t}\tx(\tau)} \ell'(\hat{f}(\weight,\psi))\cdot \partial_{\psi}^1 \hat{f}(\weight,\psi)\cdot \Delta \psi \hspace{1mm}  d\mu- o(\Delta t)\\
        & \leq E\left(\bigcup_{\tau = 0}^{t+\Delta t}\tx(\tau)\right) + \Delta t \cdot \delta \cdot \displaystyle\int_{\bigcup_{\tau = 0}^{t}\tx(\tau)\Delta \bigcup_{\tau = 0}^{t+\Delta t}\tx(\tau)} \ell(\hat{f}(\weight,\psi))d\mu\\
        & - \Delta t\displaystyle\int_{\bigcup_{\tau = 0}^{t}\tx(\tau)} \ell'(\hat{f}(\weight,\psi))\cdot \partial_{\weight}^1 \hat{f}(\weight,\psi)\cdot \Delta w \hspace{1mm}  d\mu\\
        & -\Delta t \displaystyle\int_{\bigcup_{\tau = 0}^{t}\tx(\tau)} \ell'(\hat{f}(\weight,\psi))\cdot \partial_{\psi}^1 \hat{f}(\weight,\psi)\cdot \Delta \psi \hspace{1mm}  d\mu- o(\Delta t).
    \end{align*}
    Subtracting $E\left(\bigcup_{\tau = 0}^{t+\Delta t}\tx(\tau)\right)$ from both sides and combining like terms,
    \begin{align*}
         E\left(\bigcup_{\tau = 0}^{t}\tx(\tau)\right) - E\left(\bigcup_{\tau = 0}^{t+\Delta t}\tx(\tau)\right)
         & \leq \Delta t \cdot \delta \cdot \displaystyle\int_{\bigcup_{\tau = 0}^{t}\tx(\tau)\Delta \bigcup_{\tau = 0}^{t+\Delta t}\tx(\tau)} \ell(\hat{f}(\weight,\psi))d\mu\\
        & - \Delta t\displaystyle\int_{\bigcup_{\tau = 0}^{t}\tx(\tau)} \ell'(\hat{f}(\weight,\psi))\cdot \partial_{\weight}^1 \hat{f}(\weight,\psi)\cdot \Delta w \hspace{1mm}  d\mu\\
        & -\Delta t \displaystyle\int_{\bigcup_{\tau = 0}^{t}\tx(\tau)} \ell'(\hat{f}(\weight,\psi))\cdot \partial_{\psi}^1 \hat{f}(\weight,\psi)\cdot \Delta \psi \hspace{1mm}  d\mu- o(\Delta t).
    \end{align*}
    Additionally, we can assume that the loss function $\ell$ is bounded above by a constant $M_0,$ so
    \begin{align*}
        E\left(\bigcup_{\tau = 0}^{t}\tx(\tau)\right) - E\left(\bigcup_{\tau = 0}^{t+\Delta t}\tx(\tau)\right)
        & \leq \Delta t \cdot \delta \cdot M_0- \Delta t\displaystyle\int_{\bigcup_{\tau = 0}^{t}\tx(\tau)} \ell'(\hat{f}(\weight,\psi))\cdot \partial_{\weight}^1 \hat{f}(\weight,\psi)\cdot \Delta w \hspace{1mm}  d\mu\\
        & -\Delta t \displaystyle\int_{\bigcup_{\tau = 0}^{t}\tx(\tau)} \ell'(\hat{f}(\weight,\psi))\cdot \partial_{\psi}^1 \hat{f}(\weight,\psi)\cdot \Delta \psi \hspace{1mm}  d\mu- o(\Delta t).
    \end{align*}
    To consider this difference in expected values across future tasks, set $\Delta t =1 $ and let us sum across such future tasks, 
    \begin{align*}
         \sum_{\tau = t}^{T} \left(E\left(\bigcup_{\nu = 0}^{\tau}\tx(\nu)\right) - E\left(\bigcup_{\nu = 0}^{\tau+1}\tx(\nu)\right)\right)
        & \leq \sum_{\tau =t}^{T} \Big( M_0\cdot \delta - \displaystyle\int_{\bigcup_{\nu = 0}^{\tau}\tx(\nu)} \ell'(\hat{f}(\weight,\psi))\cdot \partial_{\weight}^1 \hat{f}(\weight,\psi)\cdot \Delta w \hspace{1mm}  d\mu\\
        & -\displaystyle\int_{\bigcup_{\tau = 0}^{t}\tx(\tau)} \ell'(\hat{f}(\weight,\psi))\cdot \partial_{\psi}^1 \hat{f}(\weight,\psi)\cdot \Delta \psi \hspace{1mm}  d\mu\Big).
    \end{align*}
    Now, we can choose the following for each $\nu$
    \begin{align*}
        \Delta \weight & =  \ell'(\hat{f}(\weight,\psi))\cdot \partial_{\weight}^1 \hat{f}(\weight,\psi).
    \end{align*}
    and
    \begin{align*}
        \Delta \psi & =  \ell'(\hat{f}(\weight,\psi))\cdot \partial_{\psi}^1 \hat{f}(\weight,\psi).
    \end{align*}
    Thus,
    \begin{align*}
         \sum_{\tau = t}^{T} \left(E\left(\bigcup_{\nu = 0}^{\tau}\tx(\nu)\right) - E\left(\bigcup_{\nu = 0}^{\tau+1}\tx(\nu)\right)\right)
        & \leq \sum_{\tau =t}^{T} \Big( M_0\cdot \delta - \displaystyle\int_{\bigcup_{\nu = 0}^\tau\tx(\nu)} \left(\ell'(\hat{f}(\weight,\psi))\cdot \partial_{\weight}^1 \hat{f}(\weight,\psi)\right)^2 \hspace{1mm}  d\mu\\
        & -\displaystyle\int_{\bigcup_{\nu = 0}^\tau\tx(\nu)} \left(\ell'(\hat{f}(\weight,\psi))\cdot \partial_{\psi}^1 \hat{f}(\weight,\psi)\right)^2 \hspace{1mm}  d\mu \Big).
    \end{align*}
    Now, notice the following
    \begin{align*}
        J(\weight(\tau),\psi(\tau),\tx(\tau)) & = \int_{\bigcup_{\nu = 0}^\tau \tx(\nu)} \ell(\hat{f}(\weight,\psi))d\mu = E\left(\bigcup_{\nu = 0}^{\tau}\tx(\nu)\right).
    \end{align*}
    Thus,
    \begin{align*}
         \sum_{\tau = t}^{T} \left(J(\tx(\tau)) - J(\tx(\tau+1))\right)
         & = \sum_{\tau = t}^{T} \left(E\left(\bigcup_{\nu = 0}^{\tau}\tx(\nu)\right) - E\left(\bigcup_{\nu = 0}^{\tau+1}\tx(\nu)\right)\right)\\
        & \leq \sum_{\tau =t}^{T} \Big( M_0\cdot \delta - \displaystyle\int_{\bigcup_{\nu = 0}^\tau\tx(\nu)} \left(\ell'(\hat{f}(\weight,\psi))\cdot \partial_{\weight}^1 \hat{f}(\weight,\psi)\right)^2 \hspace{1mm}  d\mu\\
        & -\displaystyle\int_{\bigcup_{\nu = 0}^\tau\tx(\nu)} \left(\ell'(\hat{f}(\weight,\psi))\cdot \partial_{\psi}^1 \hat{f}(\weight,\psi)\right)^2 \hspace{1mm}  d\mu \Big)\\
    \end{align*}
    as desired.
\end{proof}

\subsection{Proof of Proposition~\ref{HJB}}\label{sec:HJB}
\begin{proof}
    Let $J, V,$ and $V^*$ be as defined above. To begin, we split the sum in $V(t)$ over the discrete intervals $[t,t+\Delta t]$ and $[t+\Delta t, T].$ Observe,
    \begin{align}\label{b}
        V^*(t) & = \min_{w\in \mathcal{W}(\psi^*(t))} \int_{t}^T J(w(\tau),\psi^*(t), \tx(\tau))d\tau\nonumber\\
                & = \min_{w\in \mathcal{W}(\psi^*(t))} \biggr[\int_{t}^{t+\Delta t} J(w(\tau),\psi^*(t), \tx(\tau))d\tau+ \int_{t+\Delta t}^{T} J(w(\tau),\psi^*(t), \tx(\tau))d\tau\biggr]\nonumber\\
                & = \min_{w\in \mathcal{W}(\psi^*(t))} \int_{t}^{t+\Delta t} J(w(\tau),\psi^*(t),\tx(\tau))d\tau + V^*(t+\Delta t)\nonumber\\
                & = \min_{w\in \mathcal{W}(\psi^*(t))} J(\weight(t),\psi^*(t),\tx(t))\Delta t + V^*(t+\Delta t).
    \end{align}
    Now, we provide the Taylor series expansion of $V^*(t+\Delta t)$ about $t.$ Notice,
    \begin{align}
        V^*(t+\Delta t) & = V^*(t) + \Delta t \left[\partials_{t}V^*(t)  + \partials_{\tx}V^*(t) d_{t} \tx + \partials_{\weight}V^*(t)  \partials_t \weight \right] + o(\Delta t)\nonumber\\
                & = V^*(t) + \Delta t\biggr[\partials_{t} V^*(t)  + \partials_{\tx}V^*(t) d_{t} \tx + \partials_{\weight}V^*(t) [\mathrm{A}^*(t)\weight(t)( \mathrm{B}^*(t))^T + u(t)]\biggr] + o(\Delta t),\label{a}
    \end{align}
    where $u(t)$ represents the updates made to the each weights matrix of the new dimensions.
    Substituting \ref{a} into \ref{b}, we have
    \begin{align*}
        V^*(t) & = \min_{w\in \mathcal{W}(\psi^*(t))} J(\weight(t),\psi^*(t),\tx(t))\Delta t+ V^*(t) + \Delta t\biggr[\partials_{t} V^*(t)  + \partials_{\tx}V^*(t) d_{t} \tx\\ 
                &+ \partials_{\weight}V^*(t) [\mathrm{A}^*(t)\weight(t)( \mathrm{B}^*(t))^T + u(t)]\biggr] + o(\Delta t).
    \end{align*}
    Canceling $V^*(t)$ gives
    \begin{align*}
        0 & = \min_{w\in \mathcal{W}(\psi^*(t))} J(\weight(t),\psi^*(t),\tx(t))\Delta t+ \Delta t\left[\partials_{t}V^*(t)  + \partials_{\tx}V^*(t) d_{t} \tx \right.  \\ &+
        \left. \partials_{\weight}V^*(t) [\mathrm{A}^*(t)\weight(t)( \mathrm{B}^*(t))^T + u(t)]\right] + o(\Delta t).
    \end{align*}
    Dividing both sides by $\Delta t$ produces
    \begin{align*}
        0 & = \min_{w\in \mathcal{W}(\psi^*(t))} J(\weight(t),\psi^*(t),\tx(t)) +
        \partials_{t}V^*(t)  + \partials_{\tx}V^*(t) V^*(t) d_{t} \tx + \partials_{\weight}V^*(t) [\mathrm{A}^*(t)\weight(t)( \mathrm{B}^*(t))^T + u(t)].
    \end{align*}
    Finally, reordering gives
    \begin{align*}
        -\partials_{t} V^*(t) & = \min_{w\in \mathcal{W}(\psi^*(t))} J(\weight(t),\psi^*(t),\tx(t))+ \partials_{\tx}V^*(t) d_{t} \tx + \partials_{\weight}V^*(t) V^*(t) [\mathrm{A}^*(t)\weight(t)( \mathrm{B}^*(t))^T + u(t)],
    \end{align*}
    as desired.
\end{proof}

\subsection{Proof of Theorem~\ref{thm:lower_HJB}}\label{sec:lower_HJB}
\begin{proof}
Given proposition~\ref{HJB}, assume there exists an stochastic gradient based optimization procedure  and choose $ u(t) =- \sum^{I} \alpha(i) \mathrm{g}^{(i)}$  such that $\min_{\weight \in \mathcal{W}(\psi^*(t))} J(\weight(t),\psi^*(t),\tx(t)) \leq \varepsilon,$ where $I$ is the number of updates. Then, we have 
\begin{align*}
    -\partials_{t}V^*(t) & \leq \varepsilon + \partials_{\tx}V^*(t) d_{t} \tx +  \partials_{\weight}V^*(t)  \left( \mathrm{A}^*(t)\weight(t)( \mathrm{B}^*(t))^T - \sum^{I} \alpha(i) \mathrm{g}^{(i)}  \right)  \\    
\end{align*}
For any $dt$ in terms of tasks $t$  let $\partials_{\tx}V^*(t) d_{t} \tx \leq \norm{\partials_{\tx}V^*(t) } \norm{d_{t} \tx} \leq \sing \delta$ as $\norm{d_{t} \tx}  \geq \mu(\tx(t)\bigtriangleup \tx(t+1)) \geq \delta$ and $\norm{\partials_{\tx}V^*(t) }$ is less than the largest singular value of $\partials_{\tx}V^*(t) .$  Thus we write 
\begin{align*}
    -\partials_{t}V^*(t) & \leq \varepsilon + \sing \delta +  \partials_{\weight}V^*(t)  \left( \mathrm{A}^*(t)\weight(t)( \mathrm{B}^*(t))^T - \sum^{I} \alpha(i) \mathrm{g}^{(i)}  \right) 
\end{align*}
Taking $g_{MIN}$ to be the smallest value of the gradient over all update iterations, we have 
\begin{align*}
    -\partials_{t} & \leq \varepsilon + \sing \delta +  \partials_{\weight}V^*(t)  \left( \mathrm{A}^*(t)\weight(t) (\mathrm{B}^*(t))^T - \mathrm{g}_{\mathrm{MIN} } \sum^{I} \alpha(i)  \right) \\
     & \leq \varepsilon + \sing \delta +  \norm{\partials_{\weight}V^*(t) } \norm{\mathrm{A}^*(t)\weight(t)( \mathrm{B}^*(t))^T} - \partials_{\weight}V^*(t)  \left(  \mathrm{g}_{\mathrm{MIN} } \sum^{I} \alpha(i)  \right) \\
     & \leq \varepsilon + \left[\sing \delta +  \singw \norm{\mathrm{A}^*(t)\weight(t)( \mathrm{B}^*(t))^T} - \singmin \norm{\mathrm{g}_{\mathrm{MIN} }} \norm{\sum^{I} \alpha(i) } \right]
\end{align*}
This finally provides 
\begin{align*}
    \partials_{t} & \leq \mathrm{sup} \varepsilon - \mathrm{inf} \left[\sing \delta +  \singw \norm{\mathrm{A}^*(t)\weight(t)( \mathrm{B}^*(t))^T} - \singmin \norm{\mathrm{g}_{\mathrm{MIN} }} \norm{\sum^{I} \alpha(i) } \right]
\end{align*}
For the total variation to be upper bounded, we need the quantity in the brackets to go to zero, this provides.
\begin{align*}
\left[\sing \delta +  \singw \norm{\mathrm{A}^*(t)\weight(t)( \mathrm{B}^*(t))^T} - \singmin \norm{\mathrm{g}_{\mathrm{MIN} }} \norm{\sum^{I} \alpha(i) } \right] = 0 \\
\end{align*}
\begin{align*}
   \sing \delta &= \left[ \singmin \norm{\mathrm{g}_{\mathrm{MIN} }}  \norm{\sum^{I} \alpha(i) } - \singw \norm{\mathrm{A}^*(t)\weight(t)( \mathrm{B}^*(t))^T} \right]
\end{align*}
providing the result
\end{proof}










\newpage

\section{Feedforawrd Neural Network $AWB$ Matrix Calculation Example}\label{app: MLP AB}
In this section, complete an explicit example determining the sizes for $A$ and $B$ matrices for each layer of a feedforward neural network to transfer learning from the architecture of $[64,128,128,10]$ to the architecture of $[64,256,256,10].$ 

\subsection{Original Architecture and Weights Matrix Specifications}
We begin with the following architecture:
\begin{equation}
\text{Original architecture} = [64, 128, 128, 10]
\end{equation}

This defines a 3-layer network:
\begin{itemize}
    \item Layer 0: $64 \rightarrow 128$ (input layer to first hidden)
    \item Layer 1: $128 \rightarrow 128$ (first hidden to second hidden)
    \item Layer 2: $128 \rightarrow 10$ (second hidden to output)
\end{itemize}

This 3-layer network provides us with the following weights matrices:
\begin{align}
\mathbf{W}_0 &\in \mathbb{R}^{128 \times 64} \quad \text{(maps $64 \rightarrow 128$)} \\
\mathbf{W}_1 &\in \mathbb{R}^{128 \times 128} \quad \text{(maps $128 \rightarrow 128$)} \\
\mathbf{W}_2 &\in \mathbb{R}^{10 \times 128} \quad \text{(maps $128 \rightarrow 10$)}
\end{align}

\subsection{Optimal Architecture Specifications}
After completing an architecture search, it was determined that the following architecture was optimal:
\begin{equation}
\text{new\_arch} = [64, 256, 256, 10]
\end{equation}

This expands hidden layers while preserving input (64) and output (10):
\begin{itemize}
    \item Layer 0: $64 \rightarrow 256$
    \item Layer 1: $256 \rightarrow 256$
    \item Layer 2: $256 \rightarrow 10$
\end{itemize}

\subsection{Determining Appropriate $A$ and $B$ Matrices}

Our goal is to obtain weights matrices $V$ such that which correspond to the new architecture. This is completed by determing matrices $A_i$ and $B_i$ for each layer $i$ such that $A_iW_iB_i^T = V_i.$  The transformation matrices are constructed as:

\begin{align}
\mathbf{A}_i &\in \mathbb{R}^{\text{new\_out}_i \times \text{old\_out}_i} \\
\mathbf{B}_i &\in \mathbb{R}^{\text{new\_in}_i \times \text{old\_in}_i}
\end{align}

For each layer:

\textbf{Layer 0:}
\begin{align}
\mathbf{A}_0 &\in \mathbb{R}^{256 \times 128} \quad \text{(output: $128 \rightarrow 256$)} \\
\mathbf{B}_0 &\in \mathbb{R}^{64 \times 64} \quad \text{(input: $64 \rightarrow 64$, identity-like)}
\end{align}

\textbf{Layer 1:}
\begin{align}
\mathbf{A}_1 &\in \mathbb{R}^{256 \times 128} \quad \text{(output: $128 \rightarrow 256$)} \\
\mathbf{B}_1 &\in \mathbb{R}^{256 \times 128} \quad \text{(input: $128 \rightarrow 256$)}
\end{align}

\textbf{Layer 2:}
\begin{align}
\mathbf{A}_2 &\in \mathbb{R}^{10 \times 10} \quad \text{(output: $10 \rightarrow 10$, identity-like)} \\
\mathbf{B}_2 &\in \mathbb{R}^{256 \times 128} \quad \text{(input: $128 \rightarrow 256$)}
\end{align}

\subsection{Computing and Verifying Matrix $V$}

Below we visually represent each layer transformation and complete a dimension verification.

\textbf{Layer 0 Transformation:}

\begin{equation}
\mathbf{V}_0 = \mathbf{A}_0 \mathbf{W}_0 \mathbf{B}_0^T
\end{equation}

\begin{center}
\begin{tikzpicture}[scale=0.8]
    \draw[fill=matrixA, draw=black, thick] (0,1) rectangle (2,5);
    \draw[step=0.5,black,thin] (0,1) grid (2,5);
    \node at (1, 3) {\Large $\mathbf{A}_0$};
    \node[below] at (1, 0.7) {128};
    \node[left] at (-0.3, 4) {256};

    \node at (2.7, 3) {\huge $\cdot$};

    \draw[fill=matrixW, draw=black, thick] (3.4,1) rectangle (5.4,5);
    \draw[step=0.5,black,thin] (3.4,1) grid (5.4,5);
    \node at (4.4, 3) {\Large $\mathbf{W}_0$};
    \node[below] at (4.4, 0.7) {64};
    \node[left] at (3.1, 4) {128};

    \node at (6.1, 3) {\huge $\cdot$};

    \draw[fill=matrixB, draw=black, thick] (6.8,1) rectangle (8.8,5);
    \draw[step=0.5,black,thin] (6.8,1) grid (8.8,5);
    \node at (7.8, 3) {\Large $\mathbf{B}_0^T$};
    \node[below] at (7.8, 0.7) {64};
    \node[left] at (6.5, 4) {64};

    \node at (9.5, 3) {\huge $=$};

    \draw[fill=matrixV, draw=black, thick] (10.2,1) rectangle (12.2,5);
    \draw[step=0.5,black,thin] (10.2,1) grid (12.2,5);
    \node at (11.2, 3) {\Large $\mathbf{V}_0$};
    \node[below] at (11.2, 0.7) {64};
    \node[left] at (9.9, 4) {256};
\end{tikzpicture}
\end{center}

\textbf{Dimension verification:}
\begin{align}
\mathbf{V}_0 &= \mathbf{A}_0 \mathbf{W}_0 \mathbf{B}_0^T \\
&= (256 \times 128) \cdot (128 \times 64) \cdot (64 \times 64) \\
&= (256 \times 64) \quad \checkmark
\end{align}

\textbf{Layer 1 Transformation}

\begin{equation}
\mathbf{V}_1 = \mathbf{A}_1 \mathbf{W}_1 \mathbf{B}_1^T
\end{equation}

\begin{center}
\begin{tikzpicture}[scale=0.8]
    \draw[fill=matrixA, draw=black, thick] (0,1) rectangle (2,5);
    \draw[step=0.5,black,thin] (0,1) grid (2,5);
    \node at (1, 3) {\Large $\mathbf{A}_1$};
    \node[below] at (1, 0.7) {128};
    \node[left] at (-0.3, 3.6) {256};

    \node at (2.7, 3) {\huge $\cdot$};

    \draw[fill=matrixW, draw=black, thick] (3.4,1) rectangle (5.4,5);
    \draw[step=0.5,black,thin] (3.4,1) grid (5.4,5);
    \node at (4.4, 3) {\Large $\mathbf{W}_1$};
    \node[below] at (4.4, 0.7) {128};
    \node[left] at (3.1, 3.6) {128};

    \node at (6.1, 3) {\huge $\cdot$};

    \draw[fill=matrixB, draw=black, thick] (6.8,1.5) rectangle (10.8,4.5);
    \draw[step=0.5,black,thin] (6.8,1.5) grid (10.8,4.5);
    \node at (8.8, 3) {\Large $\mathbf{B}_1^T$};
    \node[below] at (8.8, 1.2) {256};
    \node[left] at (6.5, 3.6) {128};

    \node at (11.5, 3) {\huge $=$};

    \draw[fill=matrixV, draw=black, thick] (12.2,1.5) rectangle (16.2,4.5);
    \draw[step=0.5,black,thin] (12.2,1.5) grid (16.2,4.5);
    \node at (14.2, 3) {\Large $\mathbf{V}_1$};
    \node[below] at (14.2, 1.2) {256};
    \node[left] at (11.9, 3.6) {256};
\end{tikzpicture}
\end{center}

\textbf{Dimension verification:}
\begin{align}
\mathbf{V}_1 &= \mathbf{A}_1 \mathbf{W}_1 \mathbf{B}_1^T \\
&= (256 \times 128) \cdot (128 \times 128) \cdot (128 \times 256) \\
&= (256 \times 256) \quad \checkmark
\end{align}

\textbf{Layer 2 Transformation}

\begin{equation}
\mathbf{V}_2 = \mathbf{A}_2 \mathbf{W}_2 \mathbf{B}_2^T
\end{equation}

\begin{center}
\begin{tikzpicture}[scale=0.8]
    \draw[fill=matrixA, draw=black, thick] (0,1.5) rectangle (1.5,3.5);
    \draw[step=0.5,black,thin] (0,1.5) grid (1.5,3.5);
    \node at (0.75, 2.5) {\Large $\mathbf{A}_2$};
    \node[below] at (0.75, 1.2) {10};
    \node[left] at (-0.3, 3.6) {10};

    \node at (2.2, 2.5) {\huge $\cdot$};

    \draw[fill=matrixW, draw=black, thick] (2.9,1.5) rectangle (6.9,3.5);
    \draw[step=0.5,black,thin] (2.9,1.5) grid (6.9,3.5);
    \node at (4.9, 2.5) {\Large $\mathbf{W}_2$};
    \node[below] at (4.9, 1.2) {128};
    \node[left] at (2.6, 3.6) {10};

    \node at (7.6, 2.5) {\huge $\cdot$};

    \draw[fill=matrixB, draw=black, thick] (8.3,1) rectangle (12.3,4);
    \draw[step=0.5,black,thin] (8.3,1) grid (12.3,4);
    \node at (10.3, 2.5) {\Large $\mathbf{B}_2^T$};
    \node[below] at (10.3, 0.7) {256};
    \node[left] at (8, 3.6) {128};

    \node at (13, 2.5) {\huge $=$};

    \draw[fill=matrixV, draw=black, thick] (13.7,1) rectangle (17.7,4);
    \draw[step=0.5,black,thin] (13.7,1) grid (17.7,4);
    \node at (15.7, 2.5) {\Large $\mathbf{V}_2$};
    \node[below] at (15.7, 0.7) {256};
    \node[left] at (13.4, 3.6) {10};
\end{tikzpicture}
\end{center}

\textbf{Dimension verification:}
\begin{align}
\mathbf{V}_2 &= \mathbf{A}_2 \mathbf{W}_2 \mathbf{B}_2^T \\
&= (10 \times 10) \cdot (10 \times 128) \cdot (128 \times 256) \\
&= (10 \times 256) \quad \checkmark
\end{align}

\subsection{Summary}
In the table below, we summarize the matrix size information. 

\begin{table}[h]
\centering
\begin{tabular}{@{}lcccc@{}}
\toprule
Layer & Original $\mathbf{W}$ & $\mathbf{A}$ & $\mathbf{B}$ & Result $\mathbf{V = AWB^T}$ \\
\midrule
0 & $128 \times 64$ & $256 \times 128$ & $64 \times 64$ & $256 \times 64$ \\
1 & $128 \times 128$ & $256 \times 128$ & $256 \times 128$ & $256 \times 256$ \\
2 & $10 \times 128$ & $10 \times 10$ & $256 \times 128$ & $10 \times 256$ \\
\bottomrule
\end{tabular}
\caption{FFN transformation dimensions}
\end{table}

\newpage

\section{CNN3D AWB Calculation}\label{app: CNN3d AB}\label{app: CNN AB}
CNN processes single-channel images (e.g., MNIST) with one convolutional layer followed by feed-forward layers. In this section, we complete an explicit example that determines the sizes for $A$ and $B$ matrices for each layer of a convolutional neural network to transfer learning. This includes the $A$ and $B$ matrices for both the filter and the feedforward portion of the neural network.

\subsection{Original Architecture}
We begin with the following original architecture: 
\begin{equation}
\text{feed\_sizes} = [2304, 256, 10]
\end{equation}
and
\begin{equation}
    \text{filter\_size} = 3.
\end{equation}
After two conv+pool layers, the flattened feature size is 2304. This feeds into:
\begin{itemize}
    \item Layer 0: $2304 \rightarrow 256$
    \item Layer 1: $256 \rightarrow 10$
\end{itemize}

\subsection{Optimal Architecture Specifications}

After completing an architecture search, it was determined that the following architecture was optimal:
\begin{equation}
\text{new\_arch} = [1600, 512, 10]
\end{equation}
and
\begin{equation}
    \text{filter\_size} = 5.
\end{equation}

\subsection{Determining Appropriate $A$ and $B$ Matrices}

We need to obtain weights matrices for both the covolutional layer and the feed forward layers which correspond to the new architecture. This is completed by determines transfer matrices. We let $A_\text{filter}$ and $B_\text{filter}$ be the transfer matrices used to determine the new weights, denoted $V_\text{filter}$ for the convolutional layer. We let $A_i$ and $B_i$ be the transfer matrices used to determine the new weights, denoted $V_i$ for the feedforward layers. We break these calculations into two sections for clarity.\\

\textbf{Convolutional Filter Transformation}\\

Recall that the original filter size is $3$ and the new filter size is $5.$ For a single convolutional filter $\mathbf{W}_{\text{filter}}^{(i,c)}$ where $i$ is the output channel and $c$ is the input channel:

\begin{equation}
\mathbf{V}_{\text{filter}}^{(i,c)} = \mathbf{A}_{\text{conv}}^{(i,c)} \mathbf{W}_{\text{filter}}^{(i,c)} \mathbf{B}_{\text{conv}}^{(i,c)T}
\end{equation}

\begin{center}
\begin{tikzpicture}[scale=1.0]
    \draw[fill=matrixA, draw=black, thick] (0,1) rectangle (1.5,3.5);
    \draw[step=0.5,black,thin] (0,1) grid (1.5,3.5);
    \node at (0.75, 2.25) {$\mathbf{A}$};
    \node[below] at (0.75, .7) {3};
    \node[left] at (-0.3, 3) {5};

    \node at (2.1, 2.25) {$\cdot$};

    \draw[fill=matrixW, draw=black, thick] (2.7,1) rectangle (4.2,3.5);
    \draw[step=0.5,black,thin] (2.7,1) grid (4.2,3.5);
    \node at (3.45, 2.25) {$\mathbf{W}$};
    \node[below] at (3.45, 0.7) {3};
    \node[left] at (2.4, 3) {3};

    \node at (4.8, 2.25) {$\cdot$};

    \draw[fill=matrixB, draw=black, thick] (5.4,1.5) rectangle (7.9,3);
    \draw[step=0.5,black,thin] (5.4,1.5) grid (7.9,3);
    \node at (6.65, 2.25) {$\mathbf{B}^T$};
    \node[below] at (6.65, 1.2) {5};
    \node[left] at (5.1, 3) {3};

    \node at (8.5, 2.25) {$=$};

    \draw[fill=matrixV, draw=black, thick] (9.1,1) rectangle (11.6,3.5);
    \draw[step=0.5,black,thin] (9.1,1) grid (11.6,3.5);
    \node at (10.35, 2.25) {$\mathbf{V}$};
    \node[below] at (10.35, 0.7) {5};
    \node[left] at (8.8, 3) {5};
\end{tikzpicture}
\end{center}

\textbf{Dimension verification:}
\begin{align}
\mathbf{V}_{\text{filter}} &= \mathbf{A} \mathbf{W}_{\text{filter}} \mathbf{B}^T \\
&= (5 \times 3) \cdot (3 \times 3) \cdot (3 \times 5) \\
&= (5 \times 5) \quad \checkmark
\end{align}

This transformation is applied to each filter in each convolutional layer, expanding the spatial kernel size from $3 \times 3$ to $5 \times 5$. Note that a single-channel would be used when training on the MNIST dataset. When using RGB dataset, we would use a three-channel network.

\textbf{Feed-Forward Layers}

With expanded filters (5×5 instead of 3×3), the flattened size becomes 1600:
\begin{itemize}
    \item Layer 0: $1600 \rightarrow 512$
    \item Layer 1: $512 \rightarrow 10$
\end{itemize}

\textbf{Feed Layer 0 Transformation}

\begin{equation}
\mathbf{V}_0 = \mathbf{A}_0 \mathbf{W}_0 \mathbf{B}_0^T
\end{equation}

\begin{center}
\begin{tikzpicture}[scale=0.7]
    \draw[fill=matrixA, draw=black, thick] (0,1) rectangle (2,6);
    \draw[step=0.5,black,thin] (0,1) grid (2,6);
    \node at (1, 3.5) {\Large $\mathbf{A}_0$};
    \node[below] at (1, 0.7) {256};
    \node[left] at (-0.3, 4.5) {512};

    \node at (2.7, 3.5) {\huge $\cdot$};

    \draw[fill=matrixW, draw=black, thick] (3.4,2) rectangle (9.4,5);
    \draw[step=0.5,black,thin] (3.4,2) grid (9.4,5);
    \node at (6.4, 3.5) {\Large $\mathbf{W}_0$};
    \node[below] at (6.4, 1.7) {2304};
    \node[left] at (3.1, 4.5) {256};

    \node at (10.1, 3.5) {\huge $\cdot$};

    \draw[fill=matrixB, draw=black, thick] (10.8,1.5) rectangle (16.8,5.5);
    \draw[step=0.5,black,thin] (10.8,1.5) grid (16.8,5.5);
    \node at (13.8, 3.5) {\Large $\mathbf{B}_0^T$};
    \node[below] at (13.8, 1.2) {1600};
    \node[left] at (10.5, 4.5) {2304};

    \node at (17.5, 3.5) {\huge $=$};

    \draw[fill=matrixV, draw=black, thick] (18.2,1.5) rectangle (24.2,5.5);
    \draw[step=0.5,black,thin] (18.2,1.5) grid (24.2,5.5);
    \node at (21.2, 3.5) {\Large $\mathbf{V}_0$};
    \node[below] at (21.2, 1.2) {1600};
    \node[left] at (17.9, 4.5) {512};
\end{tikzpicture}
\end{center}

\textbf{Dimension verification:}
\begin{align}
\mathbf{V}_0 &= \mathbf{A}_0 \mathbf{W}_0 \mathbf{B}_0^T \\
&= (512 \times 256) \cdot (256 \times 2304) \cdot (2304 \times 1600) \\
&= (512 \times 1600) \quad \checkmark
\end{align}

\textbf{Feed Layer 1 Transformation}

\begin{equation}
\mathbf{V}_1 = \mathbf{A}_1 \mathbf{W}_1 \mathbf{B}_1^T
\end{equation}

\begin{center}
\begin{tikzpicture}[scale=0.8]
    \draw[fill=matrixA, draw=black, thick] (0,1.5) rectangle (1.5,3.5);
    \draw[step=0.5,black,thin] (0,1.5) grid (1.5,3.5);
    \node at (0.75, 2.5) {\Large $\mathbf{A}_1$};
    \node[below] at (0.75, 1.2) {10};
    \node[left] at (-0.3, 4.5) {10};

    \node at (2.2, 2.5) {\huge $\cdot$};

    \draw[fill=matrixW, draw=black, thick] (2.9,1.5) rectangle (6.9,3.5);
    \draw[step=0.5,black,thin] (2.9,1.5) grid (6.9,3.5);
    \node at (4.9, 2.5) {\Large $\mathbf{W}_1$};
    \node[below] at (4.9, 1.2) {256};
    \node[left] at (2.6, 4.5) {10};

    \node at (7.6, 2.5) {\huge $\cdot$};

    \draw[fill=matrixB, draw=black, thick] (8.3,1) rectangle (12.3,4);
    \draw[step=0.5,black,thin] (8.3,1) grid (12.3,4);
    \node at (10.3, 2.5) {\Large $\mathbf{B}_1^T$};
    \node[below] at (10.3, 0.7) {512};
    \node[left] at (8, 4.5) {256};

    \node at (13, 2.5) {\huge $=$};

    \draw[fill=matrixV, draw=black, thick] (13.7,1) rectangle (17.7,4);
    \draw[step=0.5,black,thin] (13.7,1) grid (17.7,4);
    \node at (15.7, 2.5) {\Large $\mathbf{V}_1$};
    \node[below] at (15.7, 0.7) {512};
    \node[left] at (13.4, 4.5) {10};
\end{tikzpicture}
\end{center}

\textbf{Dimension verification:}
\begin{align}
\mathbf{V}_1 &= \mathbf{A}_1 \mathbf{W}_1 \mathbf{B}_1^T \\
&= (10 \times 10) \cdot (10 \times 256) \cdot (256 \times 512) \\
&= (10 \times 512) \quad \checkmark
\end{align}

\newpage

\section{CL Implementation \& Condition Details}
\label{app:standard_cl}

This appendix provides implementation details for the four experimental conditions (C1--C4) used in this paper. All algorithms use the notation from the main text: $\weight(t)$ denotes weights at task $t$, $\psi(t)$ denotes architecture, $\tx(t)$ denotes task data, $\fhat(\weight,\psi)$ denotes the neural network, and $\mathcal{E}$ denotes the experience replay buffer.

\subsection{Experimental Conditions Overview \& Algorithm}

Figure~\ref{fig:alg_conditions} presents the four experimental conditions as a unified algorithmic framework. Each condition builds upon the previous one:
\begin{itemize}
    \item \textbf{C1 (Baseline)}: Fixed architecture, constant learning rate, no warmup
    \item \textbf{C2 (Heuristics)}: Adds task warmup, cosine LR schedule, adaptive gradient weights
    \item \textbf{C3 (Arch Search)}: Adds architecture search at task boundaries with random reinitialization
    \item \textbf{C4 (AWB Full)}: Adds knowledge transfer via $A$, $B$ matrices instead of random reinitialization
\end{itemize}

\begin{figure}[!htb]
\centering

\begin{minipage}[t]{0.48\textwidth}
\begin{algorithm}[H]
\caption*{\textbf{(a) C1: Baseline}}
\label{alg:c1}
\KwIn{$\weight(0), \psi_0, \{\tx(t)\}_{t=0}^T$}
\KwOut{$\weight(T)$}
\textbf{Init:} $\eta = 10^{-4}$, $[\alpha,\beta,\gamma] = [0.4, 0.4, 0.1]$\;
\For{$t = 0$ \KwTo $T$}{
    \For{$e = 1$ \KwTo $N_{\text{epochs}}$}{
        $\mathcal{B}_c \gets$ \texttt{sample}$(\tx(t))$\;
        $\mathcal{B}_e \gets$ \texttt{sample}$(\mathcal{E})$\;
        $\nabla H \gets$ \textsc{HamGrad}$(\weight, \mathcal{B}_c, \mathcal{B}_e)$\;
        $\weight \gets \weight - \eta \cdot \nabla H$\;
    }
    $\mathcal{E} \gets \mathcal{E} \cup \tx(t)$\;
}
{\bfseries Return} $\weight$\;
\end{algorithm}
\vspace{0.3em}
{\footnotesize \textit{Fixed $\psi_0$, constant $\eta$, no warmup.}}
\end{minipage}
\hfill
\begin{minipage}[t]{0.48\textwidth}
\begin{algorithm}[H]
\caption*{\textbf{(b) C2: Heuristics}}
\label{alg:c2}
\KwIn{$\weight(0), \psi_0, \{\tx(t)\}_{t=0}^T$}
\KwOut{$\weight(T)$}
\textbf{Init:} $\eta_0 = 10^{-4}$, $[\alpha,\beta,\gamma] = [0.4, 0.4, 0.1]$\;
\For{$t = 0$ \KwTo $T$}{
    \If{$t > 0$}{
        $\weight \gets$ \textsc{Warmup}$(\weight, \tx(t))$ \tcp*[f]{Alg.~\ref{alg:warmup}}\;
        $[\alpha,\beta,\gamma] \gets$ \textsc{AdaptGrad}$(J_t, J_{t-1})$ \tcp*[f]{Alg.~\ref{alg:adaptive}}\;
    }
    \For{$e = 1$ \KwTo $N_{\text{epochs}}$}{
        $\mathcal{B}_c, \mathcal{B}_e \gets$ \texttt{sample}$(\tx(t)), $ \textsc{BalReplay}$(\mathcal{E})$\;
        $\nabla H \gets$ \textsc{HamGrad}$(\weight, \mathcal{B}_c, \mathcal{B}_e)$\;
        $\eta \gets$ \textsc{CosineDecay}$(\eta_0, e)$\;
        $\weight \gets \weight - \eta \cdot \nabla H$\;
    }
    $\mathcal{E} \gets \mathcal{E} \cup \tx(t)$\;
}
{\bfseries Return} $\weight$\;
\end{algorithm}
\vspace{0.3em}
{\footnotesize \textit{Fixed $\psi_0$, adds warmup + cosine LR + adaptive gradients.}}
\end{minipage}

\vspace{1em}

\begin{minipage}[t]{0.48\textwidth}
\begin{algorithm}[H]
\caption*{\textbf{(c) C3: Architecture Search}}
\label{alg:c3}
\KwIn{$\weight(0), \psi(0), \{\tx(t)\}_{t=0}^T$}
\KwOut{$\weight(T), \psi(T)$}
\textbf{Init:} As C2\;
\For{$t = 0$ \KwTo $T$}{
    \If{$t > 0$}{
        $\weight \gets$ \textsc{Warmup}$(\weight, \tx(t))$\;
        $[\alpha,\beta,\gamma] \gets$ \textsc{AdaptGrad}$(J_t, J_{t-1})$\;
        \If{\textsc{ShouldChange}$(J_t, J_{t-1})$}{
            $\psi^* \gets$ \textsc{ArchSearch}$(\psi, \tx(t))$ \tcp*[f]{Alg.~\ref{alg:arch_search}}\;
            $\weight \gets$ \textsc{GlorotInit}$(\psi^*)$ \tcp*[f]{Random reinit}\;
            $\psi \gets \psi^*$\;
        }
    }
    \For{$e = 1$ \KwTo $N_{\text{epochs}}$}{
        \tcp{Same training loop as C2}
        $\nabla H \gets$ \textsc{HamGrad}$(\weight, \mathcal{B}_c, \mathcal{B}_e)$\;
        $\weight \gets \weight - \eta \cdot \nabla H$\;
    }
    $\mathcal{E} \gets \mathcal{E} \cup \tx(t)$\;
}
{\bfseries Return} $\weight, \psi$\;
\end{algorithm}
\vspace{0.3em}
{\footnotesize \textit{Adaptive $\psi(t)$, random reinit after arch change.}}
\end{minipage}
\hfill
\begin{minipage}[t]{0.48\textwidth}
\begin{algorithm}[H]
\caption*{\textbf{(d) C4: AWB Full}}
\label{alg:c4}
\KwIn{$\weight(0), \psi(0), \{\tx(t)\}_{t=0}^T$}
\KwOut{$\weight(T), \psi(T)$}
\textbf{Init:} As C2\;
\For{$t = 0$ \KwTo $T$}{
    \If{$t > 0$}{
        $\weight \gets$ \textsc{Warmup}$(\weight, \tx(t))$\;
        $[\alpha,\beta,\gamma] \gets$ \textsc{AdaptGrad}$(J_t, J_{t-1})$\;
        \If{\textsc{ShouldChange}$(J_t, J_{t-1})$}{
            $\psi^* \gets$ \textsc{ArchSearch}$(\psi, \tx(t))$\;
            $A, B \gets$ \textsc{TrainAB}$(\weight, \psi, \psi^*, \tx(t))$ \tcp*[f]{Alg.~\ref{alg:train_ab}}\;
            $\weight \gets A \cdot \weight \cdot B^T$ \tcp*[f]{AWB transfer}\;
            $\psi \gets \psi^*$\;
        }
    }
    \For{$e = 1$ \KwTo $N_{\text{epochs}}$}{
        \tcp{Same training loop as C2}
        $\nabla H \gets$ \textsc{HamGrad}$(\weight, \mathcal{B}_c, \mathcal{B}_e)$\;
        $\weight \gets \weight - \eta \cdot \nabla H$\;
    }
    $\mathcal{E} \gets \mathcal{E} \cup \tx(t)$\;
}
{\bfseries Return} $\weight, \psi$\;
\end{algorithm}
\vspace{0.3em}
{\footnotesize \textit{Adaptive $\psi(t)$, knowledge transfer via $V = A\weight B^T$.}}
\end{minipage}

\caption{Four experimental conditions for Hamiltonian Continual Learning. (a) C1 uses fixed architecture and constant hyperparameters. (b) C2 adds heuristics: task warmup, cosine LR decay, and adaptive gradient weights. (c) C3 adds architecture search with random reinitialization. (d) C4 replaces random reinitialization with AWB transfer to preserve learned knowledge.}
\label{fig:alg_conditions}
\end{figure}

\subsection{Supporting Algorithm References}
\label{sec:supporting_algs}

The supporting algorithms referenced in Figure~\ref{fig:alg_conditions} are described as follows:

\begin{itemize}
\item \textbf{Task Warmup}\phantomsection\label{alg:warmup}: Initialize with reduced learning rate $\eta_w = 0.1 \cdot \eta_0$ and gradient weights $[\alpha,\beta,\gamma] = [1, 0, 0]$. Train on new task data for $N_{\text{warmup}}$ epochs before engaging experience replay.

\item \textbf{Adaptive Gradient Weights}\phantomsection\label{alg:adaptive}: Compute loss ratio $r = J_{\text{curr}} / J_{\text{prev}}$ and adjust weights: $\alpha = \min(0.7, 0.3 + 0.4(r-1))$, $\beta = \max(0.2, 0.6 - 0.4(r-1))$, $\gamma = 0.1$.

\item \textbf{Architecture Search}\phantomsection\label{alg:arch_search}: Extract layer dimensions $\{d_1, \ldots, d_L\}$, generate candidate architectures $\mathcal{S} = \{d_i \pm k \cdot 16 : k \in \{1,2,3\}\}$, select $\psi^* = \arg\min_{\psi' \in \mathcal{S}} J(\texttt{init}(\psi'), \psi', \tx_{\text{val}})$.

\item \textbf{Train A/B Matrices}\phantomsection\label{alg:train_ab}: Initialize $A, B$ as identity matrices. For $N_{\text{AB}}$ epochs: sample batch, compute $V = A \cdot \weight \cdot B^T$, update $A, B$ via gradient descent on reconstruction loss.

\item \textbf{Hamiltonian Gradient}\phantomsection\label{alg:hamiltonian}: Compute $\nabla_c = \nabla_{\weight} \ell(\mathcal{B}_{\text{curr}})$, $\nabla_e = \nabla_{\weight} \ell(\mathcal{B}_{\text{exp}})$, $\delta V = \textsc{PerturbationGrad}$. Return $\nabla H = \alpha \nabla_c + \beta \nabla_e + \gamma \delta V$.
\end{itemize}

\subsection{Hyperparameter Reference}

Table~\ref{tab:hyperparameters} summarizes all default hyperparameters used in the standard CL implementation. Tables~\ref{tab:sine_hyperparams} and~\ref{tab:mnist_hyperparams} provide dataset-specific configurations.

{\small
\begin{table}[h]
\centering
\caption{Default Hyperparameters}
\label{tab:hyperparameters}
\begin{tabular}{lll}
\hline
\textbf{Parameter} & \textbf{Default} & \textbf{Description} \\
\hline
\multicolumn{3}{l}{\textit{Gradient Computation}} \\
$\alpha$ & 0.4 & Current task gradient weight \\
$\beta$ & 0.4 & Experience replay gradient weight \\
$\gamma$ & 0.1 & Regularization gradient weight \\
$\sigma_x^2$ & $10^{-4}$ & Input perturbation variance \\
$\sigma_{\weight}^2$ & $10^{-8}$ & Parameter perturbation variance \\
\hline
\multicolumn{3}{l}{\textit{Optimization}} \\
$\eta_0$ & $10^{-4}$ & Base learning rate \\
$\eta_{\min}$ & $10^{-6}$ & Minimum learning rate \\
Optimizer & AdamW & Default optimizer \\
Gradient clip & 1.0 & Maximum gradient norm \\
LR schedule & Cosine & Learning rate decay (C2--C4) \\
\hline
\multicolumn{3}{l}{\textit{Task Warmup (C2--C4)}} \\
$N_{\text{warmup}}$ & 25 & Warmup epochs per task \\
LR factor & 0.1 & Warmup LR multiplier \\
\hline
\multicolumn{3}{l}{\textit{Experience Replay}} \\
Buffer size & 200,000 & Maximum samples \\
Recent quota & 10\% & From task $t-1$ \\
Older quota & 80\% & From tasks $0$ to $t-2$ \\
Random quota & 10\% & Uniform random \\
\hline
\multicolumn{3}{l}{\textit{AWB Pipeline (C4)}} \\
$N_{\text{prelim}}$ & 1--2 & Preliminary training epochs \\
$N_{\text{AB}}$ & 500 & A/B training epochs \\
$\eta_{\text{AB}}$ & $10^{-3}$ & A/B training learning rate \\
$\theta_{\text{loss}}$ & 1.1 & Loss ratio threshold for arch change \\
Search step & 16 & Dimension search step \\
\hline
\end{tabular}
\end{table}
}

{\small
\begin{table}[h]
\centering
\caption{Sine Regression Hyperparameters}
\label{tab:sine_hyperparams}
\begin{tabular}{lll}
\hline
\textbf{Parameter} & \textbf{Value} & \textbf{Description} \\
\hline
\multicolumn{3}{l}{\textit{Network Architecture}} \\
Network type & FCNN & Fully-connected neural network \\
Layers & 4 & Number of layers \\
Hidden size & 64 & Neurons per hidden layer \\
Activation & ReLU & Hidden layer activation \\
\hline
\multicolumn{3}{l}{\textit{Training}} \\
Tasks & 10 & Number of continual learning tasks \\
Epochs/task & 500 & Training epochs per task \\
Batch size & 1024 & Mini-batch size \\
Learning rate & $10^{-4}$ & Base learning rate \\
Optimizer & AdamW & Optimizer with weight decay \\
\hline
\multicolumn{3}{l}{\textit{Dataset}} \\
Input domain & $[-90, 90]$ & Uniform sampling domain for $x$ \\
Task variation & Amplitude, phase & How tasks differ \\
Train/test split & 80\%/20\% & Data partition \\
\hline
\end{tabular}
\end{table}
}

{\small
\begin{table}[h]
\centering
\caption{MNIST Classification Hyperparameters}
\label{tab:mnist_hyperparams}
\begin{tabular}{lll}
\hline
\textbf{Parameter} & \textbf{Value} & \textbf{Description} \\
\hline
\multicolumn{3}{l}{\textit{Network Architecture}} \\
Network type & CNN & Convolutional neural network \\
Conv layers & 2 & Number of convolutional layers \\
Filter size & 3 & Convolutional kernel size \\
Output channels & 3 & Channels per conv layer \\
Pool size & $2 \times 2$ & Max pooling kernel \\
FF hidden & [512, 64] & Feedforward hidden layer sizes \\
Activation & ReLU & Hidden layer activation \\
\hline
\multicolumn{3}{l}{\textit{Training}} \\
Tasks & 2 & Number of continual learning tasks \\
Epochs/task & 200 & Training epochs per task \\
Batch size & 1024 & Mini-batch size \\
Learning rate & $10^{-4}$ & Base learning rate \\
Optimizer & AdamW & Optimizer with weight decay \\
\hline
\multicolumn{3}{l}{\textit{Dataset}} \\
Input size & $28 \times 28$ & MNIST image dimensions \\
Task 0 & Digits 0--4 & First task classes \\
Task 1 & Digits 5--9 & Second task classes \\
Train/test split & 80\%/20\% & Data partition \\
\hline
\end{tabular}
\end{table}
}

\subsection{Implementation Notes}

\subsubsection{Gradient Normalization}

The $\nabla_{\weight} \delta V$ component is normalized by task count:
\begin{equation}
    \text{dV\_norm} = \frac{\|\nabla_{\weight} \delta V\|_2}{t + 1}
\end{equation}
This prevents the regularization term from dominating as more tasks are learned.

\subsubsection{Learning Rate Schedules}

The framework supports multiple LR schedules: constant, step, exponential, cosine, and linear. Default is cosine annealing with warm restarts.

\subsubsection{JAX Implementation}

All gradient computations are JIT-compiled using JAX with 8 variants for different problem types (regression/classification, standard/AWB, vector/graph).

\subsubsection{Model Partitioning}

Equinox models are partitioned for selective training: (1) standard training (all weights), (2) A/B training (freeze W), (3) V training (freeze A, B).

\begin{flushright}
  \scriptsize

  \framebox{\parbox{0.9\textwidth}{
  The submitted manuscript has been created by UChicago Argonne, LLC, Operator of Argonne National Laboratory (“Argonne”).
  Argonne, a U.S. Department of Energy Office of Science laboratory, is operated under Contract No. DE-AC02-06CH11357.
  The U.S. Government retains for itself, and others acting on its behalf, a paid-up nonexclusive, irrevocable worldwide
  license in said article to reproduce, prepare derivative works, distribute copies to the public, and perform publicly
  and display publicly, by or on behalf of the Government.  The Department of Energy will provide public access to these
  results of federally sponsored research in accordance with the DOE Public Access Plan.
  \url{http://energy.gov/downloads/doe-public-access-plan}
  }}
  \end{flushright}
\end{document}